\newcommand{\RR}{\mathbb{R}}
\newcommand{\CC}{\mathbb{C}}
\newcommand{\BB}{\mathbb{B}}
\renewcommand{\SS}{\mathbb{S}}
\newcommand{\HH}{\mathbb{H}}
\newcommand{\NN}{\mathbb{N}}
\newcommand{\ZZ}{\mathbb{Z}}
\newcommand{\PP}{\mathbb{P}}
\newcommand{\FF}{\mathbb{F}}
\newcommand{\dd}{\mathrm{d}}
\newcommand{\calS}{\mathcal{S}}
\newcommand{\calF}{\mathcal{F}}
\newcommand{\calG}{\mathcal{G}}
\newcommand{\calH}{\mathcal{H}}
\renewcommand{\aa}{{\bm{a}}}
\newcommand{\bb}{{\bm{b}}}
\newcommand{\cc}{{\bm{c}}}
\newcommand{\uu}{{\bm{u}}}
\newcommand{\vv}{{\bm{v}}}
\newcommand{\xx}{{\bm{x}}}
\newcommand{\yy}{{\bm{y}}}
\newcommand{\zz}{{\bm{z}}}
\newcommand{\oo}{{\bm{o}}}
\newcommand{\ee}{\bm{e}}
\newcommand{\xxi}{{\bm{\xi}}}
\newcommand{\oomega}{{\bm{\omega}}}
\newcommand{\llambda}{{\bm{\lambda}}}
\renewcommand{\ss}{{\bm{s}}} %
\newcommand{\rrho}{{\bm{\rho}}}
\newcommand{\iprod}[1]{\langle#1\rangle}
\newcommand{\iiprod}[1]{(\!(#1)\!)}
\DeclareMathOperator{\hull}{span}
\DeclareMathOperator{\tr}{tr}
\DeclareMathOperator{\rank}{rank}
\DeclareMathOperator{\diag}{diag}
\newcommand{\subH}{\calH_m}
\newcommand{\lieG}{\mathfrak{g}}
\newcommand{\lieK}{\mathfrak{k}}
\newcommand{\lieA}{\mathfrak{a}}
\newcommand{\lieN}{\mathfrak{n}}
\newcommand{\bdX}{\partial X}
\newcommand{\eprod}[1]{(#1)}
\newcommand{\bdD}{\partial D}
\newcommand{\bdB}{\partial\BB}
\newcommand{\bdP}{\partial\PP}
\newcommand{\sym}{\mathfrak{S}}
\newcommand{\pdm}{\mathfrak{P}}
\newcommand{\refeq}[1]{\eqref{eq:#1}}
\newcommand{\reffig}[1]{Figure~\ref{fig:#1}}
\newcommand{\reftab}[1]{Table~\ref{tab:#1}}
\newcommand{\refapp}[1]{Appendix~\ref{sec:#1}}
\newcommand{\refsec}[1]{\S~\ref{sec:#1}}
\newcommand{\refthm}[1]{Theorem~\ref{thm:#1}}
\newcommand{\reflem}[1]{Lemma~\ref{lem:#1}}
\theoremstyle{plain}
\newtheorem{theorem}{Theorem}[section]
\newtheorem{lemma}{Lemma}[section]
\newtheorem{example}{Example}[section]
\theoremstyle{definition}
\newtheorem{definition}{Definition}[section]
\theoremstyle{remark}
\newtheorem{remark}{Remark}
\newcommand{\step}[1]{\textbf{Step~#1}}
\newcommand{\xlabel}[1]{\label{#1}}
\newcommand{\nxrefeq}[1]{\refeq{#1}}
\newcommand{\nxreftab}[1]{\reftab{#1}}
\newcommand{\nxrefapp}[1]{\refapp{#1}}
\newcommand{\nxrefsec}[1]{\refsec{#1}}
\newcommand{\nxrefthm}[1]{\refthm{#1}}
\newcommand{\nxreflem}[1]{\reflem{#1}}
\newcommand{\coloneq}{:=}
\newcommand{\eqcolon}{=:}
\newcommand{\svert}{|}
\newcommand{\mBiggvert}{{\ \Bigg| \ }}
\title{\textbf{A unified Fourier slice method to derive ridgelet transform for a variety of depth-2 neural networks}}
\author{%
\textbf{Sho Sonoda}${}^1$ \hfill\small{\url{sho.sonoda@riken.jp}}\\
\textbf{Isao Ishikawa}${}^{2,1}$ \hfill\small{\url{ishikawa.isao.zx@ehime-u.ac.jp}}\\
\textbf{Masahiro Ikeda}${}^1$ \hfill\small{\url{masahiro.ikeda@riken.jp}}\\
\small{${}^1$\textit{RIKEN Center for Advanced Intelligence Project (AIP)}} \hfill\phantom{a}\\
\small{${}^2$\textit{Center for Data Science, Ehime University (CDSE)}}\hfill\phantom{a}\\
\phantom{\large{aaaaaaaaaaaaaaaaaaaaaaaaaaaaaaaaaaaaaaaaaaaaaaaaaaaaaaaaaaaaaaaaaaaaaaaa}}%
}
\date{April 18, 2024}%
\begin{document}

\maketitle

\begin{abstract}
To investigate neural network parameters, it is easier to study the distribution of parameters than to study the parameters in each neuron. The ridgelet transform is a pseudo-inverse operator that maps a given function $f$ to the parameter distribution $\gamma$ so that a network $\mathtt{NN}[\gamma]$ reproduces $f$, i.e. $\mathtt{NN}[\gamma]=f$. For depth-2 fully-connected networks on a Euclidean space, the ridgelet transform has been discovered up to the closed-form expression, thus we could describe how the parameters are distributed. However, for a variety of modern neural network architectures, the closed-form expression has not been known. In this paper, we explain a systematic method using Fourier expressions to derive ridgelet transforms for a variety of modern networks such as networks on finite fields $\mathbb{F}_p$, group convolutional networks on abstract Hilbert space $\mathcal{H}$, fully-connected networks on noncompact symmetric spaces $G/K$, and pooling layers, or the $d$-plane ridgelet transform.
\end{abstract}

 \section{Introduction}
 \xlabel{sec:intro}
Neural networks are learning machines that support today's AI technology. Mathematically, they are nonlinear functions determined by a network of functions with learnable parameters (called \emph{neurons}) connecting in parallel and series. Since the learning process is automated, we do not fully understand the parameters 
obtained through learning. 
An integral representation is a powerful tool for mathematical analysis of these parameters.
One of the technical difficulties in analyzing the behavior of neural networks is that their parameters are extremely nonlinear. An integral representation is a method of indirectly analyzing the parameters through their distribution, rather than directly analyzing the parameters of each neuron. The set of all the signed (or probability) parameter distributions forms a linear (or convex) space, making it possible to perform far more insightful analysis than directly analyzing individual parameters.

For instances, 
characterization of neural network parameters such as
the \emph{ridgelet transform}~\citep{Murata1996,Candes.PhD,Sonoda2015acha}
and the \emph{representer theorems} for ReLU
networks~\citep{Savarese2019,Ongie2020,Parhi2021,Unser2019},
and 
convergence analysis of stochastic gradient descent (SGD) for deep learning
such as the \emph{mean field
theory}~\citep{Nitanda2017,Chizat2018,Mei2018,Rotskoff2018,Sirignano2020lln}
and the  \emph{infinite-dimensional Langevin
dynamics}~\citep{Suzuki2020langevin,Nitanda2022langevin},
have been developed using integral representations.

\subsection{Integral Representation}

The integral representation of 
a \emph{depth-2 fully-connected neural network} 
is defined as below.
\begin{definition}
Let $\sigma:\RR\to\CC$ be a measurable function, called \emph{activation function}, and fix it. For any signed measure $\gamma$ on $\RR^m\times\RR$, called a \emph{parameter distribution}, we define the \emph{integral representation of depth-2 fully-connected neural network} as
\begin{equation}
S[\gamma](\xx) = \int_{\RR^m\times\RR} \gamma(\aa,b)\sigma(\aa\cdot\xx-b)\dd\aa\dd b, \quad \xx \in \RR^m. \xlabel{eq:contiSNN}
\end{equation}
\end{definition}

Here, for each hidden parameter $(\aa,b)$, 
feature map $\xx\mapsto\sigma(\aa\cdot\xx-b)$ corresponds to a single hidden neuron with activation function $\sigma$,
weight $\gamma(\aa,b)$ corresponds to an output coefficient,
and the integration implies that all the possible neurons are assigned in advance.
Since the \emph{only} free parameter is \emph{parameter distribution} $\gamma$, we can identify network $S[\gamma]$ with point $\gamma$ in a function space.

We note that this representation covers \emph{both} infinite (or continuous) and finite widths. Indeed, while the integration may be understood as an infinite width layer, the integration with a finite sum of point masses such as $\gamma_p \coloneq  \sum_{i=1}^p c_i \delta_{(\aa_i,b_i)}$ can represent a finite width layer:
\[
    S[\gamma_p](\xx) = \sum_{i=1}^p c_i \sigma( \aa_i \cdot \xx - b_i ) = C\sigma(A\xx-\boldsymbol{b}), \quad \xx \in \RR^m\]
where the third term is the so-called ``matrix'' representation with matrices $A \in \RR^{p\times m}, C \in \RR^{1\times p}$ and vector $\boldsymbol{b} \in \RR^p$ followed by component-wise activation $\sigma$.
Singular measures as above can be mathematically justified without any inconsistency if the class of the parameter distributions are set to a class of Borel measures or Schwartz distributions. 

There are at least four advantages to introducing integral representations:
\begin{enumerate}
    \item Aggregation of parameters, say $\{(\aa_i,b_i,c_i)\}_{i = 1}^p$, into a single function (parameter distribution) $\gamma(\aa,b)$,
    \item Ability to represent finite models and continuous models in the same form,
    \item Linearization of networks and convexification of learning problems, and
    \item Presence of the ridgelet transform.
\end{enumerate}
Advantages 1 and 2 have already been explained. Advantage 3 is described in the next subsection, and Advantage 4 is emphasized throughout the paper.
On the other hand, there are two disadvantages:
\begin{enumerate}
    \item Extensions to deep networks are hard\footnote{Good News: After the
     initial submission of this manuscript, the authors have
  successfully developed the ridgelet transform for \emph{deep}
  networks in~\citet{sonoda2023deepridge,sonoda2023joint}.}, and
    \item Advanced knowledge on functional analysis are required.
\end{enumerate}

\subsection{Linearization and Convexification Effect}

The third advantage of the integral representation is 
the so-called \emph{linearization} (and \emph{convexification}) tricks. That is, while the network is \emph{nonlinear} with respect to the raw parameters $\aa$ and $b$, namely,
\[
    S[\delta_{(\alpha_1 \aa_1 + \alpha_2 \aa_2,b)}] \neq \alpha_1 S[\delta_{(\aa_1,b)}] + \alpha_2  S[\delta_{(\aa_2,b)}], \quad \alpha_1,\alpha_2 \in \CC
\]
(and similarly for $b$),
it is \emph{linear} with respect to the parameter distribution $\gamma$, namely,
\[
    S[\alpha_1 \gamma_1 + \alpha_2 \gamma_2] = \alpha_1 S[\gamma_1] + \alpha_2 S[\gamma_2], \quad \alpha_1,\alpha_2 \in \CC.
\]

Furthermore, linearizing neural networks leads to convexifying learning problems. Specifically, for a convex function $\ell : \RR \to \RR$, the loss function defined as $L[ \gamma ] \coloneq  \ell( S[\gamma] )$ satisfies the following:
\[
L[ t \gamma_1 + (1-t) \gamma_2 ] \le t L[\gamma_1] + (1-t) L[\gamma_2], \quad t \in [0,1].
\]

It may sound paradoxical that a convex loss function on a function space has local minima in raw parameters, but we can understand this through the chain rule for functional derivative: Suppose that a parameter distribution $\gamma$ is parametrized by a raw parameter, say $\theta$, then
\[
\frac{\partial L[ \gamma(\theta) ]}{\partial \theta} = \left\langle\frac{\partial \gamma(\theta)}{\partial \theta}, \frac{\partial L[\gamma]}{\partial \gamma} \right\rangle.
\]
In other words, a local minimum ($\partial_\theta L=0$) in raw parameter $\theta$ can arise not only from the global optimum ($\partial_\gamma L=0$) but also from the case when two derivatives $\partial_\theta \gamma$ and $\partial_\gamma L$ are orthogonal.

The trick of lifting nonlinear objects in a linear space has been studied since
the age of Frobenius, one of the founders of the linear representation theory of groups.
In the context of neural network study, as well as the recent studies mentioned
above, either the integral representation by~\citet{Barron1993} or the convex
neural network by~\citet{LeRoux2006.convex} are often referred.
In the context of deep learning theory, this linearization/convexification trick has been employed to show the global convergence of the SGD training of shallow ReLU networks 
\citep{Nitanda2017,Chizat2018,Mei2018,Rotskoff2018,Sirignano2020lln,Suzuki2020langevin,Nitanda2022langevin},
and to characterize parameters in ReLU
networks~\citep{Savarese2019,Ongie2020,Parhi2021,Unser2019}.

\subsection{Ridgelet Transform}
 \xlabel{sec:ridgelet.classic}
The fourth advantage of the integral representation is the so-called the \emph{ridgelet transform} $R$, or a right inverse operator of the integral representation operator $S$. For example, the ridgelet transform for depth-2 fully-connected network \nxrefeq{contiSNN} is given as below.

\begin{definition}
For any measurable functions $f : \RR^m \to \CC$ and $\rho : \RR \to \CC$,
\begin{equation}
    R[f;\rho](\aa,b) \coloneq  \int_{\RR^m} f(\xx) \overline{\rho(\aa\cdot\xx-b)}\dd\xx, \quad (\aa,b) \in \RR^m\times\RR. \xlabel{eq:ridgelet}
\end{equation}
\end{definition}

In principle, the \emph{ridgelet function} $\rho$ can be chosen independently of the activation function $\sigma$ of neural network $S$. The following theorem holds.

\begin{theorem}[Reconstruction Formula]\xlabel{thm:reconst}
Suppose $\sigma$ and $\rho$ are a tempered distribution ($\calS'$) on $\RR$ and a rapidly decreasing function ($\calS$) on $\RR$, respectively.
Then, for any square integrable function $f$, the following reconstruction formula
\[
    S[R[f;\rho]] = \iiprod{\sigma,\rho} f \quad \mbox{in} \quad L^2(\RR^m)\]
holds with the factor being a scalar product of $\sigma$ and $\rho$,
\[
\iiprod{\sigma,\rho} \coloneq  \int_\RR \sigma^\sharp(\omega)\overline{\rho^\sharp(\omega)}\svert \omega\svert ^{-m}\dd\omega,\]
where $\sharp$ denotes the Fourier transform.
\end{theorem}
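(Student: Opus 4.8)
The plan is to prove the reconstruction formula by passing to the Fourier side, where the composition $S\circ R$ becomes a Fourier multiplier. First I would compute the Fourier transform (in $\xx$) of the integral-representation network $S[\gamma]$. For a fixed direction $\aa$, the neuron $\xx\mapsto\sigma(\aa\cdot\xx-b)$ is a ridge function, so its distributional Fourier transform is supported on the line $\RR\aa$; concretely, writing $\sigma^\sharp$ for the one-dimensional Fourier transform of $\sigma$, one gets that $S[\gamma]^\sharp(\xxi)$ is obtained by integrating $\gamma(\aa,b)$ against $e^{-\dd b\,(\cdots)}\sigma^\sharp$-type factors, collapsing $\xxi$ onto multiples of $\aa$ and producing a Jacobian $\svert\omega\svert^{-m}$ when one changes variables from $\aa\in\RR^m$ to $(\omega,\text{direction})$ with $\xxi=\omega\aa$. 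Symmetrically, I would compute that $R[f;\rho](\aa,b)$, as a function of $b$ for fixed $\aa$, has one-dimensional Fourier transform expressible through $\widehat f$ restricted to the line $\RR\aa$ times $\overline{\rho^\sharp}$.

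The second step is to substitute $\gamma = R[f;\rho]$ into the Fourier-side expression for $S$ and carry out the $b$-integration first; this is the pairing that yields the factor $\sigma^\sharp(\omega)\overline{\rho^\sharp(\omega)}$, and then the remaining integral over the sphere of directions together with the radial Jacobian $\svert\omega\svert^{-m}$ reassembles exactly $\iiprod{\sigma,\rho}\,\widehat f(\xxi)$. Taking inverse Fourier transforms gives $S[R[f;\rho]] = \iiprod{\sigma,\rho}\,f$ in $L^2(\RR^m)$. I would organize this as: (1) a lemma computing $S[\gamma]^\sharp$; (2) a lemma computing the partial Fourier transform of $R[f;\rho]$ in the $b$ variable; (3) the combination, using Fubini to justify swapping the order of integration; (4) Plancherel to transfer the identity from the Fourier side back to $L^2(\RR^m)$ and to see that $f\in L^2$ is exactly the hypothesis that makes everything finite once $\rho\in\calS$ and $\sigma\in\calS'$.

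The main obstacle is the rigor of the Fourier-analytic manipulations when $\sigma$ is merely a tempered distribution and the neurons $\sigma(\aa\cdot\xx-b)$ are not themselves integrable or square-integrable in $\xx$, so the ``integral'' $S[\gamma]$ must be interpreted in a distributional/weak sense and the interchange of $\int_{\RR^m\times\RR}\dd\aa\dd b$ with the Fourier transform is not a literal Fubini application. The clean way around this is to test against a Schwartz function $\phi$: show that $\langle S[R[f;\rho]],\phi\rangle = \iiprod{\sigma,\rho}\langle f,\phi\rangle$ for all $\phi\in\calS(\RR^m)$, where now every integral in sight is absolutely convergent because $\rho\in\calS$ controls the $b$- and direction-integrals and $\phi$'s rapid decay controls the $\xx$-integral; the pairing $\iiprod{\sigma,\rho}$ is then literally the tempered-distribution $\sigma^\sharp$ applied to the Schwartz function $\overline{\rho^\sharp}(\omega)\svert\omega\svert^{-m}\,(\text{stuff from }\phi)$, which is well-defined precisely because $\rho\in\calS$ and the singularity $\svert\omega\svert^{-m}$ at the origin is integrated against a function vanishing there to sufficient order. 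A secondary technical point is checking the homogeneity bookkeeping (the $\svert\omega\svert^{-m}$ factor and the $2$-to-$1$ nature of $(\omega,\aa)\mapsto\omega\aa$ versus $(-\omega,-\aa)$), which I would handle by being careful that $\sigma^\sharp,\rho^\sharp$ need not be even and absorbing the sign into the $\omega$-integral over all of $\RR$.
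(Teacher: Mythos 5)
Your proposal is correct and follows essentially the same route as the paper's own Fourier slice method: expressing $S[\gamma]$ via the one-dimensional Fourier transform in $b$, changing variables $\xxi=\omega\aa$ with Jacobian $\svert\omega\svert^{-m}$, and recognizing that the partial Fourier transform of $R[f;\rho]$ in $b$ is $\widehat f(\omega\aa)\overline{\rho^\sharp(\omega)}$, so that $S\circ R$ acts as the multiplier $\iiprod{\sigma,\rho}$. Your additional care about interpreting the integrals distributionally when $\sigma\in\calS'$ (testing against Schwartz functions) is a sound way to supply the rigor the paper delegates to its references, but it is not a different argument.
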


From the perspective of neural network theory, 
the reconstruction formula claims a detailed/constructive version of the universal approximation theorem.
That is, 
given any target function $f$, 
as long as $\iiprod{\sigma,\rho} \neq 0$,
the network $S[\gamma]$ with coefficient $\gamma = R[f;\rho]$ reproduces the original function, and the coefficient is given explicit.

From the perspective of functional analysis, on the other hand,
the reconstruction formula states that $R$ and $S$ are analysis and synthesis operators, and thus play the same roles as, for instance, the Fourier ($F$) and inverse Fourier ($F^{-1}$) transforms respectively, in the sense that the reconstruction formula $S[R[f;\rho]]=\iiprod{\sigma,\rho}f$ corresponds to the Fourier inversion formula $F^{-1}[F[f]]=f$. 

Despite the common belief that neural network parameters are a blackbox, the
closed-form expression \nxrefeq{ridgelet} of ridgelet transform clearly describes how the network
parameters are distributed, which is a clear advantage of the integral
representation theory~\citep[see e.g.][]{Sonoda2021aistats}. 
Moreover, the integral representation theory can deal with a wide range of
activation functions without approximation, not only ReLU but all the tempered
distribution $\calS'(\RR)$~\citep[see e.g.][]{Sonoda2015acha}.

The ridgelet transform is discovered in the late 1990s independently
by~\citet{Murata1996} and~\citet{Candes.PhD}. 
The term ``ridgelet'' is named by Cand\`es, based on the facts that the graph of a function $\xx\mapsto\rho(\aa\cdot\xx-b)$ is ridge-shaped, and that the integral transform $R$ can be regarded as a multidimensional counterpart of the wavelet transform. 

In fact, the ridgelet transform can be decomposed into the composite of wavelet transform after the Radon transform, namely,
\begin{align*}
    R[f;\rho](a\uu,b) &= \int_{\RR} P[f](\uu,t)\overline{\rho(at-b)}\dd a \dd b, \quad (a,\uu,b) \in \RR\times\SS^{m-1}\times\RR, \\
    P[f](\uu,t) &\coloneq  \int_{(\RR \uu)^\perp} f(t\uu+\yy) \dd \yy, \quad (\uu,t) \in \SS^{m-1}\times\RR,
\end{align*}
where $\SS^{m-1}$ denotes the $m$-dimensional unit sphere, $(\RR\uu)^\perp \cong \RR^{m-1}$
denotes the orthocomplement of the normal vector $\uu \in \SS^{m-1}$, $\dd\yy$
denotes the Hausdorff measure on $(\RR\uu)^\perp$ or the Lebesgue measure on
$\RR^{m-1}$, and $\aa\in\RR^m$ is represented in polar coordinates $\aa = a \uu$
with $(a,\uu) \in \RR \times \SS^{m-1}$ allowing the double covering: $(a,\uu) = (-a,-\uu)$~\citep[see][for the proof]{Sonoda2015acha}.
Therefore, several authors have remarked that \emph{ridgelet analysis is
wavelet analysis in the Radon
domain}~\citep{Donoho2002,Kostadinova2014,Starck2010}.

In the context of deep learning
theory,~\citet{Savarese2019,Ongie2020,Parhi2021} and \citet{Unser2019} investigate the
ridgelet transform for the specific case of fully-connected ReLU layers to
establish the representer theorem. 
\citet{Sonoda2021aistats} have shown that the parameter distribution of a
finite model trained by regularized empirical risk minimization (RERM)
converges to the ridgelet spectrum $R[f;\rho]$ in an over-parametrized regime,
meaning that we can understand the parameters at local minima to be a finite
approximation of the ridgelet transform. 
In other words, analyzing neural network parameters can be turned to analyzing the ridgelet transform.

\subsection{Scope and Contribution of This Study}

On the other hand, one of the major shortcomings of ridgelet analysis is that
the closed-form expression is known for relatively small class of networks.
Indeed, until~\citet{sonoda2022symmetric,Sonoda2022gconv}, it was known only
for depth-2 fully-connected layer: $\sigma(\aa\cdot\xx-b)$. In the age of deep learning, a
variety of layers have become popular such as the convolution and pooling
layers~\citep{Fukushima1980,LeCun1998,Ranzato2006,Krizhevsky2012}. Furthermore,
the fully-connected layers on manifolds have also been developed such as the
hyperbolic network~\citep{Ganea2018hnn,Shimizu2021}.
Since the conventional ridgelet transform was discovered heuristically in the 1990s, and the derivation heavily depends on the specific structure of affine map $\aa\cdot\xx-b$, the ridgelet transforms for those modern architectures have been unknown for a long time.

In this study, we explain a systematic method to find the ridgelet transforms via the \emph{Fourier expression} of neural networks, and obtain \emph{new ridgelet transforms} in a unified manner.
The Fourier expression of $S[\gamma]$ is essentially a change-of-frame from neurons $\sigma(\aa\cdot\xx-b)$ to plane waves (or harmonic oscillators) $\exp(i\xx\cdot\xxi)$. Since the Fourier transform is extensively developed on a variety of domains, once a network $S[\gamma]$ is translated into a Fourier expression, we can systematically find a particular coefficient $\gamma_f$ satisfying $S[\gamma_f]=f$ via the Fourier inversion formula. In fact, the traditional ridgelet transform is re-discovered. 
Associated with the change-of-frame in $S[\gamma]$, the ridgelet transform
$R[f]$ is also given a Fourier expression, but this form is known as the
\emph{Fourier slice theorem} of ridgelet transform $R[f]$~\citep[see e.g.][]{Kostadinova2014}. Hence, we call our proposed method as the
\emph{Fourier slice method}.

Besides the classical networks, we deal with \emph{four} types of networks: 
\begin{enumerate}
    \item Networks on finite fields $\FF_p$ in \nxrefsec{case.finite},
    \item Group convolution networks on Hilbert spaces $\calH$ in \nxrefsec{case.gconv},
    \item Fully-connected networks on noncompact symmetric spaces $G/K$ in \nxrefsec{case.mfd}, and 
    \item Pooling layers (also known as the $d$-\emph{plane ridgelet transform}) in \nxrefsec{case.affine}.
\end{enumerate}
The first three cases are already published thus we only showcase them, while the last case (pooling layer and $d$-plane ridgelet) involves \emph{new} results.

For all the cases, the reconstruction formula $S[R[f]]=f$ is understood as a constructive proof of the \emph{universal approximation theorem} for corresponding networks.
The group convolution layer case widely extends the ordinary convolution layer
with periodic boundary, which is also the main subject of the so-called
\emph{geometric deep learning}~\citep{Bronstein2021}.
The case of fully-connected layer on symmetric spaces widely extends the
recently emerging concept of \emph{hyperbolic
networks}~\citep{Ganea2018hnn,Gulcehre2019,Shimizu2021}, which can be cast as
another geometric deep learning. 
The pooling layer case includes the original fully-connected layer and the
pooling layer; and the corresponding ridgelet transforms include previously
developed formulas such as the Radon transform formula by~\citet{Savarese2019}
and related to the previously developed ``$d$-plane ridgelet
transforms'' by~\citet{Rubin.ridgelet} and~\citet{Donoho.ridgelet}.

\subsection{General Notations}
 
For any integer $d>0$, $\calS(\RR^d)$ and $\calS'(\RR^d)$ denote the classes of Schwartz test functions (or rapidly decreasing functions) and tempered distributions on $\RR^d$, respectively. Namely, $\calS'$ is the topological dual of $\calS$.  We note that $\calS'(\RR)$ includes truncated power functions $\sigma(b)=b_+^k = \max\{b,0\}^k$ such as step function for $k=0$ and ReLU for $k=1$. 

\paragraph*{Fourier Transform}

To avoid potential confusion, we use two symbols $\widehat{\cdot}$ and $\cdot^\sharp$ for the Fourier transforms in $\xx \in \RR^m$ and $b \in \RR$, respectively. For example,
\begin{align*}
&\widehat{f}(\xxi) \coloneq  \int_{\RR^m}f(\xx)e^{-i\xx\cdot\xxi}\dd\xx, \quad \xxi \in \RR^m\\
   &\rho^\sharp(\omega) \coloneq  \int_\RR \rho(b)e^{-ib\omega}\dd b, \quad \omega \in \RR\\
   &\gamma^\sharp(\aa,\omega) = \int_\RR \gamma(\aa,b)e^{-ib\omega}\dd b, \quad (\aa,\omega) \in \RR^m\times\RR.
\end{align*}
Furthermore, with a slight abuse of notation, when $\sigma$ is a tempered
distribution (i.e.,~$\sigma \in \calS'(\RR)$), then $\sigma^\sharp$ is understood as the
Fourier transform of distributions. Namely, $\sigma^\sharp$ is another tempered
distribution satisfying $\int_\RR \sigma^\sharp(\omega)\phi(\omega)\dd\omega = \int_\RR \sigma(\omega) \phi^\sharp(\omega)\dd\omega$ for any test function $\phi \in \calS(\RR)$. We
refer to~\citet{Grafakos.classic} for more details on the Fourier transform for
distributions.

 \section{Method}
 \xlabel{sec:method}
We explain the basic steps to find the parameter distribution $\gamma$ satisfying $S[\gamma]=f$. The basic steps is three-fold: (\step{1}) Turn the network into the \emph{Fourier expression}, (\step{2}) \emph{change variables} inside the feature map into principal and auxiliary variables, and (\step{3}) put unknown $\gamma$ in the \emph{separation-of-variables form} to find a particular solution.
In the following, we conduct the basic steps for the classical setting, i.e.,~the case of the fully-connected layer, for the explanation purpose. However, the ``catch'' of this procedure is that it is applicable to a wide range of networks as we will see in the subsequent sections.

\subsection{Basic Steps to Solve $S[\gamma]=f$}

The following procedure is valid, for example, when $\sigma \in \calS'(\RR), \rho \in \calS(\RR), f \in L^2(\RR^m)$ and
$\gamma \in L^2(\RR^m\times\RR)$. See~\citet{Kostadinova2014} and~\citet{Sonoda2015acha} for more
details on the valid combinations of function classes.
\paragraph*{\step{1}.}
Using the convolution in $b$, we can turn the network into the \emph{Fourier expression} as below.
\begin{align*}
    S[\gamma](\xx)
    &\coloneq  \int_{\RR^m\times\RR} \gamma(\aa,b) \sigma( \aa\cdot\xx-b )\dd\aa\dd b \notag \\
    &= \int_{\RR^m} [\gamma(\aa,\cdot) *_b \sigma](\aa\cdot\xx)\dd\aa \notag \\
    &= \frac{1}{2\pi} \int_{\RR^m\times\RR} \gamma^\sharp(\aa,\omega) \sigma^\sharp( \omega ) e^{i\omega \aa\cdot \xx}\dd\aa\dd \omega.
\end{align*}
    Here, $*_b$ denotes the convolution in $b$; and the third equation follows from an identity (Fourier inversion formula) $\phi(b) = \frac{1}{2\pi}\int_\RR \phi^\sharp(\omega)e^{i\omega b}\dd\omega$ with $\phi(b) = [\gamma(\aa,)*_b\sigma](b)$ and $b = \aa\cdot\xx$.
\paragraph*{\step{2}.} Change variables $(\aa,\omega) = (\xxi/\omega,\omega)$ with $\dd \aa \dd \omega = \svert \omega\svert ^{-m} \dd \xxi\dd\omega$ so that feature map $\sigma^\sharp(\omega)e^{i\omega\aa\cdot\xx}$ splits into a product of a principal factor (in $\xxi$ and $\xx$) and an auxiliary factor (in $\omega$), namely
\[
    S[\gamma](\xx)
    = (2\pi)^{m-1} \int_\RR \left[ \frac{1}{(2\pi)^m} \int_{\RR^m} \gamma^\sharp(\xxi/\omega,\omega) e^{i\xxi\cdot \xx}\dd\xxi \right] \sigma^\sharp( \omega ) \svert \omega\svert ^{-m} \dd \omega.
\]
Now, we can see that the integration inside brackets $[\cdots]$ becomes the Fourier inversion with respect to $\xxi$ and $\xx$.
\paragraph*{\step{3}.} Because of the Fourier inversion, it is natural to assume that the unknown function $\gamma$ has a \emph{separation-of-variables form} as
\begin{equation}
    \gamma_{f,\rho}^\sharp(\xxi/\omega, \omega) \coloneq  \widehat{f}(\xxi) \overline{\rho^\sharp(\omega)}, \xlabel{eq:sep.var}
\end{equation}
with using an arbitrary function $\rho \in \calS(\RR)$.
Namely, it is composed of a principal factor $\widehat{f}$ containing the target function $f$, and an auxiliary factor $\rho^\sharp$ set only for convergence of the integration in $\omega$.
Then, we have
\begin{align*}
    S[\gamma_{f,\rho}](\xx)
    &= (2\pi)^{m-1} \int_\RR \left[ \frac{1}{(2\pi)^m} \int_{\RR^m} \widehat{f}(\xxi) e^{i\xxi\cdot \xx}\dd\xxi \right] \sigma^\sharp( \omega ) \overline{\rho^\sharp(\omega)}  \svert \omega\svert ^{-m} \dd \omega \notag \\
    &= \iiprod{\sigma,\rho} \frac{1}{(2\pi)^m}\int_{\RR^m} \widehat{f}(\xxi) e^{i\xxi\cdot \xx}\dd\xxi\dd \omega \notag \\
    &= \iiprod{\sigma,\rho} f(\xx),
\end{align*}
where we put
\[
    \iiprod{\sigma,\rho} \coloneq  (2\pi)^{m-1}\int_\RR \sigma^\sharp(\omega)\overline{\rho^\sharp(\omega)}\svert \omega\svert ^{-m}\dd\omega.
\]
In other words, the separation-of-variables expression $\gamma_{f,\rho}$ is a particular solution to the integral equation $S[\gamma]=cf$ with factor $c = \iiprod{\sigma,\rho} \in \CC$.

Furthermore, $\gamma_{f,\rho}$ is the ridgelet transform because it is rewritten as
\[
    \gamma_{f,\rho}^\sharp(\aa,\omega) = \widehat{f}(\omega\aa) \overline{\rho^\sharp(\omega)},\]
and thus calculated as
\begin{align*}
    \gamma(\aa,b)
    &= \frac{1}{2\pi}\int_{\RR} \widehat{f}(\omega\aa) \overline{\rho^\sharp(\omega) e^{-i\omega b}}\dd\omega \notag \\
    &= \frac{1}{2\pi}\int_{\RR^m \times \RR} f(\xx) \overline{\rho^\sharp(\omega) e^{i \omega (\aa\cdot\xx - b)}}\dd\xx\dd\omega \notag \\
    &= \int_{\RR^m \times \RR} f(\xx) \overline{\rho(\aa\cdot\xx-b)} \dd\xx,
\end{align*}
which is exactly the definition of the ridgelet transform $R[f;\rho]$.

In conclusion, the separation-of-variables expression \nxrefeq{sep.var} is the way to naturally find the ridgelet transform.
We note that Steps~1 and 2 can be understood as the \emph{change-of-frame} from the frame spanned by neurons $\{ \xx \mapsto \sigma(\aa\cdot\xx-b) \mid (\aa,b) \in \RR^m\times\RR\}$, with which we are less familiar, to the frame spanned by (the tensor product of an auxiliary function and the) plane wave $\{ \xx \mapsto \sigma^\sharp(\omega)e^{i\xxi\cdot\xx} \mid (\xxi,\omega) \in \RR^m\times\RR \}$, with which we are much familiar. Hence, in particular, the map $\gamma(\aa,b) \mapsto \gamma^\sharp(\aa/\omega,\omega)\svert \omega\svert ^{-m}$ can be understood as the associated coordinate transformation.

 \section{Case I: {{NN}} on Finite Field $\FF_p \coloneq  \ZZ/\ZZ_p$}
 \xlabel{sec:case.finite}
As one of the simplest applications, we showcase the results
by~\citet{Yamasaki2023icml}, a neural network on the finite field $\FF_p \coloneq  \ZZ/p\ZZ \cong \{ 0, 1, \ldots, p-1 \mod p\}$
(with prime number $p$). 
This study aimed to design a quantum algorithm that efficiently computes the ridgelet transform, and the authors developed this example based on the demand to represent all data and parameters in finite qubits. To be precise, the authors dealt with functions on discrete space $\FF_p^m$, as a discretization of functions on a continuous space $\RR^m$.

\subsection{Fourier Transform}

For any positive integers $n,m$, let $\ZZ_n^m \coloneq  (\ZZ/n\ZZ)^m$ denote the product of cyclic groups. We note that the set of all real functions $f$ on $\ZZ_n^m$ is identified with the ($n \times m$)-dimensional real vector space, i.e.~$\{ f : \ZZ_n^m \to \RR \} \cong \RR^{n \times m}$, because each value $f(i,j)$ of function $f$ at $(i,j) \in \ZZ_n^m$ can be identified with the $(i,j)$th component $a_{ij}$ of vector $\aa=(a_{ij}) \in \RR^{n \times m}$. In particular, $L^2(\ZZ_n^m) = \RR^{n \times m}$.

We use the Fourier transform on a product of cyclic groups as below.
\begin{definition}[Fourier Transform on {$\ZZ_n^m$}]
For any $f \in L^2(\ZZ_n^m)$, put
\[
    \widehat{f}(\xxi) \coloneq  \sum_{\xx \in \ZZ_n^m} f(\xx) e^{-2 \pi i \xxi \cdot \xx/n}, \quad \xxi \in \ZZ_n^m.
\]
\end{definition}

\begin{theorem}[Inversion Formula]
For any $f \in L^2(\ZZ_n^m)$,
\[
    f(\xx) = \frac{1}{\svert \ZZ_n^m\svert }\sum_{\xxi \in \ZZ_n^m} \widehat{f}(\xxi) e^{2 \pi i \xxi \cdot \xx/n}, \quad \xx \in \ZZ_n^m.
\]
\end{theorem}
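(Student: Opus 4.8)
The plan is to reduce the inversion formula to the orthogonality relation for the additive characters of $\ZZ_n^m$. Concretely, I would first establish the character sum identity
\[
\sum_{\xxi \in \ZZ_n^m} e^{2\pi i \xxi \cdot \zz / n} =
\begin{cases}
\svert \ZZ_n^m\svert, & \zz \equiv \bm{0} \ \ (\text{in } \ZZ_n^m),\\[2pt]
0, & \text{otherwise},
\end{cases}
\qquad \zz \in \ZZ_n^m.
\]
Since the exponential factors coordinatewise, $e^{2\pi i \xxi\cdot\zz/n} = \prod_{k=1}^m e^{2\pi i \xi_k z_k/n}$, the sum over $\ZZ_n^m$ likewise factors into a product of $m$ one-dimensional sums $\sum_{\xi \in \ZZ_n} e^{2\pi i \xi z/n}$. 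Each such sum is a finite geometric series with ratio $q = e^{2\pi i z/n}$: it equals $n$ when $q = 1$, i.e.\ when $z \equiv 0 \pmod n$, and equals $(q^n - 1)/(q-1) = 0$ otherwise, since $q^n = e^{2\pi i z} = 1$ while $q \neq 1$. Taking the product over the $m$ coordinates yields the claimed identity.

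Next I would substitute the definition of $\widehat{f}$ into the right-hand side of the asserted formula and exchange the two finite sums (legitimate, as everything is a finite sum of complex numbers):
\[
\frac{1}{\svert \ZZ_n^m\svert} \sum_{\xxi \in \ZZ_n^m} \widehat{f}(\xxi)\, e^{2\pi i \xxi\cdot\xx/n}
= \frac{1}{\svert \ZZ_n^m\svert} \sum_{\yy \in \ZZ_n^m} f(\yy) \sum_{\xxi \in \ZZ_n^m} e^{2\pi i \xxi\cdot(\xx-\yy)/n}.
\]
Applying the character sum identity with $\zz = \xx - \yy$, the inner sum vanishes unless $\yy = \xx$ in $\ZZ_n^m$, in which case it equals $\svert \ZZ_n^m\svert$. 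Hence only the $\yy = \xx$ term survives, the prefactor $\svert \ZZ_n^m\svert$ cancels, and the right-hand side collapses to $f(\xx)$, which is the desired inversion formula.

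There is essentially no serious obstacle: the statement is the standard Fourier inversion on the finite abelian group $\ZZ_n^m$, and the only computational content is the geometric-series evaluation of the character sum. The one point deserving a line of care is the ``otherwise'' case of that sum — one must note that $z \not\equiv 0 \pmod n$ forces $q \neq 1$, so that division by $q-1$ is valid, while $q^n = 1$ still holds; with this observed, the remainder is bookkeeping and the interchange of the finite summations.
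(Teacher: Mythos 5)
Your proposal is correct and follows exactly the route the paper indicates: the paper states that the inversion formula "is immediate from the orthogonality of characters" $\sum_{g \in \ZZ_n} e^{2\pi i g(t-s)/n} = \svert \ZZ_n\svert\,\delta_{ts}$, and your argument simply fills in the geometric-series evaluation of that character sum and the exchange of finite sums. No issues.
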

The proof is immediate from the so-called \emph{orthogonality of characters}, an identity $\sum_{g \in \ZZ_n} e^{2 \pi i g(t-s)/n} = \svert \ZZ_n\svert  \delta_{ts} \ (t,s \in \ZZ_n)$, where $\delta_{ts}$ being the Kronecker's $\delta$.

We note that despite the Fourier transform itself can be defined on any cyclic group $\ZZ_n$, namely $n$ needs not be prime, a finite field $\FF_p (= \ZZ_p)$ is assumed to perform the change-of-variables step.

\subsection{Network Design}

Remarkably, the $\FF_p$-version of arguments is obtained by formally replacing every integration in the $\RR$-version of arguments with summation.
\begin{definition}[NN on {$\FF_p^m$}]
For any functions $\gamma \in L^2( \FF_p^m \times \FF_p )$ and $\sigma \in L^\infty(\FF_p)$, put
\[
        S[\gamma](\xx) \coloneq  \sum_{(\aa,b) \in \FF_p^m \times \FF_p} \gamma(\aa,b) \sigma(\aa\cdot\xx - b), \quad \xx \in \FF_p^m
\]
\end{definition}

Again, in~\citet{Yamasaki2023icml}, it is introduced as a discretized version
of a function on a continuous space $\RR^m$.
\subsection{Ridgelet Transform}

\begin{theorem}[Reconstruction Formula]
For any function $\rho \in L^\infty(\FF_p)$, put
\begin{align*}
    R[f;\rho](\aa,b) &\coloneq  \sum_{\xx \in \FF_p^m} f(\xx) \overline{\rho(\aa\cdot\xx-b)}, \quad (\aa,b) \in \FF_p^m \times \FF_p\\
    \iiprod{\sigma,\rho} &\coloneq  \frac{1}{\svert \FF_p\svert ^{m-1}}\sum_{\omega \in \FF_p} \sigma^\sharp(\omega) \overline{\rho^\sharp(\omega)}.
\end{align*}    
Then, for any $f \in L^2(\FF_p^m)$,
\[
        S[R[f;\rho]] = \iiprod{\sigma,\rho} f.
\]
\end{theorem}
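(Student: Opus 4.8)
The plan is to run the three-step Fourier slice method of \refsec{method} essentially verbatim, replacing integrals over $\RR,\RR^m$ by sums over $\FF_p,\FF_p^m$ and the inversion constants $1/2\pi$, $1/(2\pi)^m$ by $1/|\FF_p|$, $1/|\FF_p^m|$. In \step{1} one regards $b\mapsto\sigma(\aa\cdot\xx-b)$ as a convolution in $b$ and applies the inversion formula on $\FF_p$ together with the convolution theorem $(\gamma(\aa,\cdot)*_b\sigma)^\sharp=\gamma^\sharp(\aa,\cdot)\,\sigma^\sharp$, obtaining the Fourier expression
\[
S[\gamma](\xx)=\frac{1}{|\FF_p|}\sum_{(\aa,\omega)\in\FF_p^m\times\FF_p}\gamma^\sharp(\aa,\omega)\,\sigma^\sharp(\omega)\,e^{2\pi i\omega\,\aa\cdot\xx/p}.
\]
In \step{2}, for each fixed $\omega\in\FF_p^\times$ one reindexes the $\aa$-sum by $\xxi\coloneq\omega\aa$. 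Because $p$ is prime, $\FF_p$ is a field, $\omega$ is invertible, and $\aa\mapsto\omega\aa$ is a bijection of the finite set $\FF_p^m$; hence the reindexing is exact and, unlike the $\RR$-case, carries no Jacobian (the finite analogue of $|\omega|^{-m}$ is simply $1$). The resulting inner sum over $\xxi$ is then precisely the inversion formula on $\FF_p^m$ applied to the pair $(\xxi,\xx)$, up to the constant $|\FF_p^m|$.

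In \step{3} one imposes the separation-of-variables ansatz $\gamma^\sharp_{f,\rho}(\xxi/\omega,\omega)\coloneq\widehat f(\xxi)\,\overline{\rho^\sharp(\omega)}$, equivalently $\gamma^\sharp_{f,\rho}(\aa,\omega)=\widehat f(\omega\aa)\,\overline{\rho^\sharp(\omega)}$; substituting it into the Fourier expression and carrying out the $\xxi$-inversion collapses the inner sum to $f(\xx)$ and leaves $\iiprod{\sigma,\rho}\,f(\xx)$, with $\iiprod{\sigma,\rho}$ the appropriate scalar multiple of $\sum_{\omega\in\FF_p}\sigma^\sharp(\omega)\overline{\rho^\sharp(\omega)}$ (the normalizing power of $|\FF_p|$ is a matter of tracking the constants from \step{1}--\step{2}). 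It then remains to check that $\gamma_{f,\rho}$ is indeed the ridgelet transform: substituting $s\coloneq\aa\cdot\xx-b$ in the $b$-sum defining $R[f;\rho]^\sharp(\aa,\omega)$ factors it as $\bigl(\sum_\xx f(\xx)e^{-2\pi i(\omega\aa)\cdot\xx/p}\bigr)\overline{\rho^\sharp(\omega)}=\widehat f(\omega\aa)\,\overline{\rho^\sharp(\omega)}$, which matches the ansatz. (Equivalently, one can bypass the ansatz entirely and simply expand both $\sigma$ and $\overline{\rho}$ in $S[R[f;\rho]]$ by the inversion formula, collapsing the $b$-sum and then the $\aa$-sum by orthogonality of characters; this yields the same conclusion.)

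The step I expect to be the main obstacle is the degenerate frequency $\omega=0$. It is invisible in \refthm{reconst}, where $\{\omega=0\}$ is Lebesgue-null, but it is a genuine summand over $\FF_p$: there the reindexing $\xxi=\omega\aa$ is constant rather than a bijection, so the $\aa$-sum contributes $|\FF_p^m|\,\widehat f(\bm 0)$, i.e.\ a multiple of the constant $\sum_{\xx}f(\xx)$ instead of a multiple of $f(\xx)$. The proof must show this spurious DC term is consistent with the claimed factor; it vanishes exactly when $\sigma^\sharp(0)\,\overline{\rho^\sharp(0)}=0$ — the finite-field counterpart of the admissibility/convergence condition that makes $\iiprod{\sigma,\rho}$ finite in \refthm{reconst}, and of the zero-mean condition for wavelets — after which the index $\omega=0$ may be harmlessly re-inserted into the sum defining $\iiprod{\sigma,\rho}$. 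Thus the substance of the argument is not the three formal steps, which are routine once the Fourier transform on $\ZZ_n^m$ is in hand, but rather (i) the legitimacy of the change of variables in \step{2}, which is precisely why primality of $p$ is assumed, and (ii) the verification that the degenerate frequency $\omega=0$ does not disturb the reconstruction.
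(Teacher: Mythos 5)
Your proposal follows the paper's proof essentially verbatim: the same three-step Fourier slice method (Fourier expression in $b$, reindexing $\xxi=\omega\aa$, separation-of-variables ansatz $\gamma^\sharp(\xxi/\omega,\omega)=\widehat{f}(\xxi)\overline{\rho^\sharp(\omega)}$), with the same closing verification that $\gamma_{f,\rho}=R[f;\rho]$. The one place you go beyond the paper is the degenerate frequency $\omega=0$, and your concern is justified: the paper's \step{2} reindexes over all $(\xxi,\omega)\in\FF_p^m\times\FF_p$ including $\omega=0$, where $\aa\mapsto\omega\aa$ is constant rather than bijective and $\xxi/\omega$ is undefined. A direct expansion by orthogonality of characters (your parenthetical alternative) confirms that the $\omega=0$ summand contributes $\svert\FF_p\svert^{m-1}\sigma^\sharp(0)\overline{\rho^\sharp(0)}\,\widehat{f}(\bm{0})$, a constant, in place of $\svert\FF_p\svert^{m-1}\sigma^\sharp(0)\overline{\rho^\sharp(0)}\,f(\xx)$, so the theorem as stated (with no admissibility hypothesis on $\sigma,\rho$) holds only under the condition $\sigma^\sharp(0)\overline{\rho^\sharp(0)}=0$ that you identify, or with $\omega=0$ excluded from the sum defining $\iiprod{\sigma,\rho}$. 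On the deferred constant: carrying out your \step{3} bookkeeping gives the factor $\svert\FF_p\svert^{m-1}\sum_{\omega}\sigma^\sharp(\omega)\overline{\rho^\sharp(\omega)}$, which agrees with the final display of the paper's proof; the $\svert\FF_p\svert^{-(m-1)}$ normalization in the theorem statement is inconsistent with that display and appears to be a typo.
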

In other words, the fully-connected network on finite field $\FF_p^m$ can strictly represent any square integrable function on $\FF_p^m$.

Finally, the following proof shows that a new example of neural networks can be obtained by systematically following the same three steps as in the original arguments.

\begin{proof}
{\textit{\step{1}}. Turn to the Fourier expression:}
\begin{align*}
    S[\gamma](\xx)
        &\coloneq  \sum_{(\aa,b) \in \FF_p^m \times \FF_p} \gamma(\aa,b) \sigma(\aa\cdot\xx - b)\\
        &= \frac{1}{\svert \FF_p\svert }\sum_{(\aa,\omega) \in \FF_p^m \times \FF_p} \gamma^\sharp(\aa,\omega) \sigma(\omega) e^{2 \pi i \omega \aa\cdot\xx/p}
        \end{align*}
{\textit{\step{2}}. Change variables $\xxi = \omega \aa$}
\begin{equation*}
        \phantom{S[\gamma](\xx)}= \frac{1}{\svert \FF_p\svert }\sum_{(\xxi,\omega) \in \FF_p^m \times \FF_p} \gamma^\sharp(\xxi/\omega,\omega) \sigma(\omega) e^{2 \pi i \xxi\cdot\xx/p}
        \end{equation*}
{\textit{\step{3}}. Put separation-of-variables form         $\gamma^\sharp(\xxi/\omega,\omega) = \widehat{f}(\xxi)\overline{\rho^\sharp(\omega)}$}
\begin{align*}
        &\phantom{S[\gamma](\xx)}=  \left( \svert \FF_p\svert ^{m-1}\sum_{\omega \in \FF_p} \sigma^\sharp(\omega) \overline{\rho^\sharp(\omega)} \right) \left(\frac{1}{\svert \FF_p\svert ^m} \sum_{\xxi \in \FF_p^m} \widehat{f}(\xxi)e^{2 \pi i \xxi\cdot\xx/p} \right)\\
        &\phantom{S[\gamma](\xx)}= \iiprod{\sigma,\rho} f(\xx),
\end{align*}
and we can verify $\gamma = R[f;\rho]$.
\end{proof}

 \section{Case {II}: {Group} Convolutional {{NN}} on {Hilbert} Space $\calH$}
 \xlabel{sec:case.gconv}
Next, we showcase the results by~\citet{Sonoda2022gconv}.
Since there are various versions of convolutional neural networks (CNNs), their approximation properties (such as the universality) have been investigated individually depending on the network architecture. The method presented here defines the generalized group convolutional neural network (GCNN) that encompasses a wide range of CNNs, and provides a powerful result by unifying the expressivity analysis in a constructive and simple manner by using ridgelet transforms.

\subsection{Fourier Transform}

Since the input to CNNs is a signal (or a function), the Fourier transform corresponding to a \emph{naive} integral representation is the Fourier transform on the space of signals, which is typically an infinite-dimensional space $\calH$ of functions. Although the Fourier transform on the infinite-dimensional Hilbert space has been well developed in the probability theory, the mathematics tends to become excessively advanced for the expected results. One of the important ideas of this study is to induce the Fourier transform of $\RR^m$ in a \emph{finite}-dimensional subspace $\subH$ of $\calH$ instead of using the Fourier transform on the entire space $\calH$. To be precise, we simply take an $m$-dimensional orthonormal frame $F_m = \{ h_i \}_{i=1}^m$ of $\calH$, put the linear span $\subH \coloneq  \hull F_m = \{ \sum_{i=1}^m c_i h_i \mid c_i \in \RR \}$, and identify each element $f = \sum_{i=1} c_i h_i \in \subH \subset \calH$ with point $\cc = (c_1, \ldots, c_m) \in \RR^m$. Obviously, this embedding depends on the choice of $m$-frame $F_m$, yet drastically simplifies the main theory itself. 

\begin{definition}[Fourier Transform on a Hilbert Space {$\subH \subset \calH$}]
Let $\calH$ be a Hilbert space, $\subH \subset \calH$ be an $m$-dimensional subspace, and $\lambda$ be the Lebesgue measure induced from $\RR^m$. Put
\[
        \widehat{f}(\xi) \coloneq  \int_{\subH} f(x) e^{-i \iprod{x,\xi}} \dd \lambda(x), \quad \xi \in \subH
\]
\end{definition}

Then, obviously from the construction, we have the following.
\begin{theorem}
For any $f \in L^2(\subH)$,
\[\frac{1}{(2\pi)^m} \int_{\subH} \widehat{f}(\xi) e^{i \iprod{x,\xi}}  \dd\lambda(\xi) = f(x), \quad x \in \subH
\]    
\end{theorem}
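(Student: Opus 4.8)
The plan is to transport the classical Fourier inversion theorem on $\RR^m$ to $\subH$ through the coordinate isometry furnished by the orthonormal frame $F_m=\{h_i\}_{i=1}^m$. Define $\Phi:\RR^m\to\subH$ by $\Phi(\cc)=\sum_{i=1}^m c_i h_i$. Because $F_m$ is orthonormal in $\calH$, the map $\Phi$ is a linear isometric isomorphism, so $\iprod{\Phi(\cc),\Phi(\cc')}=\cc\cdot\cc'$ for all $\cc,\cc'\in\RR^m$, and by the very definition of $\lambda$ in the statement, $\lambda$ is the push-forward of Lebesgue measure on $\RR^m$ under $\Phi$.

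First I would rewrite both sides of the claimed identity in $\cc$-coordinates. Setting $g\coloneq f\circ\Phi\in L^2(\RR^m)$ and writing $x=\Phi(\cc)$, $\xi=\Phi(\xxi)$, the identity $\iprod{x,\xi}=\cc\cdot\xxi$ turns the definition of $\widehat f$ into $\widehat f(\Phi(\xxi))=\int_{\RR^m}g(\cc)e^{-i\cc\cdot\xxi}\dd\cc=\widehat g(\xxi)$; that is, $\widehat f$ is literally the ordinary Fourier transform of $g$ read in frame coordinates. Likewise the left-hand side of the theorem, evaluated at $x=\Phi(\cc)$, equals $(2\pi)^{-m}\int_{\RR^m}\widehat g(\xxi)e^{i\cc\cdot\xxi}\dd\xxi$. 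Invoking the standard Fourier inversion formula on $\RR^m$ --- valid in the $L^2$ sense for $g\in L^2(\RR^m)$, and pointwise under the usual additional hypotheses such as $g,\widehat g\in L^1$ or $g\in\calS(\RR^m)$ --- this last integral equals $g(\cc)=f(x)$, which is the assertion.

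There is essentially no deep obstacle here; the only thing that needs care is the bookkeeping. One must check that orthonormality of $F_m$ is exactly what makes the pairing on $\subH$ coincide with the Euclidean dot product and the induced measure coincide with Lebesgue measure, so that no Gram-matrix or Jacobian factor intrudes on either side; and one should be explicit about the sense in which ``$=$'' is meant (equality in $L^2(\subH)$ for general $f$, pointwise under stronger regularity), since for arbitrary $f\in L^2(\subH)$ the integral on the left need not converge absolutely. This is exactly why the paper phrases it as ``obvious from the construction'': the content is entirely that the induced Fourier transform on $\subH$ is a faithful copy of the one on $\RR^m$.
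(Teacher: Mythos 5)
Your proposal is correct and is exactly the argument the paper has in mind: the paper offers no proof beyond ``obviously from the construction,'' and your write-up simply makes explicit that the orthonormal frame $F_m$ gives an isometry $\RR^m\cong\subH$ carrying the dot product to $\iprod{\cdot,\cdot}$ and Lebesgue measure to $\lambda$, so the statement is the classical Fourier inversion formula in coordinates. Your added caution about the sense of equality ($L^2$ versus pointwise) is a reasonable refinement but does not change the route.
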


\subsection{Network Design}

Another important idea is to deal with various group convolutions in a uniform manner by using the linear representation of groups.

\begin{definition}[Generalized Group Convolution]
Let $G$ be a group, $\calH$ be a Hilbert space, and $T:G\to GL(\calH)$ be a group representation of $G$. The $(G,T)$-convolution is given by
\[
    (a * x)(g) \coloneq  \iprod{ T_{g^{-1}}[x],a}_{\calH}, \quad a,x \in \calH.
\]

As clarified in~\citet{Sonoda2022gconv}, the \emph{generalized convolution}
 covers a variety of typical examples such as (1) classical group
 convolution $\int_G x(h) a(h^{-1}g) \dd h$, (2) discrete cyclic convolution for
 multi-channel digital images, (3) DeepSets, or permutation equivariant
 maps, (4) continuous cyclic convolution for signals, and (5)
 $\mathrm{E}(n)$-equivariant maps.
\end{definition}

\begin{definition}[Group CNN]
Let $\subH \subset \calH$ be an $m$-dimensional subspace equipped with the Lebesgue measure $\lambda$. Put
\[
    S[\gamma](x)(g) \coloneq  \int_{\subH \times \RR} \gamma(a,b) \sigma( (a * x)(g) - b ) \dd \lambda(a) \dd b, \quad x \in \calH, g \in G
\]
\end{definition}

Here, the integration runs all the possible convolution filters $a$. For the sake of simplicity, however, the domain $\subH$ of filters is restricted to an $m$-dimensional subspace of entire space $\calH$.

\subsection{Ridgelet Transform}

In the following, $e \in G$ denotes the identity element.

\begin{definition}[{$(G,T)$}-Equivariance]
A (nonlinear) map $f:\calH\to\CC^G$ is $(G,T)$-equivariant when
\[
    f( T_g[x] )(h) = f(x)(g^{-1}h), \quad \forall x \in \subH, g,h \in G
\]
\end{definition}

We note that the proposed network is inherently $(G,T)$-equivariant
\[
        S[\gamma]( T_g[x] )(h) = S[\gamma](x)(g^{-1}h), \quad \forall x \in \calH, \ g,h \in G.
\]

\begin{definition}[Ridgelet Transform]
For any measurable functions $f : \subH \to \CC^G$ and $\rho:\RR\to\CC$, put
\[
    R[f;\rho](a,b) \coloneq  \int_{\subH} f(x)(e) \overline{\rho( \iprod{a,x}_\calH - b )} \dd \lambda(x).
\]
\end{definition}
It is remarkable that the product of $a$ and $x$ inside the $\rho$ is not convolution $a * x$ but scalar product $\iprod{a,x}$. This is essentially because (1) $f$ will be assumed to be group equivariant, and (2) the network is group equivariant by definition.

\begin{theorem}[Reconstruction Formula]
Suppose that $f$ is $(G,T)$-equivariant and $f(\bullet)(e) \in L^2(\subH)$, then $S[R[f;\rho]]=\iiprod{\sigma,\rho} f$.
\end{theorem}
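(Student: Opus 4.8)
The plan is to reduce the claim to the classical reconstruction formula on $\RR^m$ by exploiting that both sides of the asserted identity are $(G,T)$-equivariant. Write $g_0 \coloneq f(\bullet)(e)$, which by hypothesis belongs to $L^2(\subH)$. Under the identification $\subH \cong \RR^m$ furnished by the orthonormal frame $F_m$ --- so that $\lambda$ becomes the Lebesgue measure on $\RR^m$ and, crucially, $\iprod{a,x}_\calH$ becomes the Euclidean inner product $a\cdot x$ whenever $a,x \in \subH$ --- the transform $R[f;\rho]$ is nothing but the classical ridgelet transform $R[g_0;\rho]$ of \refeq{ridgelet}, and the classical depth-2 network $S$ of \refeq{contiSNN} makes sense on it.

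First I would observe that it suffices to establish the identity on the fiber over the identity $e \in G$. Indeed, for any $(G,T)$-equivariant $F$ one has, taking $h = g$ in the equivariance relation, $F(T_g[x])(g) = F(x)(e)$, hence $F(y)(g) = F(T_{g^{-1}}[y])(e)$ for all $y,g$; applying this to $F = S[R[f;\rho]]$ (equivariant by the remark preceding the definition of $R$) and to $F = f$, the general identity follows once we know $S[R[f;\rho]](\bullet)(e) = \iiprod{\sigma,\rho}\, f(\bullet)(e)$. Next, over $e$ the generalized convolution degenerates: since a representation sends the identity to $\mathrm{id}_\calH$, we have $(a * x)(e) = \iprod{T_{e^{-1}}[x], a}_\calH = \iprod{x, a}_\calH$, and because the filter $a$ ranges only over $\subH$ this equals $\iprod{P_{\subH} x, a}_\calH = a \cdot (P_{\subH} x)$ in $\RR^m$-coordinates, with $P_{\subH}$ the orthogonal projection onto $\subH$. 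Restricted to $x \in \subH$ this gives exactly
\[
S[R[f;\rho]](x)(e) \;=\; \int_{\RR^m \times \RR} R[g_0;\rho](a,b)\, \sigma(a \cdot x - b)\, \dd a\, \dd b \;=\; S\bigl[R[g_0;\rho]\bigr](x),
\]
so the three-step Fourier-slice computation of \refsec{method} (Fourier expression; change of variables $(a,\omega) = (\xxi/\omega,\omega)$; separation-of-variables ansatz \refeq{sep.var}) applies verbatim and yields $S[R[g_0;\rho]] = \iiprod{\sigma,\rho}\, g_0$ in $L^2(\RR^m)$. Propagating this back to every fiber through the equivariance relation gives $S[R[f;\rho]] = \iiprod{\sigma,\rho}\, f$.

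I expect the main obstacle to be making the equivariance reduction fully rigorous --- in particular, reconciling the evaluation at the possibly out-of-subspace point $T_{g^{-1}}[x]$ with the projection $P_{\subH}$ that appears once the filter integral is restricted to $\subH$; this is the place where the hypothesis that $f$ is $(G,T)$-equivariant, together with the choice of frame $F_m$, has to be invoked so that $f(\bullet)(e)$ is effectively determined on $\subH$. The remaining ingredients are routine: one needs $R[f;\rho]$ to lie in a function class on which $S$ is defined, and an application of Fubini to identify $S[R[g_0;\rho]]$ with the composed classical operator, both licensed by the same combination of conditions ($\rho \in \calS(\RR)$, $\sigma \in \calS'(\RR)$, $g_0 \in L^2$) discussed for the classical setting in \citet{Kostadinova2014} and \citet{Sonoda2015acha}.
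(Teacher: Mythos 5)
Your proposal is correct and rests on the same computational engine as the paper --- the three-step Fourier-slice argument --- but it is organized differently, and the difference is worth noting. The paper carries the group element through the whole computation: it Fourier-expands $\sigma(\iprod{T_{g^{-1}}[x],a}_{\calH}-b)$, changes variables $(a,\omega)=(\xi/\omega,\omega)$, inserts the ansatz $\gamma^\sharp(\xi/\omega,\omega)=\widehat{f}(\xi)(e)\overline{\rho^\sharp(\omega)}$, and only in the final line uses equivariance to identify $\frac{1}{(2\pi)^m}\int_{\subH}\widehat{f}(\xi)(e)e^{i\iprod{T_{g^{-1}}[x],\xi}}\dd\lambda(\xi)$ with $f(x)(g)$. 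You instead factor out the group variable at the start, using the $(G,T)$-equivariance of both $S[\gamma]$ and $f$ to reduce everything to the identity fiber, where the generalized convolution collapses to $\iprod{x,a}_{\calH}$ and the classical Euclidean reconstruction formula (\refthm{reconst}) can be cited as a black box. Your version buys modularity --- Case II becomes a corollary of the classical theorem rather than a re-derivation --- at the cost of making the reduction step carry all the weight. Importantly, the subtlety you flag (that $T_{g^{-1}}[x]$ may leave $\subH$, so the Fourier inversion over $\subH$ really reconstructs $f$ at the projection $P_{\subH}T_{g^{-1}}[x]$, and one implicitly needs $f(\bullet)(e)$ to factor through that projection) is not an artifact of your reorganization: it is present, unaddressed, in the paper's own final line. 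So your proof is no less rigorous than the paper's on this point; you have merely made the hidden assumption visible.
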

In other words, a continuous GCNN can represent any square-integrable \emph{group-equivariant} function.
Again, the proof is performed by systematically following the three steps as below.

\begin{proof}
{\textit{\step{1}}. Turn to a Fourier expression:}
\begin{align*}
S[\gamma](x)(g)
        &= \int_{\subH\times\RR} \gamma(a,b) \sigma( \iprod{T_{g^{-1}}[x],a}_{\calH} - b) \dd a \dd b \\
        &=\frac{1}{2\pi}\int_{\subH\times\RR} \gamma^\sharp(a,\omega) \sigma^\sharp(\omega)e^{i\omega\iprod{T_{g^{-1}}[x],a}_{\calH}}\dd a \dd\omega 
    \end{align*}
{\textit{\step{2}}. Change variables $(a,\omega) = (\xi/\omega,\omega)$ with $\dd a \dd \omega = \svert \omega\svert ^{-m}\dd\xi\dd\omega$:}
\begin{equation*}
        \phantom{S[\gamma](x)(g)}= \frac{1}{2\pi}\int_{\subH\times\RR} \gamma^\sharp(\xi/\omega,\omega)\sigma^\sharp(\omega)e^{i\iprod{T_{g^{-1}}[x],\xi}_{\calH}} \svert \omega\svert ^{-m} \dd\xi\dd\omega.
\end{equation*}
{\textit{\step{3}}. Put separation-of-variables form
    $\gamma_{f,\rho}^\sharp(\xi/\omega,\omega) \coloneq  \widehat{f}(\xi)(e) \overline{\rho^\sharp(\omega)}$}
\begin{align*}
        &\phantom{S[\gamma](x)(g)}= \frac{1}{2\pi}\int_{\subH} \widehat{f}(\xi)(e) e^{i\iprod{T_{g^{-1}}[x],\xi}_{\calH}} \dd\lambda(\xi) \int_\RR \sigma^\sharp(\omega) \overline{\rho^\sharp(\omega)}\svert \omega\svert ^{-m} \dd\omega \notag \\
        &\phantom{S[\gamma](x)(g)}= \iiprod{\sigma,\rho} f( x )(g),
\end{align*}
and we can verify $\gamma_{f,\rho}=R[f;\rho]$.
\end{proof}

\subsection{Literature in Geometric Deep Learning}

General convolution networks for geometric/algebraic domains have been
developed for capturing the invariance/equivariance to the symmetry in a
data-efficient
manner~\citep{Bruna2013,Cohen2016a,Zaheer2017,Kondor2018,Cohen2019,Kumagai2020}.
To this date, quite a variety of convolution networks have been proposed for
grids, finite sets, graphs, groups, homogeneous spaces and manifolds. We refer
to~\citet{Bronstein2021} for a detailed survey on the so-called \emph{geometric
deep learning}.

Since a universal approximation theorem (UAT) is a corollary of a reconstruction formula, $S[R[f;\rho]]=\iiprod{\sigma,\rho}f$, the 3-steps Fourier expression method have provided a variety of UATs for $\sigma(ax-b)$-type networks in a \emph{unified manner}. 
Here, we remark that the UATs of individual convolution networks have already
shown~\citep{Maron2019universality,Zhou2020a,Yarotsky2021a}. However, in
addition to above mentioned advantages, the \emph{wide coverage} of activation
functions $\sigma$ is also another strong advantage. In particular, we do
not need to rely on the specific features of ReLU, nor need to rely on Taylor
expansions/density arguments/randomized assumptions to deal with nonlinear
activation functions.

 \section{Case {III}: {{NN}} on Noncompact Symmetric Space $X=G/K$}
 \xlabel{sec:case.mfd}
Then, we showcase the results by~\citet{sonoda2022symmetric}.
When the data is known to be on a certain manifold, it is natural to consider developing NNs on the manifold, in order to explicitly incorporate the inductive bias. Since there are no such thing as the standard inner products or affine mappings on manifolds, various NNs have been proposed based on geometric considerations and implementation constraints. The main idea of this study is to start from the Fourier transform on a manifold and induce a NN on the manifold and its reconstruction formula. On compact groups such as spheres $\SS^{m-1}$, the Fourier analysis is well known as the Peter--Weyl theorem, but in this study, the authors focused on noncompact symmetric spaces $G/K$ such as hyperbolic space $\HH^m$ and space $\PP_m$ of positive definite matrices and developed NNs based on the Helgason--Fourier transform on noncompact symmetric space.

\subsection{Noncompact Symmetric Space $G/K$}

    We refer to~\citet[Introduction]{Helgason.GGA}
and~\citet[Chapter~III]{Helgason.GASS}.
    A noncompact symmetric space is a homogeneous space $G/K$ with nonpositive sectional curvature on which $G$ acts transitively.
    Two important examples are hyperbolic space $\HH^m$ (\reffig{poincare}) and SPD manifold $\PP_m$. See also Appendices~\ref{sec:poincare} and \ref{sec:spdmfd} for more details on  these spaces.
\begin{figure}[t]
    \centering
    \includegraphics[width=.5\textwidth]{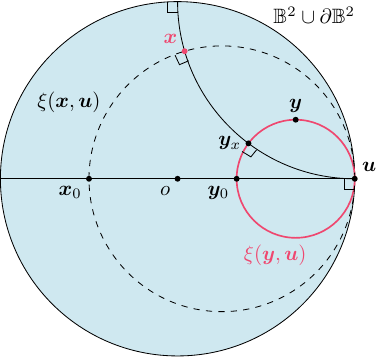}
\caption{Poincare Disk $\BB^2$ is a noncompact symmetric space $SU(1,1)/SO(2)$. Poincar\'e disk $\BB^2$, boundary $\bdB^2$, point $\xx$ (magenta), horocycle $\xi(\yy,\uu)$ (magenta) through point $\yy$ tangent to the boundary at $\uu$, and two geodesics (solid black)  orthogonal to the boundary at $\uu$ through $\oo$ and $\xx$ respectively. The signed composite distance $\iprod{\yy,\uu}$ from the origin $\oo$ to the horocycle $\xi(\yy,\uu)$ can be visualized as the Riemannian distance from $\oo$ to point $\yy_0$. Similarly, the distance between point $\xx$ and horocycle $\xi(\yy,\uu)$ is understood as the Riemannian distance between $\xx$ and $\yy_x$ along the geodesic, or equivalently, $\xx_0$ and $\yy_0$. See \nxrefapp{poincare} for more details.}
    \label{fig:poincare}
\end{figure}

    Let $G$ be a connected semisimple Lie group with finite center, and let $G=KAN$ be its Iwasawa decomposition. Namely, it is a unique diffeomorphic decomposition of $G$ into subgroups $K,A$, and $N$, where $K$ is maximal compact, $A$ is maximal abelian, and $N$ is maximal nilpotent.
    For example, when $G=GL(m,\RR)$ (general linear group), then $K=O(m)$ (orthogonal group), $A = D_+(m)$ (all positive diagonal matrices), and $N = T_1(m)$ (all upper triangular matrices with ones on the diagonal).

    Let $\dd g, \dd k, \dd a$, and $\dd n$ be left $G$-invariant measures on $G,K,A$, and $N$ respectively.
    
    Let $\lieG, \lieK,\lieA$, and $\lieN$ be the Lie algebras of $G,K,A$, and $N$ respectively.
    By a fundamental property of abelian Lie algebra, both $\lieA$ and its dual $\lieA^{*}$ are the same dimensional vector spaces, and thus they can be identified with $\RR^r$ for some $r$, namely $\lieA = \lieA^{*} = \RR^r$. We call $r \coloneq  \dim \lieA$ the rank of $X$. For example, when $G=GL(m,\RR)$, then $\lieG=\mathfrak{gl}_m = \RR^{m\times m}$ (all $m \times m$ real matrices), $\lieK = \mathfrak{o}_m$ (all skew-symmetric matrices), $\lieA=D(m)$ (all diagonal matrices), and $\lieN=T_0(m)$ (all strictly upper triangular matrices).

\begin{definition}
Let $X \coloneq  G/K$ be a noncompact symmetric space, namely, a Riemannian manifold composed of all the left cosets
\[
        X \coloneq  G/K \coloneq  \{ x=gK \mid g \in G \}.
\]
Using the identity element $e$ of $G$, let $o=eK$ be the origin of $X$. By the construction of $X$, group $G$ acts transitively on $X$, and let $g[x] \coloneq  ghK$ (for $x=hK$) denote the $G$-action of $g \in G$ on $X$. Specifically, any point $x\in X$ can always be written as $x=g[o]$ for some $g \in G$. Let $\dd x$ denote the left $G$-invariant measure on $X$. 
\end{definition}

\begin{example}[Hyperbolic Space {$\HH^m = SO^+(1,m)/O(m)$}]
It is used for embedding words and tree-structured dataset.
\end{example}

\begin{example}[SPD Manifold {$\PP_m = GL(m)/O(m)$}]
It is a manifold of positive definite matrices, such as covariance matrices.
\end{example}

\subsection{Boundary $\bdX$, Horosphere $\xi$, and Vector-Valued Composite Distance $\iprod{x,u}$}

    We further introduce three geometric objects in noncompact symmetric space $G/X$.
    In comparison to Euclidean space $\RR^m$,
    the boundary $\bdX$ corresponds to ``the set of all infinite points'' $\lim_{r \to +\infty} \{ r \uu \mid \svert \uu\svert =1, \uu \in \RR^m \}$,
    a horosphere $\xi$ through point $x \in X$ with normal $u \in \bdX$ corresponds to a straight line $\xxi$ through point $\xx \in \RR^m$ with normal $\uu \in \SS^{m-1}$,
    and the vector distance $\iprod{x,u}$ between origin $o \in X$ and horosphere $\xi(x,u)$ corresponds to the Riemannian distance between origin $\mathbf{0} \in \RR^m$ and straight line $\xxi(\xx,\uu)$.

\begin{definition}
Let $M \coloneq  C_K(A) \coloneq  \{ k \in K \mid ka = ak \mbox{ for all } a \in A \}$ be the centralizer of $A$ in $K$, and let 
\[
    \bdX \coloneq  K/M \coloneq  \{ u=kM \mid k \in K \}\]
be the boundary (or ideal sphere) of $X$, which is known to be a compact manifold. Let $\dd u$ denote the uniform probability measure on $\bdX$.
\end{definition}

For example, when $K=O(m)$ and $A=D_+(m)$, then $M = D_{\pm 1}$ (the subgroup of $K$ consisting of diagonal matrices with entries $\pm 1$).

\begin{definition}
Let
\[
        \Xi \coloneq  G/MN \coloneq  \{ \xi = gMN \mid g \in G\}\]
be the space of horospheres. 
\end{definition}

Here, basic horospheres are: An $N$-orbit $\xi_o \coloneq  N[o] = \{ n[o] \mid n \in N \}$, which is a horosphere passing through the origin $x=o$ with normal $u=eM$; and $ka[\xi_o] = kaN[o]$, which is a horosphere through point $x=ka[o]$ with normal $u=kM$. In fact, any horosphere can be represented as $\xi(kan[o], kM)$ since $kaN = kanN$ for any $n \in N$. We refer to~\citet[Ch.I, \textsection~1]{Helgason.GASS}
and~\citet[\textsection~3.5]{Bartolucci2021} for more details on the horospheres and
boundaries.

\begin{definition}
As a consequence of the Iwasawa decomposition, for any $g \in G$ there uniquely exists an $r$-dimensional vector $H(g) \in \lieA$ satisfying $g \in K e^{H(g)}N$. For any $(x,u) = (g[o], kM) \in X \times \bdX$, put 
\[
    \iprod{x,u} \coloneq  -H(g^{-1}k) \in \lieA \cong \RR^r,\]
which is understood as the $r$-dimensional vector-valued distance, called the \emph{composite distance}, from the origin $o\in X$ to the horosphere $\xi(x,u)$ through point $x$ with normal $u$.
\end{definition}

Here, the vector-valued distance means that the $\ell^2$-norm coincides with the Riemannian length, that is, $\svert \iprod{x,u}\svert  = \svert d( o, \xi(x,u) )\svert $.
    We refer to~\citet[Ch.II, \textsection~1, 4]{Helgason.GASS}
and~\citet[\textsection~2]{Kapovich2017} for more details on the vector-valued composite
distance.

\subsection{Fourier Transform}

Based on the preparations so far, we introduce the Fourier transform on $G/K$, known as the Helgason--Fourier transform.
Let $W$ be the Weyl group of $G/K$, and let $\svert W\svert $ denote its order. Let $\cc(\lambda)$ 
be the Harish-Chandra $\cc$-function for $G$. We refer
to~\citet[Theorem~6.14, Ch.~IV]{Helgason.GGA} for the closed-form expression of
the $\cc$-function.

\begin{definition}[Helgason--Fourier Transform]
For any measurable function $f$ on $X$, put
\[\widehat{f}(\lambda,u) \coloneq  \int_{X} f(x) e^{(-i\lambda+\varrho)\iprod{x,u}}\dd x, \ (\lambda,u) \in \lieA^{*} \times \bdX\]
with a certain constant vector $\varrho \in \lieA^{*}$.
Here, the exponent $(-i\lambda+\varrho)\iprod{x,u}$ is understood as the action of functional $-i\lambda+\varrho \in \lieA^{*}$ on a vector $\iprod{x,u} \in \lieA$.
\end{definition}
This is understood as a ``Fourier transform'' because $e^{(-i\lambda+\varrho)\iprod{x,u}}$ is the eigenfunction of Laplace--Beltrami operator $\Delta_X$ on $X$.

\begin{theorem}[Inversion Formula]
For any square integrable function $f \in L^2(X)$,
\[
    f(x) = \frac{1}{\svert W\svert } \int_{\lieA^{*} \times \bdX} \widehat{f}(\lambda,u) e^{(i\lambda+\varrho)\iprod{x,u}} \frac{\dd\lambda\dd u}{\svert \cc(\lambda)\svert ^{2}}, \quad x \in X.
\]
\end{theorem}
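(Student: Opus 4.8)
The idea is to reduce the statement to two classical ingredients: Euclidean Fourier inversion on the abelian part $\lieA\cong\RR^{r}$, and Harish--Chandra's Plancherel theorem for the spherical transform, which is where the density $\svert\cc(\lambda)\svert^{-2}$ comes from. First I would reduce to a dense class, say $f\in C_c^{\infty}(X)$, using the standard growth estimates on $\widehat f(\lambda,u)$ and on the elementary spherical functions to control the right-hand side. Then I would factor the Helgason--Fourier transform through the horospherical Radon transform: parametrizing horospheres by $\xi(t,u)$ with vector-valued signed distance $t=\iprod{x,u}\in\lieA$ and writing $Rf(t,u)$ for the integral of $f$ over $\xi(t,u)$, one disintegrates $\dd x$ along the level sets of $x\mapsto\iprod{x,u}$ to get
\[
\widehat f(\lambda,u)=\int_{\lieA}e^{(-i\lambda+\varrho)(t)}\,(Rf)(t,u)\,\dd t,
\]
so $\widehat f$ is the $\lieA$-Fourier transform of the $\varrho$-shifted Radon transform, and the claimed reconstruction kernel $e^{(i\lambda+\varrho)\iprod{x,u}}$ integrated against $\widehat f$ is a back-projection composed with an $\lieA$-Fourier integral. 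Thus the whole statement is equivalent to an inversion formula for the horospherical Radon transform, with $\svert\cc(\lambda)\svert^{-2}$ encoding the radial part of the intertwining operator.

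Next I would prove the formula in the $K$-invariant (zonal) case. For $K$-invariant $f$, $\widehat f(\lambda,u)$ is independent of $u$ and equals the Harish--Chandra spherical transform $\widetilde f(\lambda)=\int_X f(x)\varphi_{-\lambda}(x)\,\dd x$, while the $u$-integral in the statement collapses to the elementary spherical function $\varphi_\lambda(x)=\int_{\bdX}e^{(i\lambda+\varrho)\iprod{x,u}}\,\dd u$. The assertion then becomes exactly Harish--Chandra's spherical inversion formula $f(x)=\svert W\svert^{-1}\int_{\lieA^{*}}\widetilde f(\lambda)\varphi_\lambda(x)\,\svert\cc(\lambda)\svert^{-2}\dd\lambda$, which I invoke as a known deep result (Harish--Chandra; Gangolli; Helgason).

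Then I would bootstrap from the zonal case to general $f$. The map $f\mapsto\widehat f(\lambda,\cdot)$ intertwines the $G$-action on $X$ with the (non-unitary) principal series action on functions on $\bdX$, and the reconstruction map $F(\lambda,\cdot)\mapsto\int_{\bdX}F(\lambda,u)e^{(i\lambda+\varrho)\iprod{x,u}}\,\dd u$ intertwines in the opposite direction; hence their composition on each generic $\lambda$-fibre is a $G$-endomorphism of an irreducible principal series, so a scalar by Schur's lemma. Evaluating on $K$-fixed vectors identifies that scalar, as a function of $\lambda$, with $\svert W\svert^{-1}\svert\cc(\lambda)\svert^{-2}$ via the zonal case, and integrating the fibrewise identities in $\lambda$ yields the formula; a density argument then removes the regularity assumption on $f$.

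The main obstacle is the zonal step: it rests on Harish--Chandra's explicit Plancherel measure $\svert\cc(\lambda)\svert^{-2}\dd\lambda$ and on delicate uniform bounds and asymptotics for the spherical functions $\varphi_\lambda$ that justify the $L^2$-theory and the interchanges of integration. The equivariance/Schur step also needs care, since $e^{(i\lambda+\varrho)\iprod{x,u}}$ is not unitary in $u$ off the $K$-fixed line, so one must use the correct sesquilinear pairing on the principal series and verify irreducibility of these representations for $\lambda$ outside a null set.
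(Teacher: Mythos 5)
The paper does not actually prove this theorem: it is quoted as a known result, with the proof deferred to Helgason (GASS, Ch.~III, Theorems~1.3 and 1.5). So the relevant comparison is with the standard proof in the literature. Your plan correctly identifies the essential content: the formula is not elementary, its only genuinely deep input is Harish--Chandra's spherical inversion/Plancherel theorem with density $\svert\cc(\lambda)\svert^{-2}\dd\lambda$, and the general case must be reduced to the $K$-invariant one, where $\int_{\bdX}e^{(i\lambda+\varrho)\iprod{x,u}}\dd u=\varphi_\lambda(x)$ recovers the elementary spherical function. The factorization of $\widehat f$ through the ($\varrho$-weighted) horospherical Radon transform is also the right picture. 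Up to this point your outline matches the standard argument.

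The gap is in your bootstrap from the zonal case to general $f$. The Schur's-lemma step is not well-posed as stated: the map $F(\lambda,\cdot)\mapsto\int_{\bdX}F(\lambda,u)e^{(i\lambda+\varrho)\iprod{x,u}}\dd u$ is the Poisson transform, whose image consists of joint eigenfunctions of the invariant differential operators that are in general \emph{not} square-integrable on $X$; so the ``composition on each $\lambda$-fibre'' does not literally define a $G$-endomorphism of the principal series on $L^2(\bdX)$ that you could feed to Schur's lemma. Making this rigorous requires boundary-value/asymptotic expansions of eigenfunctions (essentially the machinery behind the $\cc$-function itself), i.e.\ it is as deep as what you are trying to prove. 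The standard and much lighter route, which you should substitute for this step, is: observe that both $f\mapsto\widehat f$ and the reconstruction integral transform equivariantly under $G$ (via the cocycle identity $\iprod{g[x],u}=\iprod{x,g^{-1}\cdot u}+\iprod{g[o],u}$), so it suffices to verify the formula at $x=o$; there the kernel $e^{(i\lambda+\varrho)\iprod{o,u}}$ is identically $1$, the $u$-integral $\int_{\bdX}\widehat f(\lambda,u)\dd u$ equals the spherical transform of the $K$-average $f^\natural$ of $f$, and Harish--Chandra's inversion formula applied to $f^\natural$ at the origin gives $f^\natural(o)=f(o)$. This avoids any irreducibility or unitarity discussion of the principal series. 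With that replacement (and the density argument you already indicate for passing from $C_c^\infty$ to $L^2$), your proof is the standard one.
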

We refer to~\citet[Theorems~1.3 and 1.5, Ch.~III]{Helgason.GASS} for more
details on the inversion formula.

\subsection{Network Design}

In accordance with the geometric perspective, it is natural to define the network as below.
\begin{definition}[NN on Noncompact Symmetric Space {$G/K$}] 
For any measurable functions $\sigma:\RR\to\CC$ and $\gamma:\lieA^{*}\times\bdX\times\RR\to\CC$, put
\[
    S[\gamma](x)
    \coloneq  \int_{\lieA^{*}\times\bdX\times\RR} \gamma(a,u,b)\sigma( a \iprod{x,u} - b ) e^{\varrho \iprod{x,u}} \dd a \dd u \dd b, \quad x \in G/K.
\]
\end{definition}
Remarkably, the scalar product $\aa\cdot\xx$ (or $ a\uu\cdot\xx$ in polar coordinate) in the Euclidean setting is replaced with a distance function $a\iprod{x,u}$ in the manifold setting.
In~\citet{sonoda2022symmetric}, the authors instantiate two important examples
as below.

\begin{example}[Continuous Horospherical Hyperbolic NN]
On the \emph{Poincare ball model} $\BB^m \coloneq  \{ \xx \in \RR^m \mid \svert \xx\svert  < 1\}$ equipped with the Riemannian metric $\mathfrak{g} = 4(1-\svert \xx\svert )^{-2} \sum_{i=1}^m \dd x_i \otimes \dd x_i$,
\[            S[\gamma](\xx) \coloneq  \int_{\RR\times\bdB^m\times\RR}\gamma(a,\uu,b)\sigma(a \iprod{\xx,\uu}-b)e^{\varrho \iprod{\xx,\uu}}\dd a\dd\uu\dd b,  \quad \xx \in \BB^m
\]
            $\varrho = (m-1)/2$, 
            $\iprod{\xx,\uu} = \log\left( \frac{1-\svert \xx\svert _E^2}{\svert \xx-\uu\svert _E^2} \right), \quad (\xx,\uu) \in \BB^m \times \bdB^m$
\end{example}
\begin{example}[Continuous Horospherical SPD Net]
On the SPD manifold $\PP_m$,
\[
S[\gamma](x) \coloneq  \int_{\RR^m\times\bdP_m\times\RR} \gamma(\aa,u,b) \sigma(\aa \cdot \iprod{x,u} -b) e^{\boldsymbol{\varrho} \cdot \iprod{x,u}} \dd \aa \dd u \dd b, \quad x \in \PP_m
\]
            $\boldsymbol{\varrho} = (-\frac{1}{2}, \ldots, -\frac{1}{2}, \frac{m-1}{4})$,
            $ \iprod{\xx,\uu} = \frac{1}{2}\log \lambda\left(u x u^\top \right), \quad (x,u) \in \PP_m \times \bdP_m$ where $\lambda(x)$ denotes the diagonal in the \emph{Cholesky decomposition} of $x$.
\end{example}

\subsection{Ridgelet Transform}

\begin{definition}[Ridgelet Transform]
For any measurable functions $f:X \to \CC$ and $\rho:\RR\to\CC$, put
\begin{align*}
&R[f;\rho](a,u,b) \coloneq  \int_X \cc[f](x)\overline{\rho(a\iprod{x,u}-b)}e^{\varrho\iprod{x,u}}\dd x,\\
    &\cc[f](x) \coloneq  \int_{\lieA^{*}\times\bdX}\widehat{f}(\lambda,u)e^{(i\lambda+\varrho)\iprod{x,u}}\frac{\dd\lambda\dd u}{\svert W\svert\,\svert \cc(\lambda)\svert ^{4}},\\
    &\iiprod{\sigma,\rho} \coloneq  \frac{\svert W\svert }{2\pi}\int_\RR \sigma^\sharp(\omega)\overline{\rho^\sharp(\omega)}\svert \omega\svert ^{-r}\dd\omega.
\end{align*}
Here $\cc[f]$ is defined as a multiplier satisfying $\widehat{\cc[f]}(\lambda,u)=\widehat{f}(\lambda,u)\svert \cc(\lambda)\svert ^{-2}$.
\end{definition}

\begin{theorem}[Reconstruction Formula] 
Let $\sigma \in \calS'(\RR), \rho \in \calS(\RR)$. 
Then, for any square integrable function $f$ on $X$, we have 
\[
    S[R[f;\rho]](x)
    = \int_{\lieA^{*}\times\bdX\times\RR} R[f;\rho](a,u,b)\sigma(a\iprod{x,u}-b) 
    e^{\varrho\iprod{x,u}}\dd a \dd u \dd b
    = \iiprod{\sigma,\rho} f(x).
\]
\end{theorem}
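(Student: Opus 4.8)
The plan is to follow the same three-step Fourier-slice recipe used for the Euclidean, finite-field, and group-convolution cases, now with the Helgason--Fourier transform on $X=G/K$ playing the role of the Fourier transform on $\RR^m$. First I would rewrite a single neuron $\sigma(a\iprod{x,u}-b)$ via the one-dimensional Fourier inversion formula $\sigma(t)=\frac{1}{2\pi}\int_\RR \sigma^\sharp(\omega)e^{i\omega t}\dd\omega$ applied with $t = a\iprod{x,u}-b$, after first convolving the unknown $\gamma$ with $\sigma$ in the $b$-variable as in \textbf{Step 1} of \refsec{method}. This turns $S[\gamma](x)$ into an integral over $(a,u,\omega)$ of $\gamma^\sharp(a,u,\omega)\,\sigma^\sharp(\omega)\, e^{i\omega a\iprod{x,u}}\, e^{\varrho\iprod{x,u}}$ against $\dd a\dd u\dd\omega$; the extra factor $e^{\varrho\iprod{x,u}}$ carried along in the definition of the network is exactly what is needed to match the exponent $e^{(i\lambda+\varrho)\iprod{x,u}}$ appearing in the Helgason inversion formula.

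Next I would carry out \textbf{Step 2}, the change of variables. Writing $\lambda = \omega a$ (so $a\iprod{x,u}\cdot\omega = \lambda\iprod{x,u}$) with $a,\lambda \in \lieA^{*}\cong\RR^r$ and Jacobian $\dd a\dd\omega = \svert\omega\svert^{-r}\dd\lambda\dd\omega$, the integrand becomes $\gamma^\sharp(\lambda/\omega,u,\omega)\,\sigma^\sharp(\omega)\,e^{(i\lambda+\varrho)\iprod{x,u}}\,\svert\omega\svert^{-r}$, so the $(\lambda,u)$-integral now has precisely the shape of the Helgason--Fourier inversion (up to the Plancherel density $\svert\cc(\lambda)\svert^{-2}$ and the factor $\svert W\svert^{-1}$). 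Then \textbf{Step 3}: I would posit the separation-of-variables ansatz $\gamma^\sharp_{f,\rho}(\lambda/\omega,u,\omega) \coloneq \widehat{f}(\lambda,u)\,\overline{\rho^\sharp(\omega)}\,\svert\cc(\lambda)\svert^{-2}$ — equivalently $\widehat{\cc[f]}$ as a multiplier correction built into the principal factor — so that the $(\lambda,u)$-integral reconstructs $f(x)$ via the inversion formula while the leftover $\omega$-integral collapses to the scalar $\iiprod{\sigma,\rho} = \frac{\svert W\svert}{2\pi}\int_\RR\sigma^\sharp(\omega)\overline{\rho^\sharp(\omega)}\svert\omega\svert^{-r}\dd\omega$. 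Finally I would unwind the definition: transforming $\gamma^\sharp_{f,\rho}$ back in $\omega$ and substituting $\widehat{f}(\lambda,u) = \int_X f(x)e^{(-i\lambda+\varrho)\iprod{x,u}}\dd x$ should reproduce exactly the stated $R[f;\rho]$ with its $\cc[f]$ correction, confirming $\gamma_{f,\rho}=R[f;\rho]$.

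The main obstacle will not be the formal bookkeeping — which is essentially identical to the Euclidean case with $\omega a \mapsto \lambda$ — but justifying that each manipulation is legitimate for the stated function classes $\sigma\in\calS'(\RR)$, $\rho\in\calS(\RR)$, $f\in L^2(X)$. Concretely: (i) the change of variables $a\mapsto\lambda/\omega$ is singular at $\omega=0$, and one must check that the factor $\svert\omega\svert^{-r}$ against $\sigma^\sharp\overline{\rho^\sharp}$ is integrable near $0$ — this is where the Schwartz assumption on $\rho$ and a suitable vanishing/admissibility condition on $\sigma^\sharp$ (or on $\rho^\sharp$) enter, exactly as $\iiprod{\sigma,\rho}$ must be finite; (ii) the interchange of the $\omega$-integral with the $(\lambda,u)$-integral requires a Fubini argument, cleanly handled by treating $\sigma$ as a tempered distribution paired against the Schwartz object built from $\rho$ and $\widehat{f}$, rather than as a pointwise integral; and (iii) the presence of the Harish-Chandra $\cc$-function means the relevant Plancherel measure $\svert\cc(\lambda)\svert^{-2}\dd\lambda$ is not Lebesgue, so the multiplier $\cc[f]$ (defined so that $\widehat{\cc[f]}=\widehat{f}\svert\cc\svert^{-2}$) is what absorbs the mismatch between the ansatz and the true inversion density; one should note that $\cc[f]\in L^2$ needs $\svert\cc(\lambda)\svert^{-2}$ to be controlled, which is why the $\svert\cc\svert^{-4}$ weight appears in the definition of $\cc[f]$ (one factor for the inversion, one for the transform). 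I would state these integrability/admissibility hypotheses explicitly and then let the three-step computation go through verbatim, citing the Helgason inversion theorem quoted above for the reconstruction of $f$ from $\widehat{f}$ and the distributional Fourier calculus on $\RR$ for the $\omega$-variable.
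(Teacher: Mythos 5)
Your proposal is correct and follows essentially the same route as the paper: the same three-step Fourier-slice computation with the Helgason--Fourier inversion in place of the Euclidean one, the same change of variables $\lambda=\omega a$ with Jacobian $\svert\omega\svert^{-r}$, and the identical separation-of-variables ansatz $\gamma^\sharp_{f,\rho}(\lambda/\omega,u,\omega)=\widehat{f}(\lambda,u)\overline{\rho^\sharp(\omega)}\svert\cc(\lambda)\svert^{-2}$, including the correct accounting of the $\svert\cc(\lambda)\svert^{-4}$ weight inside $\cc[f]$. The only difference is that you flag the integrability and Fubini issues explicitly, which the paper leaves implicit by citing the admissible function classes.
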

In other words, the fully-connected network on noncompact symmetric space $G/K$ can represent any square-integrable function.
Again, the proof is performed by systematically following the three steps as below.

\begin{proof}
We identify the scale parameter $a \in \lieA^{*}$ with vector $\aa \in \RR^r$.
\par\noindent
{\textit{\step{1}}. Turn to a Fourier expression:}
\begin{align*}
    S[\gamma](x)
    &\coloneq  \int_{\RR^r\times\bdX\times\RR} \gamma(\aa,u,b)\sigma(\aa\cdot\iprod{x,u}-b)e^{\varrho\iprod{x,u}}\dd\aa\dd u\dd b \notag \\
    &= \frac{1}{2\pi}\int_{\RR^r\times\bdX\times\RR} \gamma^\sharp(\aa,u,\omega)\sigma^\sharp(\omega)e^{(i\omega\aa+\varrho)\iprod{x,u}}\dd\aa\dd u\dd\omega
    \end{align*}
{\textit{\step{2}}. Change variables $(\aa,\omega) = (\llambda/\omega,\omega)$ with $\dd\aa\dd\omega = \svert \omega\svert ^{-r}\dd\llambda\dd\omega$:}
\begin{align*}
    &\phantom{S[\gamma](x)}= \frac{1}{2\pi}\int_{\RR} \left[ \int_{\lieA^{*}\times\bdX} \gamma^\sharp(\lambda/\omega,u,\omega) e^{(i\lambda+\varrho)\iprod{x,u}} \dd\lambda\dd u\right] \sigma^\sharp(\omega)\svert \omega\svert ^{-r}\dd\omega.
\end{align*}
{\textit{\step{3}}. Put separation-of-variables form $\gamma_{f,\rho}^\sharp(\lambda/\omega,u,\omega) = \widehat{f}(\lambda,u)\overline{\rho^\sharp(\omega)}\svert \cc(\lambda)\svert ^{-2}$}
\begin{align*}
    &\phantom{S[\gamma](x)}= \left(\frac{\svert W\svert }{2\pi}\int_{\RR} \sigma^\sharp(\omega)\overline{\rho^\sharp(\omega)}\svert \omega\svert ^{-r}\dd\omega \right)\left(\int_{\lieA^{*}\times\bdX} \widehat{f}(\lambda,u) e^{(i\lambda+\varrho)\iprod{x,u}} \frac{\dd\lambda\dd u}{\svert W\svert\,\svert \cc(\lambda)\svert ^2} \right) \\
    &\phantom{S[\gamma](x)}= \iiprod{\sigma,\rho} f(x),
\end{align*}
and we can verify $\gamma_{f,\rho}=R[f;\rho]$.
\end{proof}

\subsection{Literature in Hyperbolic Neural Networks}

The hyperbolic neural network
(HNN)~\citep{Ganea2018hnn,Gulcehre2019,Shimizu2021} is another emerging
direction of geometric deep learning, inspired by the empirical observations
that some datasets having tree or hierarchical structure can be efficiently
embedded into hyperbolic
spaces~\citep{Krioukov2010,Nickel2017,Nickel2018,Sala2018}. 
We note that designing a FC layer $\sigma(\iprod{a,x}-b)$ on manifold $X$ is less trivial,
because neither \emph{scalar product} $\iprod{a,x}$, \emph{bias subtraction} $-b$, nor \emph{elementwise activation} of $\sigma$ is trivially defined on $X$ in general, and thus we have to face those primitive issues.

The design concept of the original HNN~\citep{Ganea2018hnn} is to reconstruct
basic operations in the ordinary neural networks such as linear maps, bias
translations, pointwise nonlinearities and softmax layers by using the
Gyrovector operations in a tractable and geometric manner.
For example, in HNN++ by~\citet{Shimizu2021}, the Poincar\'e multinomial
logistic regression (MLR) layer $v(\xx)$ and fully-connected (FC) layer
$\calF(\xx)$,  corresponding to the $1$-affine layer $\aa\cdot\xx-b$ and
$k$-affine layer $A^\top\xx-\bb$ without activation respectively, are
designed as nonlinear maps $v:\HH^m\to\RR$ and  $\calF:\HH^m\to\HH^n$ so that 
$v(\xx;\aa,b)$ coincides with the distance between output $\yy=\calF(\xx;\{\aa_i,b_i\}_{i\in[n]})$ and Poincar\'e hyperplane $H(o,\ee_{\aa,b})$. Here, a Poincar\'e hyperplane $H(\xx,\zz)$ is defined as the collection of all geodesics through point $\xx$ and normal to $\zz$.
Furthermore, they are designed so that the discriminative hyperplane coincides a Poincar\'e hyperplane.
The nonlinear activation function $\sigma:\RR^m\to\RR^n$ is cast as a map $\sigma:\HH^m\to\HH^k$ via lifting
    $\sigma(\xx) \coloneq  \exp_0 \circ \, \sigma  \circ \log_0(\xx)$ for any $\xx \in \HH^m$.
However, in practice, activation can be omitted since the FC layer is inherently nonlinear.

\begin{figure}[t]
    \centering
    \includegraphics[width=.5\textwidth]{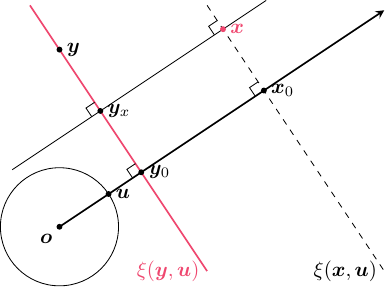} %
\caption{The Euclidean fully-connected layer $\sigma(\aa\cdot\xx-b)$ is recast as the signed distance $d(\xx,\xi)$ from a point $\xx$ to a hyperplane $\xi(\yy,\uu)$ followed by nonlinearity $\sigma(r\bullet)$, where $\yy$ satisfies $r\yy\cdot\uu=b$ and $\xi(\yy,\uu)$ passes through the point $\yy$ with normal $\uu$.\xlabel{fig:euclid}}
\end{figure}

In this study, on the other hand, we take $X$ to be a \emph{noncompact symmetric space} $G/K$, which is a generalized version of the hyperbolic space $\HH^m$. 
Following the philosophy of the Helgason--Fourier transform, 
we regard the scalar product $\uu\cdot\xx$ of unit vector $\uu \in \SS^{m-1}$ and point $\xx \in \RR^m$ as the signed distance between the origin $\oo$ and plane $\xi(\xx,\uu)$ through point $\xx$ with normal $\uu$. Then, we recast it to the vector-valued distance, denoted $\iprod{u,x}$, between the origin $o$ and horocycle $\xi(x,u)$ through point $x$ with normal $u$. As a result, we can naturally define bias subtraction $-b$ and elementwise activation of $\sigma:\RR\to\RR$ because the signed distance is identified with a vector.

More geometrically,
$\uu\cdot\xx-b$ in $\RR^m$ is understood as the distance between point $\xx$ and plane $\xi$ satisfying $\uu\cdot\xx-b=0$ (see \reffig{euclid}). Similarly, 
$\iprod{u,x}-b$ is understood as the distance between point $x$ and horocycle $\xi$ satisfying $\iprod{u,x}-b=0$.
Hence, as a general principle,
we may formulate a versatile template of affine layers on $X$ as
\begin{equation}
    S[\gamma](x) \coloneq  \int_{\RR\times\Xi} \gamma(a,\xi) \sigma( a d(x,\xi) ) \dd a \dd \xi.
\end{equation}
For example, in the original HNN, the Poincar\'e hyperplane $H$ is employed as the geometric object. If we have a nice coordinates such as $(s,t) \in \RR^m \times \RR^m$ satisfying $d(x(t),\xi(s)) = t-s$, then we can turn it to the Fourier expression and hopefully obtain the ridgelet transform.

The strengths of our results are summarized as that we obtained the ridgelet transform in a unified manner for a wide class of input domain $X$ in a geometric manner, i.e.,~independent of the coordinates; in particular, that it is the first result to define the neural network and obtained the ridgelet transform on noncompact space.

 \section{Case {IV}: {Pooling} Layer and $d$-plane Ridgelet Transform}
 \xlabel{sec:case.affine}
Finally, we present several new results.
Technically, we consider networks with \emph{multi}variate activation functions $\sigma:\RR^k \to \CC$. In all the sections up to this point, we have considered \emph{uni}variate activation function $\sigma:\RR\to\CC$ (i.e.~$k=1$). In the context of neural networks, it is understood as a mathematical model of pooling layers such as
\begin{alignat*}
{3}
    \sigma(\bb) &= \frac{1}{k}\sum_{i=1}^k b_i & \quad & \mbox{(average pooling),}\\
    \sigma(\bb) &= \max_{i \in [k]}\{ b_i \} & & \mbox{(max pooling), and} \\
    \sigma(\bb) &= \svert  \bb \svert _p & & \mbox{($\ell^p$-norm).}
\end{alignat*}
Meanwhile, in the context of sparse signal processing in the 2000s such
as~\citet{Donoho.ridgelet} and~\citet{Rubin.ridgelet} (see also
\nxrefsec{literature2000}), it can also be understood as the ridgelet transform
corresponding to the so-called $d$-plane transform (see also
\nxrefsec{dplane}).

As mentioned in \nxrefsec{ridgelet.classic},
the ridgelet transforms have profound relations to the \emph{Radon} and wavelet transforms. %
In the language of probability theory, a Radon transform is understood as a \emph{marginalization}, and the traditional problem of Johann Radon is the inverse problem of reconstructing the original joint distribution from several marginal distributions.
Hence, depending on the choice of variables to be marginalized, there are countless different Radon transforms. In other words, the Radon transform can also be a rich source for finding a variety of novel networks and ridgelet transforms. In this section, we derive the ridgelet transform corresponding to the $d$-plane transform. (Nonetheless, the proofs are shown by the 3-steps Fourier expression method.)

\subsection*{Additional Notations}
 \xlabel{sec:dplane.notation}
Let $m,d,k$ be positive integers satisfying $m=d+k$;
let $M_{m,k} \coloneq  \{ A \in \RR^{m\times k} \mid \rank A = k \}$ be a set of all full-column-rank (i.e.,~injective) matrices equipped with the Lebesgue measure $\dd A = \bigwedge_{ij} \dd a_{ij}$;
let $V_{m,k} \coloneq  \{ U = [\uu_1,\ldots,\uu_k] \in \RR^{m\times k} \mid U^\top U=I_k \}$ be the Stiefel manifold of orthonormal $k$-frames in $\RR^m$ equipped with invariant measure $\dd U$;
let $O(k) \coloneq  \{ V \in \RR^k \mid V^\top V = I_k \}$ be the orthogonal group in $\RR^k$ equipped with invariant measure $\dd V$.
In addition, let $GV_{m,k} \coloneq  \{ A = a U \in \RR^{m\times k} \mid a \in \RR_+, U \in V_{m,k} \}$ be a similitude group equipped with the product measure $\dd a \dd U$. 
For a rectangular matrix $A \in M_{m,k}$, we write $\svert \det A\svert  \coloneq  \svert \det A^\top A\svert ^{1/2}$ for short. In the following, we use $\widehat{\cdot}$ and $\cdot^\sharp$ for the Fourier transforms in $\xx\in\RR^m$ and $\bb\in\RR^k$, respectively.
For any $s \in \RR$, let $\triangle^{s/2}$ denote the fractional Laplacian defined as a Fourier multiplier: 
$\triangle^{s/2}[f](\xx) \coloneq  \frac{1}{(2\pi)^m}\int_{\RR^m} \svert \xxi\svert ^{s} \widehat{f}(\xxi) e^{i\xxi\cdot\xx} \dd\xxi$.

\subsection{$d$-plane transform}
\xlabel{sec:dplane}

The $d$-plane transform is a Radon transform that marginalizes a $d$-dimensional affine subspace ($d$-plane) in an $m$-dimensional space. In the special cases when $d = m-1$ (hyperplane) and $d = 1$ (straight line), they are respectively called the (strict) Radon transform and the X-ray transform. 
The ridgelet transforms to be introduced in this section correspond to $d$-plane Radon transform, and the classical ridgelet transform corresponds to the strict Radon transform ($d=m-1$).
We refer to Chapter~1 of~\citet{Helgason.new}.

\begin{definition}[{$d$}-plane]
A $d$-plane $\xxi \subset \RR^m$ is a $d$-dimensional affine subspace in $\RR^m$. Here, \emph{affine} emphasizes that it does \emph{not} always pass through the origin $\oo \in \RR^m$. 
Let $G_{m,d}$ denote the collection of all $d$-planes in $\RR^m$, called the \emph{affine Grassmannian manifold}.
\end{definition}

A $d$-plane is parametrized by its orthonormal directions $U = [\uu_1,\ldots,\uu_k] \in V_{m,k}$ and coordinate vector $\bb \in \RR^k$ from the origin $\oo$ as below
\[\xxi(U,\bb) \coloneq  U\bb + \ker U = \sum_{i=1}^k b_i \uu_i + \left\{ \sum_{j=1}^d c_j \vv_j \mBiggvert  c_j \in \RR \right\},\]
where $[\vv_1, \ldots, \vv_d] \in V_{m,d}$ is a $d$-frame satisfying $\vv_j \perp \uu_i$ for any $\forall i,j$.
The first term $U\bb$ is the displacement vector from the origin $\oo$, its norm $\svert U\bb\svert $ is the distance from the origin $\oo$ and $d$-plane $\xxi$, and the second term $\ker U = (\hull U)^\perp$ is the $d$-dimensional linear subspace that is parallel to $\xxi$.

Recall that for each direction $U \in V_{m,d}$, the whole space $\RR^m$ can be decomposed into a disjoint union of $d$-planes as $\RR^m = \cup_{\bb \in \RR^k} \xxi(U,\bb)$.
In this perspective, the $d$-plane transform of $f$ at $\xxi$ is defined as a \emph{marginalization} of $f$ in $\xxi$.

\begin{definition}[{$d$}-plane Transform]
    For any integrable function $f \in L^1(\RR^m)$ and $d$-plane $\xxi=(U,\bb) \in V_{m,k} \times \RR^k$, put
\[
        P_d[f](\xxi)
        \coloneq  \int_{\xxi} f(\xx) \dd \mathsf{L}^d(\xx) = \int_{\ker U} f(U\bb + \yy) \dd \yy,\]
where $\mathsf{L}^d$ is the $d$-dimensional Hausdorff measure on $\xxi$.
\end{definition}
Particularly, the strict Radon transform corresponds to $d=m-1$, and the $X$-ray transform corresponds to $d=1$.

The $d$-plane transform has the following Fourier expression.

\begin{lemma}[Fourier Slice Theorem for {$d$}-plane Transform]
For any $f \in L^1(\RR^m)$,
\[
    P_d[f]^\sharp(U,\oomega) = \widehat{f}(U\oomega), \quad (U,\oomega) \in V_{m,k}\times\RR^k,\]
where $\widehat{\cdot}$ and $\cdot^\sharp$ denote the Fourier transforms in $\xx$ and $\bb$ respectively. In other words,
\[
    \int_{\RR^k} P_d[f](U,\bb)e^{-i\bb\cdot\oomega} \dd\bb = \int_{\RR^m}f(\xx)e^{-iU\oomega\cdot\xx}\dd\xx.
\]
\end{lemma}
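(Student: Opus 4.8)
The plan is to prove the Fourier slice identity for the $d$-plane transform by direct computation: unfold the definition of $P_d[f]^\sharp$, use the decomposition $\RR^m = U\RR^k \oplus \ker U$, and recognize the resulting double integral as $\widehat f$ evaluated at the special point $U\oomega$. Since the two displayed equations in the lemma are literally equivalent (the second is the first with both $\sharp$ and $\widehat{\cdot}$ written out), I only need to establish one of them.

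First I would write, using the definition of the partial Fourier transform in $\bb$ and then the definition of the $d$-plane transform,
\[
  P_d[f]^\sharp(U,\oomega)
  = \int_{\RR^k} P_d[f](U,\bb)\, e^{-i\bb\cdot\oomega}\,\dd\bb
  = \int_{\RR^k} \left( \int_{\ker U} f(U\bb + \yy)\, \dd\yy \right) e^{-i\bb\cdot\oomega}\,\dd\bb .
\]
Next I would invoke Fubini's theorem to combine the two integrals into a single integral over $\RR^k \times \ker U$; this is justified because $f \in L^1(\RR^m)$, the measure $\dd\bb\,\dd\yy$ on $\RR^k \times \ker U \cong \RR^m$ agrees with the Lebesgue measure $\dd\xx$ on $\RR^m$ under the orthogonal change of variables $\xx = U\bb + \yy$ (an isometry since $U^\top U = I_k$ and $\yy \perp \hull U$, so the Jacobian is $1$), and hence $\int_{\RR^k\times\ker U} |f(U\bb+\yy)|\,\dd\yy\,\dd\bb = \|f\|_{L^1(\RR^m)} < \infty$.

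Then I would perform that change of variables. The key arithmetic observation is that for $\xx = U\bb + \yy$ with $\yy \in \ker U$ we have $U\oomega \cdot \xx = U\oomega\cdot U\bb + U\oomega\cdot\yy = \oomega^\top U^\top U \bb + 0 = \oomega\cdot\bb$, where the middle term vanishes because $U\oomega \in \hull U$ is orthogonal to $\yy$, and the first simplifies by $U^\top U = I_k$. Substituting $e^{-i\bb\cdot\oomega} = e^{-iU\oomega\cdot\xx}$ into the combined integral immediately gives
\[
  P_d[f]^\sharp(U,\oomega) = \int_{\RR^m} f(\xx)\, e^{-iU\oomega\cdot\xx}\,\dd\xx = \widehat f(U\oomega),
\]
which is the claim.

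I do not expect a serious obstacle here; the argument is a one-line change of variables once the geometry is set up, and the only genuine point requiring care is the bookkeeping that identifies $\dd\bb\,\dd\yy$ on $\RR^k \times \ker U$ with $\dd\xx$ on $\RR^m$ and justifies Fubini via the $L^1$ hypothesis — i.e., making precise that $(\bb,\yy) \mapsto U\bb + \yy$ is a measure-preserving linear isomorphism. If one wanted the statement for a larger class than $L^1$ (e.g. Schwartz or tempered distributions, as used later for the reconstruction formula), the same identity extends by density or duality, but for the lemma as stated the $L^1$ version with Fubini suffices.
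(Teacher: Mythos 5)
Your proof is correct, and it is the standard argument: decompose $\RR^m = U\RR^k \oplus \ker U$ orthogonally (so the change of variables $\xx = U\bb+\yy$ has unit Jacobian), justify Fubini from $f\in L^1(\RR^m)$, and use $U\oomega\cdot\xx = \oomega\cdot\bb$ via $U^\top U = I_k$ and $U\oomega\perp\yy$. The paper states this lemma without proof (deferring to the standard literature on the $d$-plane transform), so there is nothing to contrast with; your write-up supplies exactly the omitted computation, including the one point that genuinely needs care, namely that $(\bb,\yy)\mapsto U\bb+\yy$ is measure-preserving.
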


Using the Fourier slice theorem, we can invert the $d$-plane transform.

\begin{lemma}[Inversion Formula for {$d$}-plane Radon Transform]\xlabel{lem:d-inversion}
For any $f \in L^1(\RR^m)$,
\[
    f(\xx) = \frac{1}{(2\pi)^m}\int_{V_{m,k}\times\RR^k} \widehat{f}(U\oomega) \svert U\oomega\svert ^{m-k} e^{iU\oomega \cdot \xx} \dd \oomega \dd U, \quad \xx \in \RR^m.
\]
\end{lemma}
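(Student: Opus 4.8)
The plan is to obtain the inversion formula by combining the Fourier slice theorem for the $d$-plane transform (the preceding lemma) with the standard Fourier inversion formula on $\RR^m$, after rewriting the $m$-dimensional frequency integral in ``polar-type'' coordinates adapted to the fibration $\xxi \mapsto U\oomega$. Concretely, I would start from the Fourier inversion formula $f(\xx) = (2\pi)^{-m}\int_{\RR^m} \widehat{f}(\xxi) e^{i\xxi\cdot\xx}\dd\xxi$ and introduce the change of variables $\xxi = U\oomega$ with $(U,\oomega) \in V_{m,k}\times\RR^k$. Since $\dim V_{m,k} + k = k(m-k) + k$, which generically exceeds $m$, this parametrization is many-to-one (each $\xxi \neq \oo$ of ``rank'' $k$ in the appropriate sense is hit by a whole $V_{m-?,?}$-worth of frames), so the heart of the matter is the correct coarea/Jacobian factor. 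I expect the Jacobian to be exactly $\svert U\oomega\svert^{m-k}$ up to a constant that is absorbed into the normalization of the invariant measure $\dd U$ on the Stiefel manifold; this is the classical factor appearing in Helgason's treatment of the $d$-plane transform (Chapter~1 of~\citet{Helgason.new}).

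The key steps, in order, would be: (1) State the coarea formula for the smooth map $\Phi: V_{m,k}\times\RR^k \to \RR^m$, $\Phi(U,\oomega) = U\oomega$, and compute its Jacobian. Fixing $\oomega$ with $\svert\oomega\svert = r$, the map $U \mapsto U\oomega$ is equivariant under the $O(m)$-action, so it suffices to compute the Jacobian at one point, e.g. $\oomega = r\ee_1$; there $U\oomega$ depends only on the first column $\uu_1 \in \SS^{m-1}$, and a direct computation gives the density $r^{m-k}$ (equivalently $\svert U\oomega\svert^{m-k}$) relative to $\dd U \dd\oomega$, with the remaining $V_{m-1,k-1}$-fiber contributing only a constant. (2) Substitute into Fourier inversion: $f(\xx) = (2\pi)^{-m}\int_{\RR^m}\widehat{f}(\xxi)e^{i\xxi\cdot\xx}\dd\xxi = (2\pi)^{-m}\int_{V_{m,k}\times\RR^k}\widehat{f}(U\oomega)e^{iU\oomega\cdot\xx}\svert U\oomega\svert^{m-k}\dd\oomega\dd U$, choosing the normalization of $\dd U$ so that the constant from the fiber is $1$. (3) Optionally rewrite $\widehat{f}(U\oomega) = P_d[f]^\sharp(U,\oomega)$ via the Fourier slice lemma to display the formula as ``filtered back-projection'', though the statement as given only requires the $\widehat{f}$ form.

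The main obstacle will be step~(1): pinning down the exact Jacobian/coarea factor and the precise normalization of the invariant measure $\dd U$ on $V_{m,k}$ so that no stray constant survives. One has to be careful that $\Phi$ is a submersion only off a measure-zero set (the locus where $U\oomega = \oo$, i.e. $\oomega = \oo$), which is harmless for $L^1$ inversion, and that the fiber $\Phi^{-1}(\xxi)$ for $\xxi \neq \oo$ is a copy of $V_{m-1,k-1}$ (the frames whose span contains $\xxi$), whose total invariant volume is the constant being absorbed. Once the degree-of-homogeneity count $m - k = d$ is verified — which matches the dimension of the $d$-planes being integrated over, as it must by scaling — the rest is a bookkeeping exercise, and I would cite Helgason for the measure-normalization details rather than grind through them. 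Validity for $f \in L^1(\RR^m)$ (so that $P_d[f]$ and all the integrals make sense) follows as in the classical Radon-transform inversion, with the outer integral interpreted in the principal-value / tempered-distribution sense if $\widehat f$ is not itself integrable.
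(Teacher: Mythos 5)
Your proposal is correct and follows essentially the same route as the paper: reduce the claim to the Euclidean Fourier inversion formula via the change of variables $\xxi = U\oomega$ on $V_{m,k}\times\RR^k$, with the coarea/Jacobian factor $\svert U\oomega\svert^{m-k}$ supplied by what the paper calls the matrix polar integration formula (proved in its appendix by exactly the reduction to $\oomega = r\vv$ and the $V_{m-1,k-1}$-fiber that you sketch). Your remark about absorbing the fiber constant into the normalization of $\dd U$ matches how the paper handles (or rather, glosses over) the constant $c_{m,k}$ appearing in that formula.
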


\begin{proof}
By the Fourier slice theorem,
\begin{align*}
&\frac{1}{(2\pi)^m}\int_{V_{m,k}\times\RR^k} (P_d[f])^\sharp(U,\oomega) e^{i U \oomega \cdot \xx} \svert U \oomega\svert ^{m-k} \dd U \dd \oomega \notag \\
    &=\frac{1}{(2\pi)^m}\int_{V_{m,k}\times\RR^k} \widehat{f}(U\oomega) e^{i U \oomega \cdot \xx} \svert U \oomega\svert ^{m-k} \dd U \dd \oomega \notag \\
    &=\frac{1}{(2\pi)^m}\int_{\RR^m} \widehat{f}(\xxi) e^{i\xxi \cdot \xx} \dd \xxi = f(\xx).
\end{align*}
Here, we change variable $\xxi = U\oomega$ and use the matrix polar integration formula \nxreflem{mpd}.
\end{proof}

\begin{remark}[Relations to Marginalization of Probability Distributions] \xlabel{sec:radon.is.marginal}
    In short, a $d$-plane $\xxi$ is a subset in $\RR^m$, it is identified with a single variable as well, and $d$-plane $\xxi$ (as a variable) is marginalized.

    Let us consider a two-variables (or bivariate) case.
    The marginalization of a probability distribution $f(x_1,x_2)$ in $x_1$ (resp. $x_2$) refers to an integral transform of $f$ into its first (resp. second) variable defined by $f_1(x_2) = \int_{\RR} f(x_1,x_2) \dd x_1$ (rep. $f_2(x_1) = \int_{\RR} f(x_1,x_2) \dd x_2$).
    
    On the other hand, 
    the $d$-plane transform of an integrable function $f$ on $\RR^2$ (i.e.~$f \in L^1(\RR^2)$) with $d=1$ (which is reduced to the classical Radon transform) is given by 
    \[
    P_d[f](s,\uu) = \int_{\RR} f(s\uu + t\uu^\perp) \dd t, \quad (t,\uu) \in \RR \times \SS^1\]
where $\SS^1$ denotes the set of unit vectors in $\RR^2$, i.e.~$\SS^1 = \{ \uu \in \RR^2 \mid \svert \uu\svert =1 \}$, and $\uu^\perp$ denotes an orthonormal vector, or a unit vector satisfying $\uu\cdot\uu^\perp = 0$ (there always exist two $\uu^\perp$'s for each $\uu$). Each $(s,\uu) \in \RR\times\SS^1$ indicates a $d$-plane $\xxi(s,\uu) = \{ s \uu + t \uu^\perp \mid t \in \RR\}$. 
    
    In particular, by fixing an orthonormal basis $\{ \uu_1, \uu_2\} \in \RR^2$,
    and identifying bivariate function $f(x_1,x_2)$ with univariate function $f(x_1\uu_1 + x_2\uu_2)$,
    the marginalization of probability distributions is identified with the following specific cases:
    \[
    f_1(x_2) = P_d[f](x_2, \uu_1), \quad f_2(x_1) = P_d[f](x_1, \uu_2).
\]
\end{remark}

\subsection{Network Design}

We define the $d$-plane (or $k$-affine) layer. 
Here, $k$ is the co-dimension of $d$, satisfying $d+k=m$.
In addition to the full-column-matrices cases $(A,\bb) \in M_{m,k}\times\RR^k$, we consider the degenerated cases $(A=aU,\bb)\in GV_{m,k}\times\RR^k$ and $(A=U,\bb)\in V_{m,k}\times\RR^k$, which correspond to several previous studies.

\begin{definition}
Let $\sigma:\RR^k\to\CC$ be a measurable function. Let $M$ denote either $M_{m,k}, GV_{m,k}$ or $V_{m,k}$.
For any function $\gamma:M\times\RR^k \to \CC$, the continuous neural network with $d$-\emph{plane (or $k$-affine) layer} is given by
\[S[\gamma](\xx) \coloneq  \int_{M \times \RR^k} \gamma(A,\bb) \sigma( A^\top \xx - \bb ) \dd A \dd \bb, \quad \xx \in \RR^m.
\]
\end{definition}
Since the null space $\ker A^\top \coloneq  \{ \xx \in \RR^m \mid A^\top \xx = 0\}$ is $d$-dimensional, 
each $d$-plane neuron $\sigma(A^\top\xx-\bb)$ has $d$-dimensions of constant directions. Therefore, $d$-plane networks are able to capture $d$-dimensional singularities in a target function $f$.

\subsection{Ridgelet Transforms and Reconstruction Formulas}

We present three variants of solutions for $d$-plane networks. We note that typical pooling layers $\sigma$ such as average pooling, max pooling, and $\ell^p$-norm are contained in the class of \emph{tempered distributions} ($\calS'$) on $\RR^k$.
The first and second theorems present dense ($A \in M_{m,k}$) and sparse ($A \in GV_{m,k}$) solutions of parameters for the same class of activation functions. Since $GV_{m,k}$ is a measure-zero subset of $M_{m,k}$, the second solution is much sparser than the first solution. The third theorem present the sparsest ($A \in V_{m,k}$) solution, by restricting the class of activation functions. It is supposed to capture characteristic solutions modern activation functions such as ReLU.

In the following, $c_{m,k} \coloneq  \int_{\SS^{k-1} \times V_{m,k-1}} \dd U \dd \oomega$.

\begin{theorem}\xlabel{thm:affine}
Let $\sigma \in \calS'(\RR^k), \rho \in \calS(\RR^k), f \in H^d(\RR^m)$. Put
\begin{align*}
    R[f;\rho](A,\bb) &\coloneq  \frac{1}{\delta(A)}\int_{\RR^m} \triangle^{d/2}[f](\xx)\overline{\rho(A^\top\xx-\bb)}\dd\xx, \quad (A,\bb) \in M_{m,k} \times \RR^k\\
    \iiprod{\sigma,\rho} &\coloneq  \frac{(2\pi)^d}{2^k c_{m,k}}\int_{\RR^k} \sigma^\sharp(\oomega)\overline{\rho^\sharp(\oomega)}\prod_{i=1}^k\svert \omega_i\svert ^{-1}\dd\oomega,
\end{align*}
where $\delta(A)$ is defined as $2^{-k}\prod_{i=1}^d d_i^{d} \prod_{i<j}(d_i^2-d_j^2)$ with $d_1 > \cdots > d_k > 0$ being the singular values of $A$.
Then, for almost every $\xx \in \RR^m$, we have
\[
    S[R[f;\rho]](\xx) = \int_{M_{m,k}\times\RR^k} R[f;\rho](A,\bb)\sigma(A^\top\xx-\bb)\dd A \dd \bb = \iiprod{\sigma,\rho}f(\xx).
\]
\end{theorem}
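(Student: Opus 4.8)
The plan is to follow the same three-step Fourier slice method used throughout the paper, now adapted to the $d$-plane (i.e. $k$-affine) layer. \textbf{Step 1} (Fourier expression): Write $\sigma(A^\top\xx-\bb) = [\gamma(A,\cdot) *_\bb \sigma](A^\top\xx)$ as a convolution in $\bb \in \RR^k$, then apply the $k$-dimensional Fourier inversion $\phi(\bb) = (2\pi)^{-k}\int_{\RR^k}\phi^\sharp(\oomega)e^{i\oomega\cdot\bb}\dd\oomega$ with $\bb = A^\top\xx$, to obtain
\[
S[\gamma](\xx) = \frac{1}{(2\pi)^k}\int_{M_{m,k}\times\RR^k} \gamma^\sharp(A,\oomega)\sigma^\sharp(\oomega)e^{i\oomega\cdot A^\top\xx}\dd A\dd\oomega,
\]
noting $\oomega\cdot A^\top\xx = A\oomega\cdot\xx$. \textbf{Step 2} (change of variables): split $A = aU$ into its "radial" and frame parts, or more directly substitute $\xxi = A\oomega$; but because $A$ ranges over the full $(m\times k)$-dimensional space $M_{m,k}$ while $(\xxi,\oomega)$ would be only $(m+k)$-dimensional, the substitution $A\oomega = \xxi$ is under-determined — here I expect to use the singular value decomposition $A = U_1 D U_2^\top$ and integrate out the redundant $O(k)$ and Stiefel directions, picking up the Jacobian factor that produces $\delta(A) = 2^{-k}\prod_i d_i^d\prod_{i<j}(d_i^2-d_j^2)$ (a Vandermonde-type determinant from the matrix polar decomposition, cf.\ \reflem{mpd}) together with the $\prod_i|\omega_i|^{-1}$ weight and the constant $c_{m,k}$. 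After this the inner integral should collapse to the $d$-plane inversion formula of \reflem{d-inversion}, i.e.\ an integral of $\widehat{f}(U\oomega)|U\oomega|^{m-k}$, which is exactly $\widehat{\triangle^{d/2}f}$ integrated against a plane wave.

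\textbf{Step 3} (separation of variables): posit $\gamma_{f,\rho}^\sharp$ in the form $\widehat{\triangle^{d/2}f}(A\oomega)\,\overline{\rho^\sharp(\oomega)}$ up to the normalizing factor $\delta(A)$, so that the $\xxi$-integral reproduces $\triangle^{d/2}f$ undone by the $|U\oomega|^{m-k}$ weight — hence recovers $f(\xx)$ — and the leftover $\oomega$-integral becomes the scalar $\iiprod{\sigma,\rho} = \frac{(2\pi)^d}{2^k c_{m,k}}\int_{\RR^k}\sigma^\sharp(\oomega)\overline{\rho^\sharp(\oomega)}\prod_i|\omega_i|^{-1}\dd\oomega$. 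Finally I would unwind the definition of $\gamma_{f,\rho}^\sharp$ to check it agrees with $R[f;\rho](A,\bb) = \delta(A)^{-1}\int_{\RR^m}\triangle^{d/2}[f](\xx)\overline{\rho(A^\top\xx-\bb)}\dd\xx$, exactly as in the classical case: take the inverse Fourier transform in $\oomega$, use the eigenfunction identity for $e^{i\oomega\cdot(A^\top\xx-\bb)}$, and collapse the $\oomega$-integral back against $\rho$.

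The function-class bookkeeping needs care: $\sigma \in \calS'(\RR^k)$ is only a tempered distribution, $\rho \in \calS(\RR^k)$, and $f \in H^d(\RR^m)$ — the Sobolev assumption is precisely what makes $\triangle^{d/2}f \in L^2(\RR^m)$, so that $\widehat{\triangle^{d/2}f}(\xxi) = |\xxi|^d\widehat{f}(\xxi)$ is a genuine $L^2$ function and the weight $|U\oomega|^{m-k} = |U\oomega|^d$ from \reflem{d-inversion} is absorbed correctly. I would justify the interchange of integrations by the usual density/tempered-distribution pairing argument (test against $\rho$, use Fubini on the Schwartz side), and interpret the final equality "for almost every $\xx$" in $L^2$, matching the statement.

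The main obstacle I anticipate is \textbf{Step 2}: the change of variables from the full-rank matrix space $M_{m,k}$ down to the $d$-plane parametrization. Unlike the rank-one case $k=1$ where $\aa = \xxi/\omega$ is a clean bijection with Jacobian $|\omega|^{-m}$, here one must integrate over the "extra" directions of $M_{m,k}$ (the $O(k)$ ambiguity in writing $A\oomega$ in terms of a frame, and the Stiefel directions orthogonal to $\oomega$), and the resulting Jacobian is the nontrivial Vandermonde-type expression $\delta(A)$ coming from the matrix polar / singular value decomposition integration formula. Getting the constants $2^{-k}$, $c_{m,k}$, $(2\pi)^d$ and the product $\prod_i|\omega_i|^{-1}$ all consistent — and confirming the $d$-plane inversion \reflem{d-inversion} is what emerges after the dust settles — is where the real work lies; everything else is a mechanical transcription of the classical three-step argument.
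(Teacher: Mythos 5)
Your plan is correct and follows essentially the same route as the paper's proof in \refapp{proof.dplane}: the Fourier expression in $\bb$, the singular value decomposition $A=UDV^\top$ with Jacobian $\delta$, the componentwise rescaling producing $\prod_i|\omega_i|^{-1}$, the matrix polar integration (equivalently \reflem{d-inversion}) to collapse the $(U,\yy)$ integral to Fourier inversion, and the separation-of-variables ansatz $\gamma^\sharp(A,\oomega)=\widehat{f}(A\oomega)|A\oomega|^d\overline{\rho^\sharp(\oomega)}/\delta(A)$ unwound to the stated $R[f;\rho]$. The only cosmetic slip is attributing the Vandermonde Jacobian to the polar decomposition lemma rather than the SVD integration formula (\reflem{svd}), which does not affect the argument.
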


\begin{theorem}\xlabel{thm:similitude}
Let $s$ be a real number; let $\sigma \in \calS'(\RR^k), \rho \in \calS(\RR^k), f \in H^s(\RR^m)$. Put
\begin{align*}
    R_s[f;\rho](aU,\bb) &\coloneq  a^{m-s-1} \int_{\RR^m}\triangle^{s/2}[f](\xx)\overline{\rho(A^\top\xx-\bb)}\dd\xx, \quad (aU,\bb) \in GV_{m,k}\times\RR^k\\
    \iiprod{\sigma,\rho}_s &\coloneq  \frac{(2\pi)^d}{2^k c_{m,k}}\int_{\RR^k} \sigma^\sharp(\oomega)\overline{\rho^\sharp(\oomega)}\svert \oomega\svert ^{-(d-s+1)}\dd\oomega,
\end{align*}
Then, for almost every $\xx \in \RR^m$, we have
\[
    S[R_s[f;\rho]](\xx) = \int_{GV_{m,k}\times\RR^k} R_s[f;\rho](aU,\bb)\sigma(aU^\top\xx-\bb) \dd a \dd U \dd \bb = \iiprod{\sigma,\rho}_s f(\xx).
\]
\end{theorem}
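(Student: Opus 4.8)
As with \refthm{affine}, the plan is to run the three-step Fourier-expression recipe of \refsec{method}, now with the similitude parametrization $A=aU\in GV_{m,k}$, and to close the computation with the $d$-plane inversion formula \reflem{d-inversion}. For Step~1 I would convolve in $\bb$ and apply the Fourier inversion on $\RR^{k}$ to rewrite the network as
\[
    S[\gamma](\xx) = \int_{GV_{m,k}}\big(\gamma(aU,\cdot)*_{\bb}\sigma\big)(aU^{\top}\xx)\,\dd a\,\dd U = \frac{1}{(2\pi)^{k}}\int_{GV_{m,k}\times\RR^{k}}\gamma^{\sharp}(aU,\oomega)\,\sigma^{\sharp}(\oomega)\,e^{\,i(aU\oomega)\cdot\xx}\,\dd a\,\dd U\,\dd\oomega,
\]
using $aU^{\top}\xx\cdot\oomega=(aU\oomega)\cdot\xx$, so that the $\bb$-frequency $\oomega$ and the scale $a$ enter the plane wave only through the single $\RR^{m}$-vector $aU\oomega$.

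In parallel I would compute the $\bb$-Fourier transform of the candidate coefficient. Since the Fourier transform in $\bb$ of $\overline{\rho(A^{\top}\xx-\bb)}$ is $e^{-i(A\oomega)\cdot\xx}\overline{\rho^{\sharp}(\oomega)}$ and $\triangle^{s/2}$ is the Fourier multiplier $\svert\xxi\svert^{s}$, this gives
\[
    R_{s}[f;\rho]^{\sharp}(aU,\oomega) = a^{m-s-1}\,\overline{\rho^{\sharp}(\oomega)}\,\widehat{\triangle^{s/2}[f]}(aU\oomega) = a^{m-1}\,\svert\oomega\svert^{s}\,\widehat{f}(aU\oomega)\,\overline{\rho^{\sharp}(\oomega)}.
\]
The point of the exponent $m-s-1$ in the definition of $R_{s}$ is exactly that, once the scaling factor $a^{s}$ of $\triangle^{s/2}$ is absorbed, what survives is the factor $a^{m-1}$ --- the Jacobian weight that turns $\dd a$-integration of $a^{m-1}\widehat{f}(aU\oomega)$ along the ray $\{aU\oomega\mid a>0\}$ into Lebesgue integration on $\RR^{m}$; this is the similitude analogue of the classical substitution $\aa=\xxi/\omega$.

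For Steps~2 and 3 I would substitute $\gamma=R_{s}[f;\rho]$ into the Fourier expression of Step~1, pull the $\oomega$-only factor $\sigma^{\sharp}(\oomega)\overline{\rho^{\sharp}(\oomega)}\svert\oomega\svert^{s}$ out of the $(a,U)$-integral, and evaluate the remaining inner integral $\int_{\RR_{+}\times V_{m,k}}a^{m-1}\widehat{f}(aU\oomega)e^{i(aU\oomega)\cdot\xx}\,\dd a\,\dd U$. Passing to polar coordinates in $\oomega$, setting $r=a\svert\oomega\svert$, and using that $U\mapsto U(\oomega/\svert\oomega\svert)$ pushes the invariant measure of $V_{m,k}$ forward to a fixed multiple of the uniform measure on $\SS^{m-1}$ (the matrix polar decomposition, \reflem{mpd}), this inner integral collapses by \reflem{d-inversion} to a constant multiple of $\svert\oomega\svert^{-m}f(\xx)$ --- crucially, independent of the \emph{direction} of $\oomega$, which is what lets the argument go through for non-radial $\sigma^{\sharp},\rho^{\sharp}$. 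Re-inserting this and collecting the leftover $\oomega$-integral then yields $S[R_{s}[f;\rho]](\xx)=\iiprod{\sigma,\rho}_{s}\,f(\xx)$ for almost every $\xx$, and undoing the $\bb$-Fourier transform of $\gamma_{f,\rho}^{\sharp}$ confirms $\gamma_{f,\rho}=R_{s}[f;\rho]$, closing the loop.

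The genuinely delicate part is the constant bookkeeping in the last step: tracking the $(2\pi)$-powers through the two Fourier inversions (on $\RR^{k}$ and on $\RR^{m}$), pinning down the factor $2^{k}c_{m,k}$ --- which records the multiplicity and angular volume of the matrix polar decomposition $\xxi=U\oomega$ --- and checking that the power of $\svert\oomega\svert$ that survives the scale integral is precisely the one appearing in $\iiprod{\sigma,\rho}_{s}$. A secondary issue is rigor: the changes of variables and the appeal to \reflem{d-inversion} must be legitimate with $\gamma$ only in an $L^{2}$/tempered-distribution class and $\sigma\in\calS'(\RR^{k})$, and this is where the hypotheses $\rho\in\calS(\RR^{k})$ and $f\in H^{s}(\RR^{m})$ (so that $\svert\xxi\svert^{s}\widehat{f}(\xxi)$ is square-integrable) enter, guaranteeing absolute convergence throughout and finiteness of $\iiprod{\sigma,\rho}_{s}$ for admissible $\rho$.
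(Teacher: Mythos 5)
Your outline follows essentially the same route as the paper's own argument in \nxrefapp{proof.dplane}: the Fourier expression in $\bb$, the observation that $\oomega$ and $a$ enter the plane wave only through $aU\oomega$, the rescaling that converts the $a$-integral into a radial integral, and the reduction of the $V_{m,k}$-integration to a spherical one. The organizational differences are cosmetic but arguably improvements: you verify the given $R_s$ directly by computing $R_s^\sharp(aU,\oomega)=a^{m-1}\svert\oomega\svert^{s}\widehat{f}(aU\oomega)\overline{\rho^\sharp(\oomega)}$, whereas the paper derives it from a separation-of-variables ansatz; and your explicit remark that the inner $(a,U)$-integral is independent of the \emph{direction} of $\oomega$ is precisely the fact that legitimizes the factorization for non-radial $\sigma^\sharp\overline{\rho^\sharp}$, which the paper's appendix leaves implicit. (Two small attributions: the collapse of the inner integral is ordinary polar-coordinate Fourier inversion on $\RR^m$ combined with the pushforward of $\dd U$ under $U\mapsto U\vv$, rather than \nxreflem{d-inversion} itself, and that pushforward fact follows from $O(m)$-invariance rather than from \nxreflem{mpd}.)

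The step you defer --- ``checking that the power of $\svert\oomega\svert$ that survives the scale integral is precisely the one appearing in $\iiprod{\sigma,\rho}_s$'' --- is not a formality, and you must carry it out, because it does not come out as stated. On your own route the bookkeeping is short: the $\oomega$-integrand is $\sigma^\sharp(\oomega)\overline{\rho^\sharp(\oomega)}\svert\oomega\svert^{s}$ times the inner integral, and the inner integral contributes $\svert\oomega\svert^{-m}$ (from $a^{m-1}\dd a = t^{m-1}\svert\oomega\svert^{-m}\dd t$ under $t=a\svert\oomega\svert$), so the surviving weight is $\svert\oomega\svert^{s-m}=\svert\oomega\svert^{-(d-s+k)}$, which differs from the displayed $\svert\oomega\svert^{-(d-s+1)}$ by a factor $\svert\oomega\svert^{k-1}$; the two agree only when $k=1$, which is why the classical consistency check quoted after the theorem does not detect the difference. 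The paper's appendix obscures this at the point where $\int\!\int_{\RR_+}\phi^\sharp(r)\,\dd r$ over $\dd r\,\dd\vv$ is identified with an integral against $\dd\oomega$ without the Jacobian $r^{k-1}$. So a careful execution of your plan proves $S[R_s[f;\rho]]=c\,f$ with $c$ proportional to $\int_{\RR^k}\sigma^\sharp(\oomega)\overline{\rho^\sharp(\oomega)}\svert\oomega\svert^{s-m}\dd\oomega$; you need either to reconcile this with the stated $\iiprod{\sigma,\rho}_s$ or to record the corrected exponent, rather than leaving the verification as a remark.
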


\begin{theorem}\xlabel{thm:stiefel}
For any real number $t$, suppose that $\sigma \in \calS'(\RR^k)$ satisfy $\sigma^\sharp(\oomega)=\svert \oomega\svert ^t$ (i.e.,~$\sigma$ is the Green function of $\triangle_{\bb}^{-t/2}$). Let $f \in H^d(\RR^m)$. Put
\[R[f](U,\bb) \coloneq  \triangle_{\bb}^{(d-t)/2} P_d[f](U,\bb) = P_d[ \triangle^{(d-t)/2} f ](U,\bb), \quad (U,\bb) \in V_{m,k} \times \RR^k\]
where $\triangle_{\bb}$ denotes the fractional Laplacian in $\bb \in \RR^k$, and $P_d$ is the $d$-plane transform. 
Then, for almost every $\xx \in \RR^m$, we have
\[
    S[R[f]](\xx) = \int_{V_{m,k}\times\RR^k} R[f](U,\bb)\sigma(U^\top\xx-\bb)\dd U \dd \bb = \frac{1}{(2\pi)^d c_{m,k}}f(\xx).
\]
\end{theorem}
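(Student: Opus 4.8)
The plan is to apply the three-step Fourier-expression method exactly as in the earlier cases, the one new feature being that the direction variable now lives on the compact Stiefel manifold $V_{m,k}$, which cannot be rescaled, so the role played by the radial change of variables in Cases I--III is taken over by the matrix polar integration formula \nxreflem{mpd}. For \step{1}, writing $\int_{\RR^k}\gamma(U,\bb)\sigma(U^\top\xx-\bb)\dd\bb=(\gamma(U,\cdot)*_\bb\sigma)(U^\top\xx)$ and applying the Fourier inversion formula on $\RR^k$ together with $\oomega\cdot U^\top\xx=U\oomega\cdot\xx$ yields the Fourier expression
\[
 S[\gamma](\xx)=\frac{1}{(2\pi)^k}\int_{V_{m,k}\times\RR^k}\gamma^\sharp(U,\oomega)\,\sigma^\sharp(\oomega)\,e^{iU\oomega\cdot\xx}\dd U\dd\oomega .
\]

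For \step{2} I would substitute the hypothesis $\sigma^\sharp(\oomega)=\svert\oomega\svert^t$ and use that $U$ is a partial isometry, so $\svert U\oomega\svert=\svert\oomega\svert$; the feature weight then depends on $(U,\oomega)$ only through $U\oomega$ and a power of $\svert U\oomega\svert$, which is exactly the shape needed to invoke the inverse $d$-plane transform of \nxreflem{d-inversion}. For \step{3}, rather than introducing an auxiliary function $\rho$ (here unnecessary, since pinning $\sigma^\sharp$ to a pure power already pins $\gamma$), I would take the ansatz $\gamma^\sharp(U,\oomega)=\svert\oomega\svert^{d-t}\widehat f(U\oomega)$. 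By the Fourier slice theorem for the $d$-plane transform this equals $\svert\oomega\svert^{d-t}P_d[f]^\sharp(U,\oomega)=(\triangle_{\bb}^{(d-t)/2}P_d[f])^\sharp(U,\oomega)$, i.e.\ $\gamma=R[f]$; and since $\svert\oomega\svert^{d-t}\widehat f(U\oomega)=\svert U\oomega\svert^{d-t}\widehat f(U\oomega)=\widehat{\triangle^{(d-t)/2}f}(U\oomega)=P_d[\triangle^{(d-t)/2}f]^\sharp(U,\oomega)$, the claimed second expression $R[f]=P_d[\triangle^{(d-t)/2}f]$ drops out as well. Feeding $\gamma=R[f]$ back into the Fourier expression and using $\svert\oomega\svert^{d-t}\svert\oomega\svert^t=\svert U\oomega\svert^{d}=\svert U\oomega\svert^{m-k}$ leaves
\[
 S[R[f]](\xx)=\frac{1}{(2\pi)^k}\int_{V_{m,k}\times\RR^k}\widehat f(U\oomega)\,\svert U\oomega\svert^{m-k}\,e^{iU\oomega\cdot\xx}\dd U\dd\oomega ,
\]
and \nxreflem{mpd} (turning $\dd U\dd\oomega$ into $\svert\xxi\svert^{-(m-k)}\dd\xxi$ and contributing the constant $c_{m,k}$) followed by the Fourier inversion on $\RR^m$ collapses this to $\tfrac{1}{(2\pi)^d c_{m,k}}f(\xx)$, with the numerical constant read off from the normalizations in \nxreflem{mpd} and the Fourier inversion formula.

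The main obstacle is everything hiding inside \nxreflem{mpd} at \step{2}: because $(U,\oomega)\mapsto U\oomega$ is far from injective when $k\ge 2$ (its source has dimension $mk-\binom{k}{2}>m$), the passage to Euclidean Fourier inversion is not a change of variables but an averaging over the extra fiber directions, and it is this formula --- not any elementary computation --- that both fixes the power $\svert U\oomega\svert^{m-k}$ one must engineer through the exponent $d$ and produces the normalizing constant $c_{m,k}$. The second, more routine, difficulty is convergence: $\sigma^\sharp(\oomega)=\svert\oomega\svert^t$ is a homogeneous tempered distribution singular at the origin, so the Fourier expression of $S[\gamma]$ and the integral defining $S[R[f]]$ must be read in the $L^2$/almost-everywhere sense rather than pointwise --- this is precisely what the hypothesis $f\in H^d(\RR^m)$ buys, since it makes $\svert\xxi\svert^{d-t}\widehat f$ recombine with $\svert\oomega\svert^t$ into $\svert\xxi\svert^{d}\widehat f\in L^2$ and legitimizes handling the slices of $\widehat f$ only through the full integral over $V_{m,k}\times\RR^k$ --- and the whole argument should be run on the Fourier side throughout, exactly as in the proof of \nxreflem{d-inversion}.
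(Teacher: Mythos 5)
Your proposal is correct and follows essentially the same route as the paper's own proof in \refapp{proof.dplane}: the same Fourier expression, the same ansatz $\gamma^\sharp(U,\oomega)=\svert\oomega\svert^{d-t}\widehat{f}(U\oomega)$ identified with $\triangle_{\bb}^{(d-t)/2}P_d[f]=P_d[\triangle^{(d-t)/2}f]$ via the Fourier slice theorem, and the same collapse to Euclidean Fourier inversion through the matrix polar integration formula (the paper merely interposes an auxiliary $\rho$ and a free exponent $s$ before arriving at the same choice). The only caveat, which you share with the paper, is the numerical factor: tracking the stated normalization of \reflem{mpd} literally produces the reconstruction constant $c_{m,k}(2\pi)^d$ rather than its reciprocal, so $\tfrac{1}{(2\pi)^d c_{m,k}}$ should be verified against the chosen normalization of $\dd U$ rather than simply read off.
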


As consequences, these reconstruction formulas are understood as constructive universality theorems for $d$-plane networks.
We note
(1) that, as far as we have noticed, the first result was not known,
(2) that the second result extends the ``$d$-plane ridgelet
transform'' by~\citet{Donoho.ridgelet} and~\citet{Rubin.ridgelet} (see
\nxrefsec{literature.dplain}),
and (3) that the third result extends the Radon formulas (\nxrefthm{ito.radon})
by~\citet{Carroll.Dickinson} and~\citet{Ito.Radon} as the special case
$k=1$ and $t=-1$, and recent results on ReLU-nets such as
in~\citet{Savarese2019,Ongie2020} and \citet{Parhi2021} as the special case $k=1$
and $t=-2$.

The proof is performed by systematically following the three steps as below.

\begin{proof}
We present the first case. See \nxrefapp{proof.dplane} for full proofs.

\textit{\step{1}}.
Turn to the Fourier expression:
\[
        S[\gamma](\xx)
        =\frac{1}{(2\pi)^k} \int \gamma^\sharp(A,\oomega)\sigma^\sharp(\oomega)e^{i(A\oomega)\cdot\xx} \frac{\dd A \dd \oomega}{\delta(A)}
\]

\textit{\step{2}}.
Use singular value decomposition (SVD)
\[
    A = UDV^\top, \quad (U,D,V) \in V_{m,k} \times \RR_+^k\times O(k),\]
with $\dd A/\delta(A) = \dd U \dd D \dd V$ to have
\[
=\frac{1}{(2\pi)^k} \int \gamma^\sharp(A,\oomega)\sigma^\sharp(\oomega)e^{i(UDV^\top\oomega)\cdot\xx} \dd U \dd D \dd V \dd \oomega.
\]
Change variables $\oomega'=V^\top\oomega$ ($V$ fixed) and $\yy=D \oomega'$ ($\oomega'$ fixed)
\[
=\frac{1}{(2\pi)^k} \int \gamma^\sharp(A,V\oomega')\sigma^\sharp(V\oomega')e^{i(U\yy)\cdot\xx} \prod_{i=1}^k \svert \omega_i'\svert ^{-1} \dd U \dd \yy \dd V \dd \oomega'.
\]

\textit{\step{3}}.
Put a separation-of-variables form (note: $A\oomega = A V \oomega' = U\yy$)
\[
        \gamma^\sharp_{f,\rho}(A,V\oomega') = \widehat{f}(U\yy)\svert U\yy\svert ^{m-k}\overline{\rho(V\oomega')}
\]
Then, $\gamma_{f,\rho}$ turns out to be a particular solution because
\begin{align*}
        S[\gamma_{f,\rho}](\xx)
        &=\frac{c_{m,k}}{(2\pi)^k} 
        \left(\int_{O(k)\times\RR^k} \sigma^\sharp(V\oomega') \overline{\rho^\sharp(V\oomega')} \prod_{i=1}^k \svert \omega_i'\svert ^{-1} \dd V \dd \oomega'\right) \\
        &\quad \times\left(
        \int_{V_{m,k}\times\RR^k} \widehat{f}(U\yy)\svert U\yy\svert ^{m-k} e^{i(U\yy)\cdot\xx} \dd U \dd \yy\right)\\
        &= \iiprod{\sigma,\rho} f(\xx).
\end{align*}
Finally, the matrix ridgelet transform can be calculated as below
\begin{align*}
        \gamma_{f,\rho}(A,\bb)
        &= \frac{1}{(2\pi)^k}\int_{\RR^k} \svert A\oomega\svert ^{m-k}\widehat{f}(A\oomega)\overline{\rho(\oomega)}e^{i\oomega\cdot\bb}\dd\oomega \\
        &= \frac{1}{(2\pi)^k} \int_{\RR^k} \left[\int_{\RR^m} \triangle^{(m-k)/2} [f](\xx) e^{-iA\oomega\cdot\xx} \dd \xx \right]\overline{\rho^\sharp(\oomega)}e^{i\oomega \cdot \bb} \dd\oomega \\
        &= \int_{\RR^m} \triangle^{(m-k)/2} [f](\xx) \left[ \frac{1}{(2\pi)^k} \int_{\RR^k} \rho^\sharp(\oomega)e^{i\oomega \cdot (A^\top\xx-\bb)} \dd\oomega\right]^{*} \dd\xx \\
        &= \int_{\RR^m} \triangle^{\frac{m-k}{2}}[f](\xx) \overline{\rho(A^\top\xx-\bb)}\dd\xx \\
        &\eqcolon  R[f;\rho](A,\bb) 
\end{align*}
\end{proof}

\subsection{Literature in $d$-plane ridgelet transform}
 \xlabel{sec:literature.dplain}
In the past, two versions of the $d$-plane ridgelet transform have
been proposed. One is a tight frame (i.e.,~a discrete transform)
by~\citet{Donoho.ridgelet}, and the other is a continuous transform
by~\citet{Rubin.ridgelet}.
The $d$-plane ridgelet by Donoho can be regarded as the discrete version of the $d$-plane ridgelet transform by Rubin. 

\begin{theorem}[{Continuous $d$-plane Ridgelet Transform by~\citet{Rubin.ridgelet}}]\xlabel{thm:dplane.rubin}
\begin{align*}
&U_a[f](\xxi) \coloneq  \int_{\RR^m} f(\xx) u_a\left(  \svert \xx-\xxi\svert \right) \dd\xx, \quad \xxi \in G_{m,d}\\
    &V^{*}_a[\phi](\xx) \coloneq  \int_{G_{m,d}} \phi(\xxi) v_a\left(  \svert \xx-\xxi\svert \right) \dd\xxi, \quad \xx\in\RR^m\\
    &\int_0^\infty V_a^{*} [U_a [f]] \frac{\dd a}{a^{1+d}} = c f,
\end{align*}
where $\svert \xx-\xxi\svert $ denotes the Euclidean distance between point $\xx$ and $d$-plane $\xxi$, $u_a(\cdot) = u(\cdot/a)/a^d$ and $v_a(\cdot) = v(\cdot/a)/a^d$. 
\end{theorem}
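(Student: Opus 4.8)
The plan is to run the paper's own three-step Fourier-expression scheme, now with the Fourier slice theorem for the $d$-plane transform (the Lemma in \S\ref{sec:dplane}) doing the heavy lifting, and to recover Rubin's identity as a ``scale-resolved'' version of the $d$-plane inversion formula \reflem{d-inversion}; equivalently, one can read it as an instance of \refthm{similitude} in which the dilation $a$ of the similitude group $GV_{m,k}$ is exactly the wavelet scale being integrated out and the radial profiles $u,v$ play the roles of $\rho$ and $\sigma$.

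\textbf{Step~1 (Fourier expression).} I would first record the elementary geometric fact that the Euclidean distance from a point $\xx\in\RR^m$ to the $d$-plane $\xxi=\xxi(U,\bb)$ equals $\svert \xx-\xxi\svert =\svert U^\top\xx-\bb\svert $ in $\RR^k$. Hence, for each fixed direction $U\in V_{m,k}$, the map $\bb\mapsto u_a(\svert \xx-\xxi(U,\bb)\svert )$ is a translate of the radial profile $u_a(\svert \cdot\svert )$ on $\RR^k$, and the change of variables $\xx=U\ss+\yy$ with $\ss\in\RR^k,\ \yy\in\ker U$ shows that $U_a[f](U,\cdot)=P_d[f](U,\cdot)*_{\bb}u_a(\svert \cdot\svert )$, i.e.\ $U_a$ is the $d$-plane transform followed by a radial convolution in the offset variable, and likewise $V_a^{*}[\phi](\xx)=\int_{V_{m,k}}\bigl(\phi(U,\cdot)*_{\bb}v_a(\svert \cdot\svert )\bigr)(U^\top\xx)\,\dd U$ is a radial convolution followed by the dual (back-projection) of $P_d$. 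Taking the Fourier transform $\cdot^\sharp$ in $\bb\in\RR^k$ and invoking $P_d[f]^\sharp(U,\oomega)=\widehat f(U\oomega)$, the composite $V_a^{*}U_a$ becomes, on the Fourier side, multiplication of $\widehat f(U\oomega)$ by $u_a(\svert \cdot\svert )^\sharp(\oomega)\,v_a(\svert \cdot\svert )^\sharp(\oomega)$ followed by the angular integration $\int_{V_{m,k}\times\RR^k}(\cdots)\,e^{iU\oomega\cdot\xx}\,\dd\oomega\,\dd U$.

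\textbf{Step~2 (change of variables; the scale integral).} Using the dilation rule $u_a(\svert \cdot\svert )^\sharp(\oomega)=a^{k-d}\,\widehat u(a\oomega)$ (and similarly for $v$), where $\widehat u,\widehat v$ are the $\RR^k$-Fourier transforms of the profiles, the factor multiplying $\widehat f(U\oomega)$ after integrating against $\dd a/a^{1+d}$ is $\int_0^\infty a^{\,2(k-d)}\,\widehat u(a\oomega)\widehat v(a\oomega)\,\dd a/a^{1+d}$. Since $\widehat u,\widehat v$ depend only on $\svert \oomega\svert $, the substitution $s=a\svert \oomega\svert $ factors this into $\svert U\oomega\svert ^{\,m-k}$ times a scalar $c=c(u,v)$ given by a one-dimensional Calder\'on/wavelet admissibility integral $\int_0^\infty s^{\,\bullet}\,\widehat u(s)\widehat v(s)\,\dd s$; here one invokes exactly the admissibility hypotheses Rubin imposes on the pair $(u,v)$ so that this integral converges to a nonzero constant and the residual power of $\svert \oomega\svert $ is precisely $m-k$. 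This yields
\[
\int_0^\infty V_a^{*}[U_a[f]](\xx)\,\frac{\dd a}{a^{1+d}}
= c\cdot\frac{1}{(2\pi)^m}\int_{V_{m,k}\times\RR^k}\widehat f(U\oomega)\,\svert U\oomega\svert ^{\,m-k}\,e^{iU\oomega\cdot\xx}\,\dd\oomega\,\dd U,
\]
and the bracketed integral is exactly the right-hand side of \reflem{d-inversion}, hence equals $f(\xx)$. (Alternatively, feeding $\sigma=v(\svert \cdot\svert )$, $\rho=u(\svert \cdot\svert )$ into \refthm{similitude} with the index $s$ matched to the $a$-weight gives the same conclusion.)

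\textbf{Main obstacle.} The substantive difficulty is not the algebra above but the analytic bookkeeping: justifying the three interchanges of integration (over $\xx\in\RR^m$, over $\xxi\in G_{m,d}$, and over the scale $a\in(0,\infty)$) and the simultaneous convergence of the scale integral near $a=0$ and $a=\infty$ -- this is precisely what forces decay/moment (admissibility) conditions on $u$ and $v$, and is why Rubin's original argument is carried out in an $L^p$ framework rather than purely formally. A secondary, routine-but-fiddly point is tracking the normalizing constants (the measure on $G_{m,d}$ versus $V_{m,k}\times\RR^k$ modulo $O(k)$, the $(2\pi)$ powers, and the surface measure of $\SS^{k-1}$) so that $c$ is pinned down explicitly; none of this is conceptually hard once Step~1 has reduced everything to a Fourier multiplier in $\oomega$.
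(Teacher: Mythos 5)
The paper itself offers no proof of this statement: it is quoted from \citet{Rubin.ridgelet} as background, and the paper's only gesture toward a derivation is the remark immediately after the theorem that Rubin's transform is the special case of \refthm{similitude} in which $\sigma$ and $\rho$ are radial. So your proposal is supplying an argument the paper omits, and your Step~1 is correct and is the right way to do it: $\svert\xx-\xxi(U,\bb)\svert=\svert U^\top\xx-\bb\svert$, hence $U_a[f](U,\cdot)=P_d[f](U,\cdot)*_{\bb}u_a(\svert\cdot\svert)$ and $V_a^{*}$ is a radial convolution followed by back-projection, so the Fourier slice theorem reduces $V_a^{*}U_a$ to a radial multiplier acting on $\widehat f(U\oomega)$, to be matched against \reflem{d-inversion}.

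The genuine gap is in Step~2, where you assert rather than compute that ``the residual power of $\svert\oomega\svert$ is precisely $m-k$.'' Run your own substitution to the end: $u_a(\svert\cdot\svert)^\sharp(\oomega)=a^{k-d}\,\widehat u(a\oomega)$, so the product of the two multipliers contributes $a^{2(k-d)}$, the measure contributes $a^{-(1+d)}$, and the change of variable $s=a\svert\oomega\svert$ extracts
\[
\int_0^\infty a^{2k-3d-1}\,\widehat u(a\oomega)\widehat v(a\oomega)\,\dd a
=\svert\oomega\svert^{\,3d-2k}\int_0^\infty s^{2k-3d-1}\,\widehat u(s)\widehat v(s)\,\dd s ,
\]
i.e.\ the residual power is $3d-2k$, which equals the required $m-k=d$ only when $d=k$. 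Admissibility of $(u,v)$ cannot repair this: it controls convergence and nonvanishing of the scalar integral, but the exponent of $\svert\oomega\svert$ is fixed by homogeneity alone. To close the argument you must either take the dilation normalization $u_a=u(\cdot/a)/a^{m-d}$ (the $L^1(\RR^k)$-normalization, which yields exactly $\svert\oomega\svert^{d}$ and the Calder\'on constant $c=\int_0^\infty s^{-1-d}\widehat u(s)\widehat v(s)\,\dd s$), or adjust the scale measure accordingly; as written, your derivation does not reproduce \reflem{d-inversion} for $d\neq k$, so the normalization conventions need to be pinned down before the proof is complete. The same bookkeeping must be redone on your alternative route through \refthm{similitude}: there the neuron is $\rho(aU^\top\xx-\bb)$ rather than $\rho\bigl(a(U^\top\xx-\bb)\bigr)$, so identifying Rubin's parameters with the similitude parameters requires the substitution $\bb\mapsto a\bb$ with Jacobian $a^k$, which you have not tracked.
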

Recall that an affine $d$-plane $\xxi \in G_{m,k}$ is parametrized by an orthonormal $k$-frame $U \in V_{m,k}$ and a coordinate vector $\bb \in \RR^k$ as $\xxi(U,\bb) \coloneq  \{ \xx\in\RR^m \mid U^\top\xx=\bb \} = U\bb + \ker U$. 
Because $\svert \xx-\xxi(U,\bb)\svert  = \svert U^\top\xx-\bb\svert $ for any point $\xx\in\RR^m$, the quantity $U^\top \xx - \bb$ is understood as the Euclidean \emph{vector-distance} between point $\xx$ and $d$-plane $\xxi$. Therefore, the $d$-plane ridgelet transform by Rubin is understood as a special case of $\sigma(aU^\top\xx-\bb)$ as in \nxrefthm{similitude} where both $\sigma$ and $\rho$ are radial functions. We remark that a more redundant parametrization $\sigma(A^\top\xx-\bb)$ as in \nxrefthm{affine} is natural for the purpose of neural network study, simply because neural network parameters are not strictly restricted to $\sigma(aU^\top\xx-\bb)$ during the training.

 \section{Literature Overview}

\subsection{Ridgelet Transform in the 1990s}

One of the major problems in neural network study in the 1990s was to
investigate the expressive power of (fully-connected) shallow neural networks,
and the original ridgelet transform was discovered in this context
independently by~\citet{Murata1996}, and~\citet{Candes.PhD}.
Later in the 2010s, the classes of $f$ and $\sigma$ have been
extended to the distributions by~\citet{Kostadinova2014}
and~\citet{Sonoda2015acha} to include the modern activation functions such as
ReLU.

The idea of using integral transforms for function approximation is fundamental
and has a long history in approximation theory~\citep[see, e.g.][]{DeVore1993}.
In the literature of neural network study, the integral representation
by~\citet{Barron1993} is one of the representative works, where the so-called
Barron class and Maurey--Jones--Barron (MJB) approximation error upperbound have
been established, which play an important role both in the approximation and
estimation theories of neural networks. We refer to~\citet{kainen.survey} for
more details on the MJB theory.

One obvious strength of the ridgelet transform is the closed-form expression.
Before the ridgelet transform, two pioneering results were proposed. One is the
Fourier formula by~\citet{Irie1988} and~\citet{Funahashi1989}:

\begin{theorem}
For any $\sigma\in L^1(\RR)$ and $f\in L^2(\RR^m)$,
\[
    f(\xx) = \frac{1}{(2\pi)^m \sigma^\sharp(1)}\int_{\RR^m\times\RR}\widehat{f}(\aa)\sigma(\aa\cdot\xx-b)e^{ib}\dd\aa\dd b.
\]
\end{theorem}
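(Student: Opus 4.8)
The plan is to prove this by the same Fourier-slice philosophy used throughout the paper, but in its most degenerate form: here the ``auxiliary factor'' that makes the frequency integral converge is not a Schwartz function $\rho^\sharp$ but a Dirac mass concentrated at the single frequency $\omega=1$. Concretely, writing the right-hand side as $S[\gamma]$ with $\gamma(\aa,b)=\widehat f(\aa)e^{ib}$ corresponds (formally) to $\gamma^\sharp(\aa,\omega)=2\pi\,\widehat f(\aa)\,\delta(\omega-1)$, so that \step{1} of the method immediately collapses the $\omega$-integral and selects the slice $\omega=1$. This also explains why the statement is \emph{not} a corollary of \refthm{reconst}: the implied ridgelet function is $\rho(s)=e^{is}$, which is bounded but not in $\calS(\RR)$, so a direct argument is needed.

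In detail I would proceed as follows. Denote the right-hand side without the prefactor by $I(\xx):=\int_{\RR^m\times\RR}\widehat f(\aa)\,\sigma(\aa\cdot\xx-b)\,e^{ib}\,\dd\aa\,\dd b$. First I would integrate in $b$ (\step{1}), which is legitimate on a suitable dense subclass as discussed below. Then I would substitute $t=\aa\cdot\xx-b$ (\step{2}), so that $\int_\RR\sigma(\aa\cdot\xx-b)e^{ib}\,\dd b=e^{i\aa\cdot\xx}\int_\RR\sigma(t)e^{-it}\,\dd t=e^{i\aa\cdot\xx}\,\sigma^\sharp(1)$ by the paper's Fourier convention. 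Finally I would pull out the constant $\sigma^\sharp(1)$ and recognize the remaining integral as the Fourier inversion formula on $\RR^m$, namely $\frac{1}{(2\pi)^m}\int_{\RR^m}\widehat f(\aa)e^{i\aa\cdot\xx}\,\dd\aa=f(\xx)$ (\step{3}); dividing by $(2\pi)^m\sigma^\sharp(1)$, which is finite and, by the implicit hypothesis of the theorem, nonzero, gives the claimed identity.

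The main obstacle is purely one of rigor: for $f\in L^2(\RR^m)$ the transform $\widehat f$ is again only $L^2$, hence in general not integrable, while the inner $b$-integral is merely bounded by $\|\sigma\|_{L^1(\RR)}$ (this is where the hypothesis $\sigma\in L^1$ enters, together with making $\sigma^\sharp$ well defined and continuous, so that $\sigma^\sharp(1)$ makes sense). Thus neither Fubini nor pointwise Fourier inversion applies verbatim. The standard remedy I would use is to establish the identity first on a dense subclass where every manipulation is licit---e.g.\ $f$ with $\widehat f\in L^1(\RR^m)$ of compact support, or $f\in\calS(\RR^m)$---via Tonelli and dominated convergence and the classical inversion theorem, and then extend to all $f\in L^2$ by density, interpreting the final equality in the $L^2$ sense; equivalently, one may truncate $\widehat f$ to a ball $B_R$, run the exact computation on $B_R$, and let $R\to\infty$ with convergence in $L^2(\RR^m)$. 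No step beyond this bookkeeping is expected to be difficult.
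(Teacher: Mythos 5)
Your proof is correct. Note that the paper itself states this result without proof, simply citing \citet{Irie1988} and \citet{Funahashi1989}, so there is nothing to compare against; your argument --- integrate in $b$ to extract the factor $e^{i\aa\cdot\xx}\sigma^\sharp(1)$, then recognize the Fourier inversion in $\aa$ --- is the standard derivation and is exactly the degenerate $\rho^\sharp=2\pi\delta(\cdot-1)$ instance of the paper's three-step method, as you observe. The one genuine point of care is the one you identify: for general $f\in L^2(\RR^m)$ the double integral is not absolutely convergent (indeed the stated hypothesis really only guarantees the formula in an $L^2$/improper sense), and your truncation-plus-density argument is the right way to make the statement precise; you are also right that $\sigma^\sharp(1)\neq 0$ must be read as an implicit hypothesis.
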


The other is the Radon formula by~\citet{Carroll.Dickinson}
and~\citet{Ito.Radon}: 

\begin{theorem}\xlabel{thm:ito.radon}
For $\sigma(b) \coloneq  b_+^0$ (step function) and any $f\in\calS(\RR^m)$,
\[
    f(\xx) = \frac{1}{2(2\pi)^{m-1}}\int_{\SS^{m-1}\times\RR} \partial_t (-\triangle_t)^{(m-1)/2} P[f](\uu,t) \sigma(\uu\cdot\xx-t)\dd\uu\dd t,\]
where $P$ denotes the Radon transform.
\end{theorem}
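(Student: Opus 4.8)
The plan is to run the three-step Fourier-slice method of \refsec{method} on the right-hand side, read as the network $S[\gamma](\xx)=\int_{\SS^{m-1}\times\RR}\gamma(\uu,t)\,\sigma(\uu\cdot\xx-t)\,\dd\uu\,\dd t$ with step activation $\sigma=b_+^0$ and the explicit parameter distribution $\gamma(\uu,t)\coloneq\frac{1}{2(2\pi)^{m-1}}\,\partial_t(-\triangle_t)^{(m-1)/2}P[f](\uu,t)$. Morally this is the $k=1$, $t=-1$ instance of \refthm{stiefel}; the only new feature is that $b_+^0$, being one-sided, has Fourier transform $\sigma^\sharp(\omega)=\pi\delta(\omega)+\mathrm{p.v.}\,(i\omega)^{-1}$, which differs from the homogeneous symbol $|\omega|^{-1}$ assumed there by a $\delta$ atom and a $\mathrm{sgn}$-type (Hilbert) phase, and this is exactly why the reconstruction filter is $\partial_t(-\triangle_t)^{(m-1)/2}$ (a Hilbert-type twist of $(-\triangle_t)^{m/2}$) rather than a pure fractional power. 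For \step{1} I pass to the Fourier expression in the bias $t$: by the Fourier slice theorem for the $(m-1)$-plane (classical Radon) transform, $P[f]^\sharp(\uu,\omega)=\widehat{f}(\omega\uu)$, so $\gamma^\sharp(\uu,\omega)=\frac{1}{2(2\pi)^{m-1}}(i\omega)\,|\omega|^{m-1}\,\widehat{f}(\omega\uu)$ and $S[\gamma](\xx)=\frac{1}{2\pi}\int_{\SS^{m-1}\times\RR}\gamma^\sharp(\uu,\omega)\,\sigma^\sharp(\omega)\,e^{i\omega\uu\cdot\xx}\,\dd\uu\,\dd\omega$.

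The analogue of \step{2} and \step{3} is the observation that the multiplier built into $\gamma$ is tuned precisely to tame $\sigma^\sharp$: the factor $i\omega$ in $\gamma^\sharp$ annihilates the $\pi\delta(\omega)$ atom (since $\omega\,\delta(\omega)=0$) and cancels the pole of $\mathrm{p.v.}\,(i\omega)^{-1}$, leaving the clean, absolutely integrable density $\gamma^\sharp(\uu,\omega)\,\sigma^\sharp(\omega)=\frac{1}{2(2\pi)^{m-1}}\,|\omega|^{m-1}\,\widehat{f}(\omega\uu)$, in which $|\omega|^{m-1}$ is the polar Jacobian of $\xxi=\omega\uu$. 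One then recognizes $\int_{\SS^{m-1}\times\RR}|\omega|^{m-1}\widehat{f}(\omega\uu)e^{i\omega\uu\cdot\xx}\,\dd\uu\,\dd\omega=2\int_{\RR^m}\widehat{f}(\xxi)e^{i\xxi\cdot\xx}\,\dd\xxi=2(2\pi)^m f(\xx)$, where the factor $2$ is the $(\omega,\uu)\mapsto(-\omega,-\uu)$ double cover of $\RR^m\setminus\{\oo\}$ by $\RR\times\SS^{m-1}$ (equivalently $|\SS^{0}|=2$, matching $c_{m,1}=2$ in \refthm{stiefel}). Collecting constants, $\frac{1}{2\pi}\cdot\frac{1}{2(2\pi)^{m-1}}\cdot 2(2\pi)^m=1$, which is the asserted identity.

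It is worth recording the shorter physical-space route underlying this computation: since $t\mapsto\sigma(\uu\cdot\xx-t)$ is the Heaviside step, the inner integral $\int_\RR[\partial_t(-\triangle_t)^{(m-1)/2}P[f]](\uu,t)\,\sigma(\uu\cdot\xx-t)\,\dd t$ telescopes by the fundamental theorem of calculus to $[(-\triangle_t)^{(m-1)/2}P[f]](\uu,\uu\cdot\xx)$ (the boundary term at $t\to-\infty$ vanishing), so the right-hand side collapses to the classical Helgason inversion formula $f(\xx)=\frac{1}{2(2\pi)^{m-1}}\int_{\SS^{m-1}}[(-\triangle_t)^{(m-1)/2}P[f]](\uu,\uu\cdot\xx)\,\dd\uu$, which may simply be quoted; the Fourier derivation above is nothing but the spectral proof of that inversion formula repackaged in the $S[\gamma]$ language. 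If desired, one could even make the latter the official proof and relegate the Fourier computation to a remark.

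The one place that needs genuine care is the nonlocal operator $(-\triangle_t)^{(m-1)/2}$ when $m$ is even: then $P[f](\uu,\cdot)\in\calS(\RR)$ (the Radon transform of a Schwartz function is Schwartz in $t$), but $(-\triangle_t)^{(m-1)/2}P[f](\uu,\cdot)$ is only smooth with polynomial decay, because its symbol $|\omega|^{m-1}$ is not smooth at the origin. So one must check that the $t\to-\infty$ boundary term in the telescoping really vanishes; that Fubini applies to the $\omega$- and $\uu$-integrals (it does, since $\widehat{f}\in\calS(\RR^m)$ makes $|\omega|^{m-1}\widehat{f}(\omega\uu)$ integrable and $\SS^{m-1}$ is compact); and that the heuristic \step{1} is legitimate even though $\sigma^\sharp$ is a distribution (it is, because the resulting density $\gamma^\sharp\sigma^\sharp$ is the honest $L^1$ function $\frac{1}{2(2\pi)^{m-1}}|\omega|^{m-1}\widehat{f}(\omega\uu)$, the $i\omega$ in $\gamma^\sharp$ killing both the $\delta$ atom and the $(i\omega)^{-1}$ singularity). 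When $m$ is odd the operator is local, $(-\triangle_t)^{(m-1)/2}=(-1)^{(m-1)/2}\partial_t^{m-1}$, and every step is immediate. None of this is deep given $f\in\calS(\RR^m)$; it is just the bookkeeping the three-step template leaves implicit.
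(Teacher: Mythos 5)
Your verification is correct, and it supplies something the paper does not: the paper states \refthm{ito.radon} as a quoted classical result of Carroll--Dickinson and Ito, offering no proof beyond the remark that it is the $k=1$, $t=-1$ instance of \refthm{stiefel}. Your computation checks out line by line — with the paper's convention $\sigma^\sharp(\omega)=\int\sigma(b)e^{-ib\omega}\dd b$ one indeed has $(b_+^0)^\sharp(\omega)=\pi\delta(\omega)+\mathrm{p.v.}\,(i\omega)^{-1}$, the factor $i\omega|\omega|^{m-1}$ coming from $\partial_t(-\triangle_t)^{(m-1)/2}$ kills the atom and cancels the pole, the double cover contributes the factor $2$, and $\frac{1}{2\pi}\cdot\frac{1}{2(2\pi)^{m-1}}\cdot 2(2\pi)^m=1$. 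Moreover your opening observation is sharper than the paper's own framing: the theorem is \emph{not} literally the $k=1$, $t=-1$ case of \refthm{stiefel}, since the step function's Fourier transform is not the homogeneous symbol $|\omega|^{-1}$ assumed there but differs by a $\delta$ atom and a Hilbert phase $i\,\mathrm{sgn}(\omega)$, which is precisely why the filter is $\partial_t(-\triangle_t)^{(m-1)/2}$ rather than $(-\triangle_t)^{m/2}$; your direct verification closes that gap. The telescoping argument in your third paragraph (reducing to the filtered-backprojection form of the Radon inversion formula) is also correct and is arguably the cleanest official proof, with the Fourier computation serving as its spectral shadow; the decay caveat for even $m$ that you flag is the right one to check and is handled by the $O(|t|^{-m})$ decay of $(-\triangle_t)^{(m-1)/2}P[f](\uu,\cdot)$ for $f\in\calS(\RR^m)$.
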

 Both results clearly show the strong relationship between neural networks and the Fourier and Radon transforms. We note that our result \nxrefthm{stiefel} includes the Radon formula (\nxrefthm{ito.radon}) as the special case $k=1$ and $t=-1$.

\subsection{Ridgelet Transform in the 2000s}
 \xlabel{sec:literature2000}
In the context of sparse signal processing, the emergence of the ridgelet transform has motivated another direction of research: Exploring a high-dimensional counterpart of the $1$-dimensional wavelet transform. Indeed, the wavelet transform for $m$-dimensional  signals such as images and videos \emph{does} exist, and it is given as below
\begin{align*}
&W[f;\psi](a,\bb) \coloneq  \int_{\RR^m\times\RR^m} f(\xx) \overline{\psi_a(\xx-\bb)}\dd\xx, \quad (a,\bb) \in \RR_+\times\RR^m\\
    &f(\xx) = \int_{\RR^m\times\RR_+} W[f;\psi] \psi_a( \xx-\bb ) \frac{\dd\bb\dd a}{a}, \quad \xx \in \RR^m
\end{align*}
for $f \in L^2(\RR^m)$, where $\psi_a(\bb) \coloneq  \psi(\bb/a)/a^m$ and $\psi \in \calS(\RR^m)$ is a wavelet function.
However, it is considered to be \emph{unsatisfied} in its \emph{localization} ability, because it is essentially a tensor product of $1$-dimensional wavelet transforms.

More precisely, while the $1$-dimensional wavelet transform is good at
localizing the point singularities such as jumps and kinks in the
$1$-dimensional signals such as audio recordings, the
$2$-dimensional wavelet transform is \emph{not} good at localizing the
line singularities in $2$-dimensional signals such as pictures except
when the singularity is straight and parallel to either $x$- or
$y$-axes. Here, the \emph{singularity of dimension $d$} is
the term by~\citet{Donoho.ridgelet}. For example, 
\[
f(\xx)\coloneq ( x_1^2+ \cdots + x_k^2 )^{-\alpha/2} \exp(-\svert\xx\svert^2), \quad  0 < \alpha < k/2\]
is a singular square-integrable function $f$ on $\RR^m$ that attains $\infty$ along the hyperplane $x_1 = \cdots = x_k = 0$. 

On the other hand, the ridgelet transform is good at localizing the $(m-1)$-dimensional singularities in any direction because the feature map $\xx\mapsto\sigma(\aa\cdot\xx-b)$ is a constant function along the $(m-1)$-dimensional hyperplane normal to $\aa$. 
Similarly, the $d$-plane (or $k$-affine) ridgelet transform presented in this study is good at localizing the $d$-dimensional singularities because the feature map $\xx\mapsto\sigma(A^\top\xx-\bb)$ is a constant function along the $d$-dimensional subspace $\ker A^\top=\{ \xx \in \RR^m \mid A^\top \xx = 0\}$. 

In search for better localization properties, a variety of ``X-lets'' have been
developed such as curvelet, beamlet, contourlet, and sheerlet under the slogan
of geometric multiscale analysis (GMA)~\citep[see e.g.~][]{Donoho2002,Starck2010}.
Since ridgelet analysis had already been recognized as \emph{wavelet analysis
in the Radon domain}, a variety of generalizations of wavelet transforms and
Radon transforms were investigated. In a modern sense, the philosophy of
general Radon transforms is to map a function $f$ on a space
$X=G/K$ of points $x$ to a function $P[f]$ on another space
$\Xi=G/H$ of shapes $\xi$~\citep[see e.g.][]{Helgason.new}. In the
context of singularity localization, the shape $\xi$ such as
$d$-plane determines the shape of singularities, namely, a collection
of constant directions in $X$, and thus the general Radon domain
$\Xi$ is understood as the parameter space of the singularities. In this
perspective, we can restate the functionality of the ridgelet transform as
\emph{wavelet localization in the space $\Xi$ of singularities in
$X$.}

\subsection{Ridgelet Transform in the 2020s}

In the context of deep learning study, the idea of ridgelet transforms have
regained the spotlight for the \emph{representer theorem} that characterizes
(either deep or shallow) infinitely-wide ReLU networks that minimizes a
``representational cost''~\citep{Savarese2019,Ongie2020,Parhi2021,Unser2019}.
Here, the representational cost for function $f$ is defined as the
infimum of the total variation (TV) norm of the parameter distribution: 
\[
C[f] \coloneq  \inf_{\gamma \in \calG} \|  \gamma \| _{\rm TV}, \quad \mbox{s.t.} \quad S[\gamma]=f,\]
where $\calG$ is the collection of all signed measures. The TV-norm is
a fundamental quantity for
the MJB bounds~\citep[see e.g.][]{kainen.survey}.

According to~\citet{Sonoda2021ghost}, when the class $\calG$ of parameter
distributions is restricted to $L^2(\RR^m\times\RR)$, then any $\gamma$ satisfying
$S[\gamma]=f$ is uniquely written as a series of ridgelet transforms:
$\gamma = R[f;\sigma_*] + \sum_{i=1}^\infty R[f_i;\rho_i]$ where $\sigma_*$ is a certain unique function satisfying
$\iiprod{\sigma,\rho_0}=1$, yielding $S[R[f;\sigma_*]]=f$; $\{\rho_i\}_{i \in \NN}$ is an orthonormal system
satisfying $\iiprod{\sigma,\rho_i}=0$, yielding $S[R[f_i;\rho_i]]=0$; and $\{f_i\}_{i\in\NN}$ is
$L^2$-functions that is uniquely determined for each $\gamma$. We
note that $\sigma_*$ and $\rho_i$ are independent of $\gamma$. Hence,
the cost is rewritten as a constraint-free expression:
\[
    C[f] = \inf_{\{f_i\}} \Big\Vert  R[f;\sigma_*] + \sum_{i=1}^\infty R[f_i;\rho_i] \Big\Vert_{L^1}.
\]

As a result, we can conjecture that the minimizer of $C[f]$ is given by
ridgelet transform(s). In fact,~\citet{Ongie2020} have shown that under some
assumptions, the minimizer is given by a derivative of the Radon transform:
$\triangle^{(m+1)/2} P[f]$, which is exactly the special case of the ridgelet transform in
\nxrefthm{stiefel} when $k=1$ and $t=-1$. 

Update: At the same time as the initial submission, \citet{parhi2023functionspace} have obtained a representer theorem for \emph{multivariate} activation functions under more careful considerations on the regularization and function spaces based on an extended theory of the $d$-plane transforms for \emph{distributions} \citep{parhi2023distributional}. Their result suggests our conjecture was essentially true (modulo finite-order polynomials).

 \section{Discussion}

In the main text, we have seen a variety of examples, but what is essential behind the Fourier expression, changing variables and assuming the separation-of-variables form? In a nutshell, it is \emph{coefficient comparison} for solving equations. Namely, after appropriately changing variables, the network $S[\gamma]$ is rewritten in the Fourier basis, which is thus the coordinate transform from the basis $\{\sigma(ax-b)\}_{a,b}$ to the Fourier basis $\{\exp(i\xi x)\}_\xi$. Since we (are supposed to) know the Fourier coefficient $\widehat{f}(\xi)$, we can obtain the unknown function $\gamma$ by comparing the coefficients in the Fourier domain. From this perspective, we can now understand that the Fourier basis is just a one choice of frames, and the solution steps are summarized as below:

Let $\phi:V \times X \to \RR$ and $\psi:\Xi\times X\to \RR$ be two feature maps on $X$ parametrized by $V$ and $\Xi$ respectively, and consider their associated integral representations:
\[
    S[\gamma](x) \coloneq  \int_V \gamma(v)\phi(v,x)\dd v, \quad T[g](x) \coloneq  \int_\Xi g(\xi)\psi(\xi,x)\dd\xi.
\]
Here, $S$ and $T$ correspond to the continuous neural network and the inverse Fourier transform respectively. Given a function $f:X\to\RR$, suppose that $g$ of $T[g]=f$ is known as, say $g=\widehat{f}$, but $\gamma$ of $S[\gamma]=f$ is unknown. Then, find a coordinate transform $H$ satisfying $H[ \phi ](\xi,x) = \psi(\xi,x)$ so that
\[
    S[\gamma](x) = \int_\Xi H'[\gamma](\xi) \psi(\xi,x)\dd\xi,\]
where $H'$ is a dual transform of the coefficients $\gamma$ associated with the coordinate transform. Then, we can find $\gamma$ by comparing the coefficients:
\[
    H'[\gamma] = \widehat{f}.
\]
In other words, the mapping $H$ that matches the neuron $\sigma(\aa\cdot\xx-b)$ and Fourier basis $\exp(i\xxi\cdot\xx)$ corresponds to the Fourier expression and change of variables in the main text, yielding $H'[\gamma](\xxi) = \gamma^\sharp(\xxi/\omega,\omega)$.

\section{Conclusion}

The ultimate goal of this study is to understand neural network parameters.
While the ridgelet transform is a strong analysis tool, one of the major short-comings is that the closed-form expression has been known only for small class of neural networks. 
In this paper, we propose the Fourier slice method, and have shown that various neural networks and their corresponding ridgelet transforms, listed in \nxreftab{types}, can be systematically obtained by following the three steps of the Fourier slice method.

\begin{table*}[t]
    \small
    \centering
    \begin{tabular}{llllll}
        \toprule
        \multicolumn{2}{l}{layer type} & input $x$& parameter $(a,b) $& single neuron\\
        \midrule
        \multicolumn{2}{l}{\refsec{intro}-\ref{sec:method}. fully-connected (FC) layer} & $\RR^m$ & $\RR^m \times \RR$ & $\sigma(\aa\cdot\xx-b)$ \\
        \multicolumn{2}{l}{\refsec{case.finite}. FC layer on finite fields} & $\FF_p^m$ & $\FF_p^m \times \FF_p$ & $\sigma(\aa\cdot\xx-b)$ \\
        \multicolumn{2}{l}{\refsec{case.gconv}. group convolution layer} & $\calH_m$ & $\calH_m \times \RR$ & $\sigma( (a \star x)(g) - b)$\\
        \multicolumn{2}{l}{\refsec{case.mfd}. FC layer on manifolds} & $G/K$ & $\lieA^* \times \bdX \times \RR$ & $\sigma(a\iprod{x,u}-b)$\\
        \multicolumn{2}{l}{\refsec{case.affine}. pooling ($d$-plane ridgelet)} & $\RR^m$ & $\RR^{m \times k} \times \RR^k$ & $\sigma(A^\top\xx-\bb)$\\
        \bottomrule
    \end{tabular}
    \caption{List of layer types $\sigma(ax-b)$ covered in this study. See corresponding sections for the definitions of symbols such as $\FF_p, \calH_m, G/K, \partial X$ and $\lieA^*$.}
    \label{tab:types}
\end{table*}
Needless to say, it is more efficient to analyze networks uniformly in terms of ridgelet analysis than to analyze individual networks manually one by one. 
As demonstrated in this paper, the coverage of ridgelet analysis is gradually expanding.
With the strength of a closed-form expression of the pseudo-inverse operator,
the ridgelet transform has several applications. For example, we can/may
\begin{enumerate}
    \item present a constructive proof of the universal approximation theorem,
    \item estimate approximation error bounds by discretizing the reconstruction formula using numerical integration schemes (e.g.~MJB theory),
    \item describe the distribution of parameters obtained by gradient descent
     learning~\citep{Sonoda2021aistats},
    \item obtain the general solution to the learning equation
     $S[\gamma]=f$~\citep{Sonoda2021ghost}, and
    \item construct a representer theorem~\citep{Unser2019}.
\end{enumerate}

The Fourier expression further allows us to view neural networks from the
perspective of harmonic analysis and integral geometry. By recasting neural
networks in these contexts, we will be able to discover further varieties of
novel networks. On the other hand, after the initial submission of this
manuscript, the authors have also developed an alternative method of discovery
that uses group invariant functions instead of the Fourier
expression~\citep{sonoda2023deepridge,sonoda2023joint}.
The characterization of the networks obtained by the Fourier slice method would be an interesting direction of this study.

\section*{Acknowledgment}
This work was supported by
JSPS KAKENHI 18K18113,
JST CREST JPMJCR2015 and JPMJCR1913,
JST PRESTO JPMJPR2125,
and JST ACT-X JPMJAX2004.

\appendix
\section{Poincar\'e Disk}\xlabel{sec:poincare}

Following~\citet{Helgason.GGA}[Introduction, \textsection~4]
and~\citet{Helgason.GASS}[Ch.II, \textsection~1], we describe the group theoretic aspect
of the Poincar\'e disk.
Let $D \coloneq  \{ z \in \CC \mid \svert z\svert  < 1 \}$ be the unit open disk in $\CC$ equipped with the Riemannian metric $g_z(u,v) = \eprod{u,v}/(1-\svert z\svert ^2)^2$ for any tangent vectors $u,v \in T_z D$ at $z \in D$, where $\eprod{\cdot,\cdot}$ denotes the Euclidean inner product in $\RR^2$.
Let $\bdD \coloneq  \{ u \in \CC \mid \svert u\svert =1\}$ be the boundary of $D$ equipped with the uniform probability measure $\dd u$.
Namely, $D$ is the \emph{Poincar\'e disk model of hyperbolic plane $\HH^2$}.
On this model, the Poincar\'e metric between two points $z,w \in D$ is given by $d(z,w)=\tanh^{-1} \svert  (z-w)/(1-zw^{*}) \svert $, and the volume element is given by $\dd z = (1-(x^2+y^2))^{-2}\dd x \dd y$.

Consider now the group
\[
G \coloneq  SU(1,1) \coloneq  \left\{ 
\begin{pmatrix}
 \alpha & \beta \\ \beta^{*} & \alpha^{*} 
\end{pmatrix}
 \mBiggvert  (\alpha,\beta)\in\CC^2, \svert \alpha\svert ^2-\svert \beta\svert ^2=1 \right\},\]
which acts on $D$ (and $\bdD$) by
\[
    g \cdot z \coloneq  \frac{\alpha z+\beta}{\beta^{*}z+\alpha^{*}}, \quad z \in D \cup \bdD.
\]
The $G$-action is transitive, conformal, and maps circles, lines, and the boundary into circles, lines, and the boundary.
In addition, consider the subgroups
\begin{align*}
    K &\coloneq  SO(2) = \left\{ k_\phi \coloneq  
\begin{pmatrix}
 e^{i \phi} & 0 \\ 0 & e^{-i\phi} 
\end{pmatrix}
 \mBiggvert  \phi \in [0,2\pi) \right\}, \\
    A  &\coloneq  \left\{ a_t \coloneq  
\begin{pmatrix}
 \cosh t & \sinh t \\ \sinh t & \cosh t 
\end{pmatrix}
 \mBiggvert  t \in \RR \right\}, \\
    N  &\coloneq  \left\{ n_s \coloneq  
\begin{pmatrix}
 1+is & -is \\ is & 1-is 
\end{pmatrix}
 \mBiggvert  s \in \RR \right\}, \\
    M &\coloneq  C_K(A) = \left\{ k_0 = 
\begin{pmatrix}
 1 & 0 \\ 0 & 1 
\end{pmatrix}
, k_\pi = 
\begin{pmatrix}
 -1 & 0 \\ 0 & -1 
\end{pmatrix}
 \right\}
\end{align*}
The subgroup $K \coloneq  SO(2)$ fixes the origin $o \in D$. So we have the identifications
\[
    D = G/K = SU(1,1)/SO(2), \quad \mbox{and} \quad \bdD = K/M = \SS^{1}.
\]

On this model, the following are known 
(1) that $m=\dim \lieA=1$, $\svert W\svert =1$, $\varrho=1$, and $\svert \cc(\lambda)\svert ^{-2} = \frac{\pi \lambda}{2}\tanh (\frac{\pi \lambda}{2})$ for $\lambda \in \lieA^{*} = \RR$,
(2) that the geodesics are the circular arcs perpendicular to the boundary $\bdD$, and (3) that the horocycles are the circles tangent to the boundary $\bdD$. Hence, let $\xi(x,u)$ denote the horocycle $\xi$ through $x \in D$ and tangent to the boundary at $u \in \bdD$; and let $\iprod{x,u}$ denote the signed distance from the origin $o \in D$ to the horocycle $\xi(x,u)$.

In order to compute the distance $\iprod{z,u}$, we use the following fact: 
The distance from the origin $o$ to a point $z=r e^{iu}$ is $d(o,z) = \tanh^{-1}\svert  (0-z)/(1-0z^{*}) \svert  = \frac{1}{2}\log\frac{1+r}{1-r}$. Hence, let $c \in D$ be the center of the horocycle $\xi(z,u)$, and let $w \in D$ be the closest point on the horocycle $\xi(z,u)$ to the origin. By definition, $\iprod{z,u} = d(o,w)$. But we can find the $w$ via the cosine rule:
\[
    \cos zou = \frac{\svert u\svert ^2+\svert z\svert ^2-\svert z-u\svert ^2}{2\svert u\svert\,\svert z\svert } = \cos zoc = \frac{\svert z\svert ^2 + {\svert \frac{1}{2}(1+\svert w\svert )\svert }^2 - {\svert \frac{1}{2}(1-\svert w\svert )\svert} ^2}{2\svert z\svert\,\svert \frac{1}{2}(1+\svert w\svert )\svert },\]
which yields the tractable formula:
\[
\iprod{z,u} = \frac{1}{2} \log \frac{1+\svert w\svert }{1-\svert w\svert } = \frac{1}{2} \log \frac{1-\svert z\svert ^2}{\svert z-u\svert ^2}, \quad (z,u) \in D \times \bdD.
\]

\section{{{SPD}} Manifold}\xlabel{sec:spdmfd}

Following~\citet[][Chapter~1]{Terras2016}, we introduce the SPD manifold.
On the space $\PP_m$ of $m \times m$ symmetric positive definite (SPD) matrices, the Riemannian metric is given by
\[
    \mathfrak{g}_{x} \coloneq  \tr\left( (x^{-1} \dd x)^2 \right), \quad x \in \PP_m\]
where $x$ and $\dd x$ denote the matrices of entries $x_{ij}$ and $\dd x_{ij}$.

Put $G=GL(m,\RR)$, then the Iwasawa decomposition $G=KAN$ is given by $K=O(m), A=D_+(m), N=T_1(m)$; and the centralizer $M = C_K(A)$ is given by $M=D_{\pm 1}$ (diagonal matrices with entries $\pm 1$). The quotient space $G/K$ is identified with the SPD manifold $\PP_m$ via a diffeomorphism onto, $gK \mapsto g g^\top$ for any $g \in G$; and $K/M$ is identified with the boundary $\bdP_m$, another manifold of all \emph{singular positive semidefinite} matrices. The action of $G$ on $\PP_m$ is given by $g[x] \coloneq  g x g^\top$ for any $g \in G$ and $x \in \PP_m$. In particular, the metric $\mathfrak{g}$ is $G$-invariant. According to the \emph{spectral decomposition}, for any $x \in \PP_m$, there uniquely exist $k \in K$ and $a \in A$ such that $x = k[a]$; and according to the \emph{Cholesky (or Iwasawa) decomposition}, there exist $n \in N$ and $a \in A$ such that $x = n[a]$. 

When $x = k[\exp(H)] = \exp(k[H])$ for some $H \in \lieA = D(m)$ and $k \in K$, then the geodesic segment $y$ from the origin $o=I$ (the identity matrix) to $x$ is given by 
\[
y(t) = \exp(t k[H]), \quad t \in [0,1]\]
satisfying $y(0) = o$ and $y(1) = x$;
and the Riemannian length of $y$ (i.e.,~the Riemannian distance from $o$ to $x$) is given by $d(o,x) = \svert H\svert _E$. So, $H \in \lieA$ is the \emph{vector-valued distance} from $o$ to $x = k[\exp(H)]$.

The $G$-invariant measures are given by $\dd g = \svert \det g\svert ^{-m} \bigwedge_{i,j} \dd g_{ij}$ on $G$, $\dd k$ to be the uniform probability measure on $K$, $\dd a = \bigwedge_i \dd a_i/a_i$ on $A$, $\dd n = \bigwedge_{1 < i < j \le m} \dd n_{ij}$ on $N$,
\begin{align*}
\dd \mu(x) 
&= \svert \det x\svert ^{-\frac{m+1}{2}} \bigwedge_{1 \le i \le j \le m} \dd x_{ij} \quad \mbox{on} \quad \PP_m, \notag \\
&= c_m \prod_{j=1}^m a_j^{-\frac{m-1}{2}} \prod_{1 \le i < j \le m} \svert a_i - a_j\svert  \dd a \dd k,
\end{align*}
where the second expression is for the polar coordinates $x = k[a]$ with $(k,a) \in K \times A$ and $c_m \coloneq  \pi^{(m^2+m)/4} \prod_{j=1}^m j^{-1} \Gamma^{-1}(j/2)$,
and $\dd u$ to be the uniform probability measure on $\bdP_m \coloneq  K/M$.

The vector-valued composite distance from the origin $o$ to a horosphere $\xi(x,u)$ is calculated as 
\[
    \iprod{x = g[o], u=kM} = \frac{1}{2} \log \lambda(k^\top[x]),\]
where $\lambda(y)$ denotes the diagonal vector $\lambda$ in the \emph{Cholesky decomposition} $y = \nu[\lambda] = \nu \lambda \nu^\top$ of $y$ for some $(\nu,\lambda) \in NA$.

\begin{proof}
Since $\iprod{x,kM} \coloneq  -H(g^{-1}k) = \iprod{k^\top [x],eM}$,
it suffices to consider the case $(x,u)=(g[o],eM)$. Namely, we solve $g^{-1} = kan$ for unknowns $(k,a,n) \in KAN$. (To be precise, we only need $a$ because $\iprod{x,eM} = - \log a$.) Put the Cholesky decomposition $x = \nu[\lambda] = \nu \lambda \nu^\top$ for some $(\nu,\lambda) \in NA$. Then, $a = \lambda^{-1/2}$ because $x^{-1} = (\nu^{-1})^\top \lambda^{-1} \nu^{-1}$, while $x^{-1} = (g g^\top)^{-1} = n^\top a^2 n$.
\end{proof}

The Helgason--Fourier transform and its inversion formula are given by
\begin{align*}
&\widehat{f}(\ss,u) = \int_{\PP_m} f(x) \overline{e^{\ss\cdot\iprod{x,u}}} \dd \mu(x),\\
    &f(x) = \omega_m \int_{\Re \ss = \rrho} \int_{\bdP_m} \widehat{f}(\ss,u) e^{\ss\cdot\iprod{x,u}} \dd u \frac{\dd \ss}{\svert \cc(\ss)\svert ^2},
\end{align*}
for any $(\ss,u) \in \lieA^{*}_{\CC} \times O(m)$ (where $\lieA^{*}_{\CC} = \CC^m$) and $x \in \PP_m$. Here, $\omega_m \coloneq  \prod_{j=1}^m \frac{\Gamma(j/2)}{j (2\pi i) \pi^{j/2}}$, $\rrho = (-\frac{1}{2},\ldots,-\frac{1}{2},\frac{m-1}{4}) \in \CC^m$, and 
\[
    \cc(\ss) = \prod_{1 \le i \le j < m} \frac{B(\frac{1}{2}, s_i + \cdots + s_j + \frac{j-i+1}{2})}{B(\frac{1}{2},\frac{j-i+1}{2})},\]
where $B(x,y) \coloneq  \Gamma(x)\Gamma(y)/\Gamma(x+y)$ is the beta function.

\section{Matrix Calculus} \xlabel{sec:BPformula}
We refer to~\citet{Rubin2018} and~\citet{Diaz-Garcia2005} for matrix calculus.

Let $m,k$ be positive integers ($m \ge k$).
Let $M_{m,k} \subset \RR^{m\times k}$ be the set of all full-column-rank matrices equipped with the volume measure $\dd W = \prod_{i,j} \dd w_{ij}$.
Let $V_{m,k}$ be the Stiefel manifold of orthonormal $k$-frames in $\RR^m$ equipped with the invariant measure $\dd U$ normalized to $\int_{V_{m,k}}\dd U = 2^k \pi^{mk/2}/\Gamma_k(m/2) \eqcolon  \sigma_{m,k}$ where $\Gamma_k$ is the Siegel gamma function.
Let $\sym_k \subset \RR^{k\times k}$ be the space of real symmetric matrices equipped with the volume measure $\dd S = \prod_{i\le j} \dd s_{ij}$, which is isometric to the euclidean space $\RR^{k(k+1)/2}$.
Let $\pdm_k \subset \RR^{k\times k}$ be the cone of positive definite matrices in $\sym_k$ equipped with the volume measure $\dd P = \prod_{i\le j} \dd p_{ij}$.

\begin{lemma}[Matrix Polar Decomposition]\xlabel{lem:mpd}
For any $W \in M_{m,k}$, there uniquely exist $U \in V_{m,k}$ and $P \in \pdm_k$ such that
\[
    W=UP^{1/2}, \quad P=W^\top W;\]
and for any $f \in L^1(M_{m,k})$,
\[
\int_{M_{m,k}} f( W ) \dd W
= \frac{1}{2^k}\int_{V_{m,k}\times\pdm_k} f( U P^{1/2} ) \svert  \det P \svert ^{\frac{m-k-1}{2}} \dd P \dd U, \quad P = W^\top W. 
\]
\end{lemma}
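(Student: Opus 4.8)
The plan is to handle the two assertions in turn: the pointwise existence and uniqueness of the factorization $W = UP^{1/2}$, and then the change-of-variables formula for the measure $\dd W$.

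For existence and uniqueness, note that since $W \in M_{m,k}$ is injective, the Gram matrix $P := W^\top W$ is symmetric positive definite, because $v^\top P v = |Wv|^2 > 0$ for every $v \in \RR^k \setminus \{0\}$; hence $P \in \pdm_k$ and, by the spectral theorem, $P$ has a unique positive definite square root $P^{1/2}$. Setting $U := WP^{-1/2}$ gives $U^\top U = P^{-1/2}W^\top W P^{-1/2} = P^{-1/2}PP^{-1/2} = I_k$, so $U \in V_{m,k}$ and $W = UP^{1/2}$. Uniqueness follows because any factorization $W = UP^{1/2}$ of this type forces $W^\top W = P^{1/2}U^\top U P^{1/2} = P$, which then determines $P^{1/2}$ and $U = WP^{-1/2}$. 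In particular $W \mapsto (U,P)$ is a bijection $M_{m,k} \to V_{m,k} \times \pdm_k$.

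For the integral identity I would route through the QR (Gram--Schmidt) decomposition $W = U_1 T$ with $U_1 \in V_{m,k}$ and $T$ upper triangular with positive diagonal entries $t_{11}, \dots, t_{kk}$, whose Jacobian is the classical matrix-calculus formula $\dd W = \bigl(\prod_{i=1}^k t_{ii}^{\,m-i}\bigr)\,\dd T\,\dd U_1$ (see \citet{Rubin2018,Diaz-Garcia2005}), with $\dd U_1$ the invariant measure on $V_{m,k}$. Composing with the Cholesky map $T \mapsto P = T^\top T$, whose Jacobian is $\dd P = 2^k \bigl(\prod_{i=1}^k t_{ii}^{\,k-i+1}\bigr)\,\dd T$, one gets $\dd W = \tfrac{1}{2^k}\bigl(\prod_{i=1}^k t_{ii}^{\,m-k-1}\bigr)\,\dd P\,\dd U_1 = \tfrac{1}{2^k}(\det T)^{m-k-1}\,\dd P\,\dd U_1 = \tfrac{1}{2^k}|\det P|^{(m-k-1)/2}\,\dd P\,\dd U_1$, using $\det T = \prod_i t_{ii}$ and $\det P = (\det T)^2$.

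Finally I would replace the QR frame $U_1$ by the polar frame $U$. For fixed $P$, the Cholesky factor $T = T(P)$ satisfies $R := TP^{-1/2} \in O(k)$ (since $R^\top R = P^{-1/2}T^\top T P^{-1/2} = I_k$), so $W = U_1 T = (U_1 R)P^{1/2}$, i.e. $U = U_1 R$ with $R$ depending only on $P$. By the right $O(k)$-invariance of the Stiefel measure, $\dd(U_1 R) = \dd U_1$, so the substitution $U \leftrightarrow U_1$ turns the previous display into $\dd W = \tfrac{1}{2^k}|\det P|^{(m-k-1)/2}\,\dd P\,\dd U$ under the parametrization $W = UP^{1/2}$, which is exactly the claim after inserting $f$ and integrating. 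The only genuinely nontrivial input is the QR/Cholesky Jacobian; I would either cite it or prove it by the moving-frame argument that writes $\dd W = \dd\bigl([U_1\ U_{1\perp}]^\top W\bigr)$ and peels off the columns of $T$ one at a time. Alternatively one can compute the polar Jacobian of $W = UP^{1/2}$ directly; then the delicate point is the bookkeeping of the factors $\prod_{i<j}(\lambda_i + \lambda_j)$ arising from the skew part of $U^\top\,\dd W$, which turn out to cancel precisely against those produced by the substitution $P^{1/2} \mapsto P$.
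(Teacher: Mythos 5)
Your argument is correct. Note that the paper does not actually prove this lemma: it simply defers to \citet[Lemma~2.1]{Rubin2018}, so your proposal supplies a self-contained derivation where the text gives only a citation. The route you choose --- factor $W=U_1T$ by QR with Jacobian $\prod_i t_{ii}^{m-i}$, then push forward through the Cholesky map $T\mapsto P=T^\top T$ with Jacobian $2^k\prod_i t_{ii}^{k-i+1}$, and finally rotate the frame by the $P$-dependent orthogonal matrix $R=TP^{-1/2}$ using the right $O(k)$-invariance of $\dd U$ --- is the standard textbook path (Muirhead-style), and the exponent bookkeeping checks out: $(m-i)-(k+1-i)=m-k-1$ is independent of $i$, giving $(\det T)^{m-k-1}=|\det P|^{(m-k-1)/2}$, and the normalization of $\dd U$ as the form $(U_1^\top\,\dd U_1)$ is consistent with the total mass $\sigma_{m,k}=2^k\pi^{mk/2}/\Gamma_k(m/2)$ used in \nxrefapp{BPformula}. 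The pointwise existence/uniqueness argument via $P=W^\top W$ and $U=WP^{-1/2}$ is also complete. What your approach buys over the bare citation is transparency about where the factor $2^{-k}$ and the exponent $\tfrac{m-k-1}{2}$ come from; the only inputs you leave uncited are the two classical Jacobians, which is no weaker than the paper's own standard, since it cites \citet{Rubin2018} and \citet{Diaz-Garcia2005} for exactly this kind of matrix calculus. The alternative you sketch at the end (computing the polar Jacobian of $W=UP^{1/2}$ directly and tracking the $\prod_{i<j}(\lambda_i+\lambda_j)$ cancellation) is closer to how Rubin's Lemma~2.1 is usually presented, but is not needed given the QR route.
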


See~\citet[Lemma~2.1]{Rubin2018} for more details. We remark that while \nxreflem{mpd} is an integration formula on Stiefel manifold, the Grassmannian manifold version is the Blaschke--Petkantschin formula.

\begin{lemma}[Matrix Polar Integration]
For any $f \in L^1(\RR^k)$,
\[
    c_{m,k} \int_{\RR^m} f(\xx)\dd\xx = \int_{V_{m,k}\times\RR^k} f(U\bb) \svert U\bb\svert ^{m-k} \dd U \dd \bb,\]
where $c_{m,k} \coloneq  \int_{\SS^{k-1} \times V_{m,k-1}} \dd U \dd \oomega$.
\end{lemma}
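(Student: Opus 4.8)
The plan is to read this identity as ``generalized polar coordinates on $\RR^m$'' — here $f$ should be taken integrable on $\RR^m$, consistently with $U\bb\in\RR^m$ — and to carry it out in three moves: strip off the radial part of $\bb$, recognize what is left as a rotation-invariant measure on the sphere, and then pin down the scalar. First I would substitute $\bb = r\oomega$ with $r=\svert\bb\svert\ge 0$, $\oomega\in\SS^{k-1}$ and $\dd\bb = r^{k-1}\dd r\,\dd\oomega$; since $U^\top U = I_k$ we have $\svert U\bb\svert = r$, hence $\svert U\bb\svert^{m-k}\dd\bb = r^{m-1}\dd r\,\dd\oomega$, and Tonelli's theorem (applicable because $f\in L^1$) turns the right-hand side into
\[
  \int_{V_{m,k}\times\RR^k} f(U\bb)\,\svert U\bb\svert^{m-k}\,\dd U\,\dd\bb
  \;=\; \int_0^\infty r^{m-1}\left(\int_{V_{m,k}\times\SS^{k-1}} f(rU\oomega)\,\dd U\,\dd\oomega\right)\dd r .
\]

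Next I would study the bracketed angular integral. Since $\svert U\oomega\svert = 1$, the map $\Phi\colon (U,\oomega)\mapsto U\oomega$ sends $V_{m,k}\times\SS^{k-1}$ \emph{onto} $\SS^{m-1}$ (given $\vv\in\SS^{m-1}$, complete it to a frame $U$ with first column $\vv$ and take $\oomega = e_1$), so the push-forward $\mu \coloneq \Phi_*(\dd U\otimes\dd\oomega)$ is a finite positive Radon measure on $\SS^{m-1}$ and the angular integral equals $\int_{\SS^{m-1}} f(r\vv)\,\dd\mu(\vv)$. The crux is that $\dd U$ is invariant under the left action $U\mapsto QU$ of $Q\in O(m)$ while $\Phi(QU,\oomega) = Q\,\Phi(U,\oomega)$; hence $\mu$ is $O(m)$-invariant, and because the surface measure is, up to a positive constant, the only $O(m)$-invariant Radon measure on $\SS^{m-1}$, we must have $\mu = c\,\dd\vv$. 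Comparing total masses gives $c = \svert\SS^{k-1}\svert\,\mathrm{vol}(V_{m,k})/\svert\SS^{m-1}\svert$. Substituting $\mu = c\,\dd\vv$ back and reassembling ordinary polar coordinates on $\RR^m$ then yields $\int_{V_{m,k}\times\RR^k} f(U\bb)\svert U\bb\svert^{m-k}\dd U\,\dd\bb = c\int_{\RR^m} f(\xx)\,\dd\xx$, so it only remains to check that this $c$ is the stated $c_{m,k}$.

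For the last point I would use the Stiefel fibration $V_{m,k}\to\SS^{m-1}$ that sends a frame to its first vector; its fibre over any point is the manifold of orthonormal $(k-1)$-frames in that point's orthogonal hyperplane $\cong\RR^{m-1}$, so with the normalization of \refapp{BPformula} one has $\mathrm{vol}(V_{m,k}) = \svert\SS^{m-1}\svert\,\mathrm{vol}(V_{m-1,k-1})$ and therefore $c = \svert\SS^{k-1}\svert\,\mathrm{vol}(V_{m-1,k-1}) = \int_{\SS^{k-1}\times V_{m-1,k-1}}\dd U\,\dd\oomega = c_{m,k}$ (with the fibre Stiefel manifold carrying the subscripts $m-1,k-1$). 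The conceptual heart is the invariance step, which is standard; the one place genuine care is needed is this normalization bookkeeping for the constant, which can equally well be imported from the matrix-calculus references cited in \refapp{BPformula}, or sidestepped — if one is content with an explicit but unsimplified constant — by testing the identity on $f = \mathbf{1}_{\{\svert\xx\svert\le 1\}}$, which immediately gives $c = \svert\SS^{k-1}\svert\,\mathrm{vol}(V_{m,k})/\svert\SS^{m-1}\svert$.
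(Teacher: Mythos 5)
Your proof is correct, and it diverges from the paper's own argument at the key middle step, so a comparison is worthwhile. Both arguments open identically, by passing to polar coordinates $\bb = r\oomega$ and using $\svert U\bb\svert = \svert\bb\svert$. The paper then finishes by an explicit re-parametrization: it replaces the pair $(U,\oomega)$ by the unit vector $\uu_{\oomega} = U\oomega\in\SS^{m-1}$ together with a $(k-1)$-frame $U_{-\oomega}$ spanning the orthocomplement of $\uu_{\oomega}$ inside $\hull U$, so that the integral over $(\uu_{\oomega},r)$ reassembles $\int_{\RR^m}f(\xx)\dd\xx$ and the leftover integral over $(U_{-\oomega},\oomega)$ is read off as the constant. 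You instead push the product measure forward under $(U,\oomega)\mapsto U\oomega$, invoke the left $O(m)$-invariance of $\dd U$ and the uniqueness (up to scale) of the rotation-invariant Radon measure on $\SS^{m-1}$ to conclude that the pushforward is a multiple of surface measure, and pin down the multiple by comparing total masses. Your route buys rigor precisely where the paper is loosest --- the measure-theoretic content of ``rearranging the frame'' is replaced by a clean symmetry argument --- at the price of the extra volume bookkeeping via the fibration $V_{m,k}\to\SS^{m-1}$, which you handle correctly with the normalization of \refapp{BPformula}. Two of your side observations are worth recording: the hypothesis should read $f\in L^1(\RR^m)$ rather than $L^1(\RR^k)$, since $f$ is evaluated at $U\bb\in\RR^m$; and the Stiefel manifold appearing in the constant should carry subscripts $(m-1,k-1)$, because the residual $(k-1)$-frames live in the hyperplane $\uu_{\oomega}^\perp\cong\RR^{m-1}$, so the paper's $c_{m,k}=\int_{\SS^{k-1}\times V_{m,k-1}}\dd U\dd\oomega$ agrees with your $\svert\SS^{k-1}\svert\cdot\mathrm{vol}(V_{m-1,k-1})$ only under that reading.
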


\begin{proof}
Recall that $U \bb = \sum_{i=1}^k b_i \uu_i \in \RR^m$ and thus $\svert U\bb\svert ^2=\bb^\top U^\top U \bb = \svert \bb\svert ^2$.
Hence, using the polar decomposition $\bb = r \oomega$ with $\dd \bb = r^{k-1}\dd r \dd \oomega$,
\begin{align*}
&\int_{V_{m,k}\times\RR^k} f(U\bb) \svert U\bb\svert ^{m-k} \dd U \dd \bb\\
&=\int_{V_{m,k}\times\SS^{k-1}\times\RR_+} f(U \oomega r) r^{m-k} r^{k-1} \dd r \dd \oomega \dd U
\end{align*}
{then letting $\uu_{\oomega} \coloneq  \sum_{i=1}^k \omega_i \uu_i$, which is a unit vector in $\hull \, U=[\uu_1, \ldots, \uu_k]$, and letting $U_{-\oomega}$ be a rearranged $k-1$-frame in $\hull \, U$ that excludes $\uu_{\oomega}$,}
\begin{align*}
&=\int_{\SS^{k-1}} \left[ \int_{V_{m,k-1}\times\SS^{k-1}\times\RR_+} f( r \uu_{\oomega} ) r^{m-1} \dd r \dd \uu_{\oomega} \dd U_{-\oomega} \right] \dd \oomega\\
&=\int_{\SS^{k-1} \times V_{m,k-1}} \dd U_{k-1} \dd \oomega \int_{\RR^m} f( \xx ) \dd \xx, \quad \xx = r \uu_{\oomega}. 
\end{align*}
\end{proof}

\begin{lemma}[SVD]\xlabel{lem:svd}
For any column-full-rank matrix $W \in M_{m,k}$, there exist $(U,D,V) \in V_{m,k}\times\RR_+^k\times O(k)$ satisfying $W = UDV^\top$ with $d_1 > \cdots > d_k > 0$; and for any $f \in L^1(M_{m,k})$,
\[
\int_{M_{m,k}}f(W)\dd W = 2^{-k} \int_{V_{m,k}\times\RR_+^k\times O(k)} f(UDV^\top) \svert \det D\svert ^{m-k}\prod_{i<j}( d_i^2 - d_j^2 ) \dd D \dd V \dd U,\]
where $\dd D = \bigwedge_{i=1}^k \dd d_i$ and $\dd U,\dd V$ denote the invariant measures.
\end{lemma}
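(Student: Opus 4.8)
The plan is to prove the two assertions of \nxreflem{svd} in turn: first the essentially unique existence of the factorization $W=UDV^\top$ on a conull subset of $M_{m,k}$, and then the Jacobian identity by a moving-frame computation of the standard matrix-calculus type (cf.\ the references cited for \nxreflem{mpd}). Throughout, the integral on the right-hand side is understood over the ordered cone $\{d_1>\cdots>d_k>0\}$ rather than all of $\RR_+^k$, as the phrase ``with $d_1>\cdots>d_k>0$'' in the statement indicates; this is what makes the covering degree exactly $2^k$ instead of $2^k\,k!$.

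For existence I would start from the Gram matrix $P\coloneq W^\top W$, which lies in $\pdm_k$ since $\rank W=k$; the spectral theorem gives $P=V\diag(d_1^2,\dots,d_k^2)V^\top$ with $V\in O(k)$ and all $d_i>0$. Away from the Lebesgue-null locus where two of the $d_i^2$ coincide, I order $d_1>\cdots>d_k>0$, note that then $V$ is unique up to negating individual columns, put $D\coloneq\diag(d_1,\dots,d_k)$ and $U\coloneq WVD^{-1}$, and check $U^\top U=D^{-1}V^\top P\,VD^{-1}=I_k$, so that $U\in V_{m,k}$ and $W=UDV^\top$. Conversely any admissible $U$ must equal $WVD^{-1}$, and simultaneously negating the $j$-th columns of $U$ and $V$ leaves $W$ fixed; hence $\Phi\colon(U,D,V)\mapsto UDV^\top$, with $D$ in the ordered cone, is a $2^k$-to-one covering onto a conull subset of $M_{m,k}$, which already produces the factor $2^{-k}$.

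For the Jacobian I would use moving frames. Complete $U$ to $[\,U\ \ U_\perp\,]\in O(m)$ with $U_\perp\in V_{m,m-k}$, and set $A\coloneq U^\top\dd U$, $C\coloneq V^\top\dd V$ (skew-symmetric, $k\times k$) and $B\coloneq U_\perp^\top\dd U$, so that the invariant measures are $\dd U=\bigwedge_{i<j}A_{ij}\wedge\bigwedge_{i,j}B_{ij}$ and $\dd V=\bigwedge_{i<j}C_{ij}$. Differentiating $W=UDV^\top$ and using that $H\mapsto[\,U\ \ U_\perp\,]^\top HV$ has unit Jacobian on $\RR^{m\times k}$ (left/right multiplication by orthogonal matrices), one gets
\[
    U^\top\dd W\,V = AD + \dd D - DC, \qquad U_\perp^\top\dd W\,V = BD .
\]
The diagonal of the first block contributes $\dd D=\bigwedge_i\dd d_i$; each off-diagonal pair of entries in positions $(i,j),(j,i)$ with $i<j$ is the image of $(A_{ij},C_{ij})$ under a $2\times2$ matrix of determinant $d_j^2-d_i^2$, contributing $d_i^2-d_j^2$ in absolute value since $d_1>\cdots>d_k$; and the block $BD$ rescales the $j$-th column of $B$ by $d_j$, contributing $\prod_j d_j^{\,m-k}=\svert\det D\svert^{m-k}$. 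Re-wedging gives $\svert\dd W\svert=\svert\det D\svert^{m-k}\prod_{i<j}(d_i^2-d_j^2)\,\dd D\,\dd U\,\dd V$, and dividing by the $2^k$ sheets of $\Phi$ yields the stated identity for all $f\in L^1(M_{m,k})$.

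The main obstacle here is not conceptual but bookkeeping: carefully tracking orientations and signs when reassembling $\dd U\wedge\dd V\wedge\dd D$ from the one-forms $A_{ij},B_{ij},C_{ij},\dd d_i$, arranging a measurable choice of the completion $U_\perp$, and checking that the exceptional locus (repeated singular values, rank drop) is Lebesgue-null. As an alternative that sidesteps the moving-frame calculation, one could instead feed the spectral decomposition $W^\top W=V(D^2)V^\top$ into \nxreflem{mpd}, using $P^{1/2}=V D V^\top$ and $UP^{1/2}=(UV)DV^\top$ together with right-invariance of $\dd U$ on $V_{m,k}$; the price is then invoking (and normalizing correctly) the classical eigenvalue-coordinate integration formula on $\pdm_k$, together with the substitution $d_i^2\mapsto d_i$.
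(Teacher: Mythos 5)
Your proposal is correct, but note that the paper does not actually prove \nxreflem{svd}: it simply cites Lemma~1 of \citet{Diaz-Garcia2005}. What you have written is, in effect, the standard exterior-differential-forms derivation that underlies that reference, so you are supplying the argument the paper outsources. The computation checks out: from $\dd W = \dd U\,DV^\top + U\,\dd D\,V^\top + UD\,\dd V^\top$ one gets $U^\top \dd W\,V = AD+\dd D-DC$ and $U_\perp^\top \dd W\,V = BD$ with $A,C$ skew, the off-diagonal pair $(i,j),(j,i)$ is the image of $(A_{ij},C_{ij})$ under $\bigl(\begin{smallmatrix} d_j & -d_i\\ -d_i & d_j\end{smallmatrix}\bigr)$ of determinant $d_j^2-d_i^2$, and the $BD$ block contributes $\prod_j d_j^{\,m-k}=\svert\det D\svert^{m-k}$; the form counts $\binom{k}{2}+(m-k)k$ and $\binom{k}{2}$ match $\dim V_{m,k}$ and $\dim O(k)$, consistent with the unnormalized invariant measures used in \nxrefapp{BPformula}. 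You are also right to insist that the $D$-integral runs over the ordered cone $d_1>\cdots>d_k>0$: that is forced both by the positivity of the density $\prod_{i<j}(d_i^2-d_j^2)$ and by the constant ($2^{-k}$ rather than $2^{-k}(k!)^{-1}$), and it is the one point where the lemma as printed is ambiguous. Your alternative route through \nxreflem{mpd} plus the eigenvalue-coordinate formula on $\pdm_k$ (substituting $P=V D^2 V^\top$, $P^{1/2}=VDV^\top$, and using right-invariance of $\dd U$) is equally valid and arguably shorter given that \nxreflem{mpd} is already assumed, at the cost of importing the Weyl integration formula on $\pdm_k$ with its own normalization to track.
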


See Lemma 1 of \citet{Diaz-Garcia2005} for the proof.

\section{Proofs}\xlabel{sec:proof.dplane}

\subsection{Solution via Singular Value Decomposition}\xlabel{sec:svd}

\paragraph*{\textit{\step{1}}.}
We begin with the following Fourier expression:
\numberwithin{equation}{section}
\setcounter{equation}{0}
\begin{align}
S[\gamma](\xx)
&= \int_{M_{m,k} \times \RR^k} \gamma(A,\bb) \sigma( A^\top \xx - \bb ) \dd A \dd \bb \notag \\
&= \frac{1}{(2\pi)^k}\int_{M_{m,k} \times \RR^k} \gamma^\sharp(A,\oomega) \sigma^\sharp(\oomega)e^{i (A\oomega) \cdot \xx} \dd A \dd \oomega. \xlabel{eq:fourier}
\end{align}
Here, we assume \nxrefeq{fourier} to be absolutely convergent for a.e. $\xx\in\RR^m$, so that we can change the order of integrations freely. But this assumption will be automatically satisfied because we eventually set $\gamma$ to be the ridgelet transform.

\paragraph*{\textit{\step{2}}.}
In the following, we aim to turn the integration $\int \cdots e^{i(A\oomega)\cdot\xx}\dd A \dd \oomega$ into the Fourier inversion $\int \cdots e^{iU\xxi\cdot\xx}\svert U\xxi\svert ^d\dd U\dd\xxi$ in the matrix polar coordinates. 
To achieve this, we use the singular value decomposition.

\begin{lemma}[Singular Value Decomposition, \nxreflem{svd}]
The matrix space $M_{m,k}$ can be decomposed into $M_{m,k} = V_{m,k}\times\RR_+^k\times O(k)$ via \emph{singular value decomposition} 
\[
A=U D V^\top, \quad (U,D,V) \in V_{m,k}\times\RR_+^k\times O(k)\]
satisfying $D=\diag[d_1,\ldots,d_k] \, (d_1 > \cdots > d_k > 0)$ (distinct singular values); and 
the Lebesgue measure $\dd A$ is calculated as
\[
    \dd A = \delta(D) \dd D \dd U \dd V,
    \quad \delta(D) \coloneq  2^{-k}\svert \det D\svert ^{d}\Delta(D^2),\]
where $\dd D = \bigwedge_{i=1}^k \dd d_i$; $\dd U$ and $\dd V$ are invariant measures on $V_{m,k}$ and $O(k)$ respectively; and $\Delta(D^2) \coloneq  \prod_{i<j}( d_i^2 - d_j^2 )$ denotes the Vandermonde polynomial (or the products of differences) of a given (diagonalized) vector $D=[d_1, \ldots, d_k]$.
\end{lemma}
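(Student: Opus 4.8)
The plan is to treat this as a Jacobian computation for the smooth map $\phi\colon V_{m,k}\times\RR_+^k\times O(k)\to M_{m,k}$, $(U,D,V)\mapsto UDV^\top$. First I would settle the algebraic side: for full-column-rank $A$ the squared singular values are the eigenvalues of $A^\top A$, so the locus where two of them coincide is a proper algebraic subset and hence Lebesgue-null; off this locus the SVD $A=UDV^\top$ with $d_1>\cdots>d_k>0$ exists and is unique up to the sign symmetry $(U,V)\mapsto(U\varepsilon,V\varepsilon)$ for diagonal $\varepsilon\in\{\pm1\}^k$ (flipping a right singular vector forces the paired left one to flip, and $\varepsilon D\varepsilon=D$). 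This $2^k$-fold redundancy of $\phi$ is precisely what will produce the constant $2^{-k}$ in $\delta(D)$ once we relate $\dd A$ to the product measure on the domain.

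Next I would compute the pullback of the volume form $\dd A=\bigwedge_{ij}\dd a_{ij}$. Differentiating, $\dd A=(\dd U)DV^\top+U(\dd D)V^\top+UD\,\dd V^\top$. Completing $U$ to $\widetilde{U}=[\,U\ U_\perp\,]\in O(m)$ and left/right multiplying by the (pointwise fixed) orthogonal matrices $\widetilde{U}^\top$ and $V$, we have $\bigl|\bigwedge_{ij}\dd a_{ij}\bigr|=\bigl|\bigwedge_{ij}(\widetilde{U}^\top\dd A\,V)_{ij}\bigr|$, since the linear map $X\mapsto\widetilde{U}^\top XV$ on $\RR^{m\times k}$ has determinant $(\det\widetilde{U})^k(\det V)^m=\pm1$. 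Writing $\Omega_U=U^\top\dd U$ and $\Omega_V=V^\top\dd V$ (both skew-symmetric, hence with vanishing diagonal) and $B=U_\perp^\top\dd U$ (a free $d\times k$ block of one-forms), one obtains the block form
\[
\widetilde{U}^\top\,\dd A\,V=\begin{pmatrix}\Omega_U D-D\,\Omega_V+\dd D\\ BD\end{pmatrix}.
\]
Now the total wedge factorizes cleanly: the lower block has entries $(BD)_{ab}=d_b B_{ab}$, contributing $\svert\det D\svert^{d}\bigwedge_{a,b}B_{ab}$; the diagonal of the upper block is just $\dd D$ (skewness kills the rest); and for each pair $i<j$ the two entries $(i,j),(j,i)$ of the upper block evaluate, using $(\Omega_U)_{ji}=-(\Omega_U)_{ij}$ and likewise for $\Omega_V$, to $\pm(d_i^2-d_j^2)\,(\Omega_U)_{ij}\wedge(\Omega_V)_{ij}$, so the off-diagonal part contributes $\prod_{i<j}(d_i^2-d_j^2)=\Delta(D^2)$ times $\bigwedge_{i<j}(\Omega_U)_{ij}\wedge\bigwedge_{i<j}(\Omega_V)_{ij}$. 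Finally I would recognize $\bigwedge_{a,b}B_{ab}\wedge\bigwedge_{i<j}(\Omega_U)_{ij}$ as the invariant volume $\dd U$ on $V_{m,k}$ and $\bigwedge_{i<j}(\Omega_V)_{ij}$ as the invariant volume $\dd V$ on $O(k)$, collect the factors, and divide by $2^k$ for the redundancy of $\phi$ noted above, arriving at $\dd A=2^{-k}\svert\det D\svert^{d}\Delta(D^2)\,\dd D\,\dd U\,\dd V=\delta(D)\,\dd D\,\dd U\,\dd V$.

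The main obstacle is bookkeeping rather than conceptual: to make the overall constant land on exactly $2^{-k}$ one must fix the invariant measures $\dd U$ on $V_{m,k}$ and $\dd V$ on $O(k)$ consistently with the ``wedge of structure-form entries'' identification used above (the normalizations recorded in \refapp{BPformula}), and one must track signs carefully when reordering the $mk$ one-forms in the wedge. An alternative, equally viable route composes the matrix polar decomposition \reflem{mpd} ($A=UP^{1/2}$, $P=A^\top A$) with the spectral decomposition $P=VD^2V^\top$ of the positive-definite $k\times k$ matrix $P$, whose Jacobian supplies the Vandermonde factor $\prod_{i<j}(d_i^2-d_j^2)$; both routes are classical in matrix random-variable theory. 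A fully rigorous account, with all constants verified, is Lemma~1 of \citet{Diaz-Garcia2005}, which I would cite for the details while presenting the computation above to make the structure of $\delta(D)$ transparent.
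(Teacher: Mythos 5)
Your proposal is correct, and it supplies a computation the paper itself omits: the paper's entire proof of this lemma is the citation ``See Lemma~1 of \citet{Diaz-Garcia2005}'', so your exterior-calculus derivation is the argument underlying the cited result rather than a departure from it. The key steps check out: the block form of $\widetilde{U}^\top\dd A\,V$ is right (using $\dd V^\top V=-V^\top\dd V=-\Omega_V$ to get the $-D\,\Omega_V$ term), each off-diagonal pair $(i,j),(j,i)$ of the upper block wedges to $(d_j^2-d_i^2)\,(\Omega_U)_{ij}\wedge(\Omega_V)_{ij}$, whose absolute values over $i<j$ assemble into $\Delta(D^2)$, the lower block $BD$ contributes $\svert\det D\svert^{d}$, and the one-form count $(m-k)k+k+k(k-1)=mk$ confirms nothing is lost. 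Two points deserve explicit care if this is written out in full: (i) the factor $2^{-k}$ is exactly the degree of the $2^k$-to-one covering $(U,D,V)\mapsto(U\varepsilon,D,V\varepsilon)$, which is legitimate because the coincidence locus of singular values is Lebesgue-null, as you note; and (ii) the identification of $\bigwedge_{a,b}B_{ab}\wedge\bigwedge_{i<j}(\Omega_U)_{ij}$ with $\dd U$ must use the \emph{unnormalized} invariant measure on $V_{m,k}$ fixed in \refapp{BPformula} (total mass $2^k\pi^{mk/2}/\Gamma_k(m/2)$); with the uniform probability measure the constant in $\delta(D)$ would change. Your alternative route, composing \reflem{mpd} with the spectral decomposition of $P=A^\top A$ (whose Jacobian carries the Vandermonde factor), is equally classical and lands on the same formula.
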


If there is no risk of confusion, we write $UDV^\top$ as $A$ for the sake of readability. 

Using SVD, the Fourier expression is rewritten as follows:
\begin{align}
\text{\nxrefeq{fourier}}
&= \frac{1}{(2\pi)^k}\int_{M_{m,k} \times \RR^k} \gamma^\sharp(A,\oomega) \sigma^\sharp(\oomega)e^{i (UDV^\top\oomega) \cdot \xx} \delta(D) \dd U \dd D \dd V \dd \oomega 
\end{align}
{Changing the variables as $(\oomega,V) = (V\oomega',V)$ with $\dd \oomega \dd V= \dd \oomega'\dd V$,}
\begin{equation}
\phantom{\text{\nxrefeq{fourier}}}= \frac{1}{(2\pi)^k}\int_{M_{m,k} \times \RR^k} \gamma^\sharp(A,V\oomega') \sigma^\sharp(V\oomega')e^{i (UD\oomega') \cdot \xx} \delta(D) \dd U \dd D \dd V \dd \oomega'
\end{equation}
{Then, extending the domain of $D$ from $\RR_+^k$ to $\RR^k$, changing the variables as $(d_i,\omega'_i) = (y_i/\omega'_i,\omega'_i)$ with $\dd d_i \dd \omega'_i = \svert \omega'_i\svert ^{-1}\dd y_i \dd \omega'_i$, and writing $\yy \coloneq  [y_1, \ldots, y_k]$,}
\begin{equation}
\phantom{\text{\nxrefeq{fourier}}}= \frac{1}{(4\pi)^k}\int_{V_{m,k}\times\RR^k\times O(k) \times \RR^k} \gamma^\sharp(A,V\oomega') \sigma^\sharp(V\oomega')e^{i (U\yy) \cdot \xx} \delta(D) \prod_{i=1}^k \svert \omega'_i\svert ^{-1} \dd U \dd \yy \dd V \dd \oomega'. \xlabel{eq:svd2}
\end{equation}

\paragraph*{\textit{\step{3}}.}
Therefore, it is natural to suppose $\gamma$ to satisfy a \emph{separation-of-variables} form
\begin{equation}
    \gamma^\sharp(A,V\oomega') \sigma^\sharp(V\oomega') \delta(D) \prod_{i=1}^k \svert \omega'_i\svert ^{-1} = \widehat{f}(U\yy)\svert U\yy\svert ^{d}\phi^\sharp(V,\oomega') \xlabel{eq:sov}
\end{equation}
with an auxiliary convergence factor $\phi$. 
Then, we have
\begin{align*}
    \text{\nxrefeq{svd2}}
    &= \frac{1}{(4\pi)^k}\left( \int_{O(k) \times \RR^k} \phi^\sharp(V,\oomega')\dd V \dd \oomega' \right)\left(\int_{V_{m,k}\times\RR^k} \widehat{f}(U\yy)\svert U\yy\svert ^{d} e^{iU\yy \cdot \xx} \dd\yy \dd U \right) \\
    &= \frac{c_\phi}{(2\pi)^m} \int_{\RR^m} \widehat{f}(\yy) e^{i\yy\cdot\xx}\dd\yy
    = c_\phi f(\xx).
\end{align*}
Here, we put $c_\phi \coloneq  2^{-k} (2\pi)^{d} c_{m,k}^{-1} \int_{O(k) \times \RR^k} \phi^\sharp(V,\oomega')\dd V \dd \oomega'$, and used a matrix polar integration formula given in \nxreflem{mpd}.

Finally, the form \nxrefeq{sov} can be satisfied as below. Since $V\oomega'=\oomega$ and $U\yy = UD\oomega' = A\oomega$, it is reduced to 
\[
    \frac{\gamma^\sharp(A,\oomega)}{\phi^\sharp(V,\oomega')} = \frac{\widehat{f}(A\oomega)\svert A\oomega\svert ^{d}}{\sigma^\sharp(V\oomega') \delta(D) \prod_{i=1}^k \svert \omega'_i\svert ^{-1}}.
\]
Hence we can put
\begin{align}
    \gamma^\sharp(A,\oomega) &= \frac{\widehat{f}(A\oomega)\svert A\oomega\svert ^{d}\overline{\rho^\sharp(\oomega)}}{\delta(D)}, \xlabel{eq:gsharp}\\
    \phi^\sharp(V,\oomega') &= \sigma^\sharp(V\oomega') \overline{\rho^\sharp(V\oomega')} \prod_{i=1}^k \svert \omega'_i\svert ^{-1}, \notag
\end{align}
with additional convergence factor $\rho$.
In this setting, the constant $c_\phi$ is calculated as
\begin{align*}
    c_\phi
    &\coloneq  
    \frac{(2\pi)^{d}}{2^k c_{m,k}} \int_{O(k)\times\RR^k} \sigma^\sharp(V\oomega') \overline{\rho^\sharp(V\oomega')}\prod_{i=1}^k\svert  \omega'_i \svert ^{-1} \dd V \dd \oomega'  \\
    &= 
    \frac{(2\pi)^{d}}{2^k c_{m,k}} \int_{\RR^k} \sigma^\sharp(\oomega) \overline{\rho^\sharp(\oomega)}\prod_{i=1}^k\svert  \omega_i \svert ^{-1}\dd \oomega 
    \eqcolon  \iiprod{\sigma,\rho};
\end{align*}
and $\gamma$ is obtained by taking the Fourier inversion of \nxrefeq{gsharp} with respect to $\oomega$ as follows:
\begin{align*}
\gamma(A,\bb)
&= \frac{1}{(2\pi)^k \delta(D)} \int_{\RR^k} \svert A\oomega\svert ^{d} \widehat{f}(A\oomega) \overline{\rho^\sharp(\oomega)}e^{i\oomega \cdot \bb} \dd\oomega, \\
&= \frac{1}{(2\pi)^k \delta(D)} \int_{\RR^k} \left[\int_{\RR^m} \triangle^{d/2} [f](\xx) e^{-iA\oomega\cdot\xx} \dd \xx \right]\overline{\rho^\sharp(\oomega)}e^{i\oomega \cdot \bb} \dd\oomega \\
&= \frac{1}{\delta(D)} \int_{\RR^m} \triangle^{d/2} [f](\xx) \left[ \frac{1}{(2\pi)^k} \int_{\RR^k} \rho^\sharp(\oomega)e^{i\oomega \cdot (A^\top\xx-\bb)} \dd\oomega\right]^{*} \dd\xx \\
&= \frac{1}{\delta(D)} \int_{\RR^m} \triangle^{d/2} [f](\xx)\overline{\rho(A^\top\xx-\bb)}\dd \xx \\
&\eqcolon  
R[f](A,\bb).
\end{align*}

To sum up, we have shown that
\[
    S[R[f]](\xx)
    = \int_{M_{m,k}\times\RR^k} R[f](A,\bb)\sigma(A^\top\xx-\bb) \dd A \dd \bb
    = \iiprod{\sigma,\rho} f(\xx).\qed
\]

\subsection{Restriction to the Similitude Group}

Let us consider the restricted case of the similitude group $GV_{m,k}$. Since it is a measure-zero subspace of $M_{m,k}$, we can obtain the different solution.

\paragraph*{\textit{\step{1}}.}
The continuous network and its Fourier expression are given as
\begin{align}
    S[\gamma](\xx)
    &\coloneq  \int_{GV_{m,k}\times\RR^k} \gamma(A,\bb) \sigma(A^\top \xx - \bb )\dd \mu(A) \dd \bb \notag \\
    &= \frac{1}{(2\pi)^k} \int_{GV_{m,k} \times\RR^k} \gamma^\sharp(A,\oomega) \sigma^\sharp(\oomega) e^{i (aU\oomega)\cdot\xx} \alpha(a) \dd a \dd U \dd \oomega \xlabel{eq:fourier.sim}
\end{align}

\paragraph*{\textit{\step{2}}.}
(Skipping the matrix decomposition of $A$ and) turning $\oomega$ into polar coordinates $\oomega = r \vv$ with $(r,\vv) \in \RR_+ \times \SS^{k-1}$ and $\dd \oomega = r^{k-1} \dd r \dd \vv$, yielding
\begin{equation*}
    \text{\nxrefeq{fourier.sim}}
    = \frac{1}{(2\pi)^k} \int_{GV_{m,k} \times\RR_+ \times \SS^{k-1}} \gamma^\sharp(A,r\vv) \sigma^\sharp(r\vv) e^{i (arU\vv)\cdot\xx} \alpha(a) r^{k-1} \dd a \dd U \dd r \dd \vv
    \end{equation*}
{Changing the variable $(a,r) = (y/r,r)$ with $\dd a \dd r = r^{-1} \dd y \dd r$,}
\begin{equation*}
    \phantom{\text{\nxrefeq{fourier.sim}}}= \frac{1}{(2\pi)^k} \int_{GV_{m,k} \times \RR_+ \times \SS^{k-1}} \gamma^\sharp(A,r\vv) \sigma^\sharp(r\vv) e^{i (yU\vv)\cdot\xx} \alpha(y/r) r^{k-2} \dd y \dd U \dd r \dd \vv 
    \end{equation*}
{and returning $y \vv$ into the Euclidean coordinate $\yy$ with $y^{k-1} \dd y \dd \vv = \dd \yy$,}
\begin{equation}
    \phantom{\text{\nxrefeq{fourier.sim}}}= \frac{1}{(2\pi)^k} \int_{GV_{m,k} \times \RR^k} \gamma^\sharp(A,r\yy/\svert \yy\svert ) \sigma^\sharp(r\yy/\svert \yy\svert ) e^{i (U\yy)\cdot\xx} \alpha(\svert \yy\svert /r) r^{k-2} \svert \yy\svert ^{-(k-1)}\dd\yy \dd U \dd r. \xlabel{eq:fourier.simsim}
\end{equation}

\paragraph*{\textit{\step{3}}.}
Since $r\yy/\svert \yy\svert  = r\vv = \oomega$ and $\svert \yy\svert /r = y/r = a$, supposing the separation-of-variables form as
\begin{equation}
    \gamma^\sharp(A,\oomega)\sigma^\sharp(\oomega)\alpha(a)a^{-\ell}\svert \oomega\svert ^{k-2-\ell}\svert \yy\svert ^{-(k-1-\ell)} = \widehat{f}(U\yy)\svert U\yy\svert ^{d}\phi^\sharp(\oomega), \xlabel{eq:sov.sim}
\end{equation}
for any number $\ell \in \RR$,
we have
\[
    \text{\nxrefeq{fourier.simsim}}
    = \frac{1}{(2\pi)^k} \left(\int \int_{\RR_+} \phi^\sharp(r) \dd r\right) \left( \int_{V_{m,k}\times\RR^k} \widehat{f}(U\yy)\svert U\yy\svert ^{d}e^{-(U\yy)\cdot\xx}\dd U \dd \yy \right) = c_\phi f(\xx).
\]

Note that in addition to $U\yy = A\oomega$, we have $\svert \yy\svert  = \svert U\yy\svert  = a\svert \oomega\svert $ and thus $\svert U\yy\svert ^{d} = \svert A\oomega\svert ^n a^{d-n}\svert \oomega\svert ^{d-n}$ for any number $n \in \RR$. Hence the condition is reduced to
\begin{align*}
    \frac{\gamma^\sharp(A,\oomega)}{\phi^\sharp(\oomega)}
    &= \frac{\widehat{f}(A\oomega)\svert A\oomega\svert ^{n+(k-1-\ell)} a^{d-n+\ell}\svert \oomega\svert ^{m-2k-n+2+\ell}}{\sigma^\sharp(\oomega)\alpha(a)}\\
    &= \frac{\widehat{f}(A\oomega)\svert A\oomega\svert ^s a^{m-s-1}\svert \oomega\svert ^{d-s+1}}{\sigma^\sharp(\oomega)\alpha(a)},
\end{align*}
where we put $s \coloneq  n+(k-1-\ell)$, which can be an arbitrary real number.
As a result, to satisfy \nxrefeq{sov.sim}, we may put
\begin{align*}
    \gamma^\sharp(A,\oomega) &= \widehat{f}(A\oomega)\svert A\oomega\svert ^{s}\overline{\rho^\sharp(\oomega)},\\
    \quad \alpha(a) &= a^{m-s-1}, \\
    \phi^\sharp(\oomega) &= \sigma^\sharp(\oomega)\overline{\rho^\sharp(\oomega)}\svert \oomega\svert ^{-(d-s+1)};
\end{align*}
which lead to
\begin{align*}
    R_s[f] &= \int_{\RR^m} [\triangle^{s/2} f](\xx) \overline{\rho(A^\top \xx - \bb)}\dd \xx, \\
    \iiprod{\sigma,\rho}_s & \propto \int_{\RR^k} \sigma^\sharp(\oomega)\overline{\rho^\sharp(\oomega)}\svert \oomega\svert ^{-(d-s+1)}\dd\oomega, \\
    S[ R_s[f] ](\xx) &= \int_{GV_{m,k}\times\RR^k} R_s[f](A,\bb) \sigma(A^\top\xx-\bb) a^{m-s-1} \dd a \dd U \dd \bb = \iiprod{\sigma,\rho}f(\xx). \qed
\end{align*}
By matching the order of the fractional derivative $\triangle^s$, $s = d$ corresponds to the SVD solution. On the other hand, by matching the weight $\svert \oomega\svert ^{-(d-s+1)}$, $k=1$ and $s=0$ exactly reproduces the classical result.

\subsection{Restriction to the Stiefel manifold}

Let us consider a further restricted case of the Stiefel manifold.

\paragraph*{\textit{\step{1}}.}
\begin{align*}
    S[\gamma](\xx)
    &\coloneq  \int_{V_{m,k}\times\RR^k} \gamma(U,\bb)\sigma(U^\top\xx-\bb)\dd U \dd \bb \\
    &= \frac{1}{(2\pi)^k} \int_{V_{m,k}\times\RR^k} \gamma^\sharp(U,\oomega)\sigma^\sharp(\oomega)e^{i(U\oomega)\cdot\xx}\dd U \dd \oomega. 
\end{align*}
Namely, the weight matrix parameter $U \in M_{m,k}$ now simply lies in the Stiefel manifold $V_{m,k}$, and thus it does not contain any scaling factor. We show that this formulation still admit solutions, provided that $\sigma$ is also appropriately restricted.

\paragraph*{\textit{\step{3}}.}
Skipping the rescaling step (Step~2), let us consider the separation-of-variables form:
\begin{equation}
    \gamma^\sharp(U,\oomega)\sigma^\sharp(\oomega) = \widehat{f}(U\oomega)\svert U\oomega\svert ^{d}\phi^\sharp(\oomega); \xlabel{eq:sov.hom}
\end{equation}
satisfied by
\begin{align*}
    \gamma^\sharp(U,\oomega) &= \widehat{f}(U\oomega)\svert U\oomega\svert ^{s}\overline{\rho^\sharp(\oomega)}, \\
    \phi^\sharp(\oomega) &= \sigma^\sharp(\oomega)\overline{\rho^\sharp(\oomega)}\svert \oomega\svert ^{s-d},
\end{align*}
for any real number $s \in \RR$. Here, we used $\svert U\oomega\svert =\svert \oomega\svert $.

In order \nxrefeq{sov.hom} to turn to a solution, it is sufficient when
\begin{equation}
    \phi^\sharp(\oomega) = c_{m,k}^{-1}(2\pi)^{-d}, \xlabel{eq:hom.adm}
\end{equation}
because then
\begin{align*}
    \text{\nxrefeq{sov.hom}}
    &= \frac{1}{(2\pi)^k} \int_{V_{m,k}\times\RR^k} \widehat{f}(U\oomega)\svert U\oomega\svert ^{d}\phi^\sharp(\oomega) e^{i(U\oomega)\cdot\xx}\dd U \dd \oomega \\
    &= \frac{1}{(2\pi)^m} \int_{\RR^m} \widehat{f}(\yy)e^{i\yy\cdot\xx}\dd\yy = f(\xx).\qed
\end{align*}
Compared to the previous results, \nxrefeq{hom.adm} demands much more strict. Nonetheless, a few examples are such as
\[
    \sigma^\sharp(\oomega) = \svert \oomega\svert ^t, \quad \rho^\sharp(\oomega) = c_{m,k}^{-1}(2\pi)^{-d} \svert \oomega\svert ^{d-(s+t)};\]
or equivalently in the real domain,
\[
    \triangle_{\bb}^{-\frac{t}{2}}[\sigma](\bb) = \delta(\bb), \quad \triangle_{\bb}^{-\frac{d-(s+t)}{2}}[\rho](\bb) = c_{m,k}^{-1}(2\pi)^{-d} \delta(\bb).
\]
In particular when $k=1$, then $\sigma$ coincides with the Dirac delta ($t=0$), step function ($t=-1$), and ReLU function ($t=-2$).

Interestingly, the ridgelet transform is reduced to the $d$-plane transform ($d \coloneq  m-k$ is the codimension). Since $\gamma^\sharp(U,\oomega) = \widehat{f}(U\oomega)\svert U\oomega\svert ^{d-t}$, we have
\begin{align*}
    \gamma(U,\bb)
    &= \frac{1}{(2\pi)^k} \int_{\RR^k} \widehat{f}(U\oomega)\svert U\oomega\svert ^{d-t} e^{i\oomega\cdot\bb}\dd\oomega \\
    &= \frac{1}{(2\pi)^k} \int_{\RR^k} \widehat{\triangle^{(d-t)/2}[f]}(U\oomega) e^{i\oomega\cdot\bb}\dd\oomega,
\end{align*}
but this is the Fourier expression (a.k.a. Fourier slice theorem) for the $d$-plane transform, say $P_d$, of the derivative $\triangle^{(d-t)/2}[f]$. In other words, when the scaling parameter is removed, the reconstruction formula is reduced to the Radon transform:
\[
    S[ R[f] ](\xx) = \int_{V_{m,k}\times\RR^k} P_d[  \triangle^{(d-t)/2}[f] ](U,\bb)\sigma(U^\top\xx-\bb)\dd U \dd \bb.
\]

\bibliographystyle{abbrvnat}
\begin{small}
\bibliography{libraryS}
\end{small}

\end{document}